\def\eqref#1{equation~\ref{#1}}
\def\1{\bm{1}}
\DeclareMathAlphabet{\mathsfit}{\encodingdefault}{\sfdefault}{m}{sl}
\SetMathAlphabet{\mathsfit}{bold}{\encodingdefault}{\sfdefault}{bx}{n}
\newcommand{\pdatacot}{p_{\rm{CoT}}}
\newcommand{\pdatanocot}{p_{\rm{NoCoT}}}
\newcommand{\R}{\mathbb{R}}
\newcommand{\softmax}{\mathrm{softmax}}
\newcommand{\bsequence}{\textbf{b}}
\newcommand{\transformer}{\mathcal{T}}
\newcommand{\embed}{\mathbf{E}}
\newcommand{\attention}{\mathcal{A}}
\newcommand{\FFN}{\mathrm{FFN}}
\newcommand{\qk}{A}
\newcommand{\weight}{W}
\newcommand{\head}{h}
\newcommand{\relu}{\mathrm{ReLU}}
\newcommand{\hiddenstate}{\mathrm{X}}
\newcommand{\loss}{\mathcal{L}}
\newcommand{\sgn}{\mathrm{sgn}}
\newcommand{\event}{\mathcal{E}}
\newtheorem{definition}{Definition}
\newtheorem{theorem}{Theorem}
\newtheorem{assumption}{Assumption}
\newtheorem{lemma}{Lemma}
\title{From Sparse Dependence to Sparse Attention: Unveiling \\How Chain-of-Thought Enhances Transformer Sample Efficiency}
\author{
  Kaiyue Wen\thanks{These authors contributed equally.}\ \thanks{A large part of this work was done while Kaiyue was at Tsinghua University.} \\
  Stanford University \\
  \texttt{kaiyuew@stanford.edu}
  \and
  Huaqing Zhang\textsuperscript{*} \\
  IIIS, Tsinghua University \\
  \texttt{zhanghq22@mails.tsinghua.edu.cn}
  \and
  Hongzhou Lin \thanks{This work is independent of and outside of the work at Amazon.} \\
  Amazon \\
  \\\texttt{hongzhou.lin89@gmail.com}
  \and
  Jingzhao Zhang \\
  IIIS, Tsinghua University \\
  Shanghai AI Lab \\
  Shanghai Qizhi Institute \\
  \texttt{jingzhaoz@mail.tsinghua.edu.cn}
}
\date{}  
\begin{document}
\maketitle
\begin{abstract}

Chain-of-thought (CoT)  significantly enhances the reasoning performance of large language models (LLM). While current theoretical studies often attribute this improvement to increased expressiveness and computational capacity, we argue that expressiveness is not the primary limitation in the LLM regime, as current large models will fail on simple tasks. Using a parity-learning setup, we demonstrate that CoT can substantially improve sample efficiency even when the representation power is sufficient. Specifically, with CoT, a transformer can learn the function within polynomial samples, whereas without CoT, the required sample size is exponential. Additionally, we show that CoT simplifies the learning process by introducing sparse sequential dependencies among input tokens, and leads to a sparse and interpretable attention. We validate our theoretical analysis with both synthetic and real-world experiments, confirming that sparsity in attention layers is a key factor of the improvement induced by CoT.
\footnote{Our code is available at \url{https://github.com/zhqwqwq/Learning-Parity-with-CoT}.}

\end{abstract}

\section{Introduction}
Chain-of-thought (CoT) has proven to be a powerful technique for enhancing reasoning in large language models \cite{wei2022chain, kojima2022large}.  By instructing the model to break complex problems into smaller, manageable steps, CoT facilitates more efficient reasoning and better generalization, particularly in algorithmic and logical tasks \cite{nye2022show, lewkowycz2022solving, wang2023selfconsistency}. 
Building on this, performance can be further improved through multi-step prompting and multi-path sampling techniques \cite{JMLR:v24:22-1144, wang2022rationale, zhou2023leasttomost, zhang2023automatic, fu2023complexitybased}.

This focus on CoT within in-context learning has since expanded to more structured learning approaches \cite{yao2024tree, besta2024graph}. By adding reasoning examples of CoT style to the instruction-tuning dataset, models enhance their problem-solving abilities more effectively than relying solely on CoT during prompting \cite{zelikman2022star, JMLR:v25:23-0870}. As a result, CoT is now shaping a new paradigm in language model development, marking a shift from simply scaling data \cite{kaplan2020scaling, hoffmann2022an} to focusing on advanced reasoning strategies \cite{lightman2024lets}, which leads to more effective learning outcomes.



While CoT’s success is well-established, understanding why it works is still a hotly debated topic \cite{saparov2023language, prystawski2024think}. Recent theoretical studies suggest that CoT enhances a model’s expressiveness, increasing its representational capacity when the sequence is long enough \cite{feng2023towards, li2024chain}. However, expressivity alone does not guarantee success. Large language models often struggle with simple tasks—like counting the number of 'r's in "strawberry"—when not using CoT. Given the increasing model sizes, it seems unlikely that such tasks are inherently inexpressible. This discrepancy calls for a deeper study of generalization, hinting the true power of CoT may lie beyond the expressiveness.


In this paper, we study the benefit of CoT from the sample efficiency perspective.  We provide concrete examples where expressiveness is not the limiting factor; that is, the function can be expressed both with and without CoT by the transformers. 
We demonstrate, both in theory and in practice, that without CoT, the learning requires exponentially more samples comparing to with CoT. 
{Further, we show that CoT sequences introduce sparse sequential dependence,  thereby enhancing the sparsity in the attention layers. We then show that transformers can efficiently optimize and generalize on such sequences}. We summarize our contributions as follows.

\begin{enumerate}[leftmargin=*]
\item 
Theoretically, we show that while the parity problem can be expressed by a 1-layer transformer, learning requires exponentially many samples without CoT when the number of parameters is limited (Theorem~\ref{thm:rep} and ~\ref{thm:nocot}). Meanwhile, for CoT data with sparse sequential dependence, a 1-layer transformer can learn the parity function and faithfully represent the sparse dependence in its attention pattern with almost linear samples with respect to sequence length(Theorem~\ref{thm:main}).
\item Empirically, we verify our analysis that training on parity function with CoT data requires only polynomial samples and will induce sparse and interpretable attention (\Cref{fig: sample complexity with CoT and attention}). We further show that evaluating and training on CoT data will also induce sparser attention on real-world dataset GSM-8k on pretrained language models.
\end{enumerate}

\section{Motivating Examples: Learning Parity Functions}

\begin{figure}[t]
    \centering
    \includegraphics[width=\linewidth]{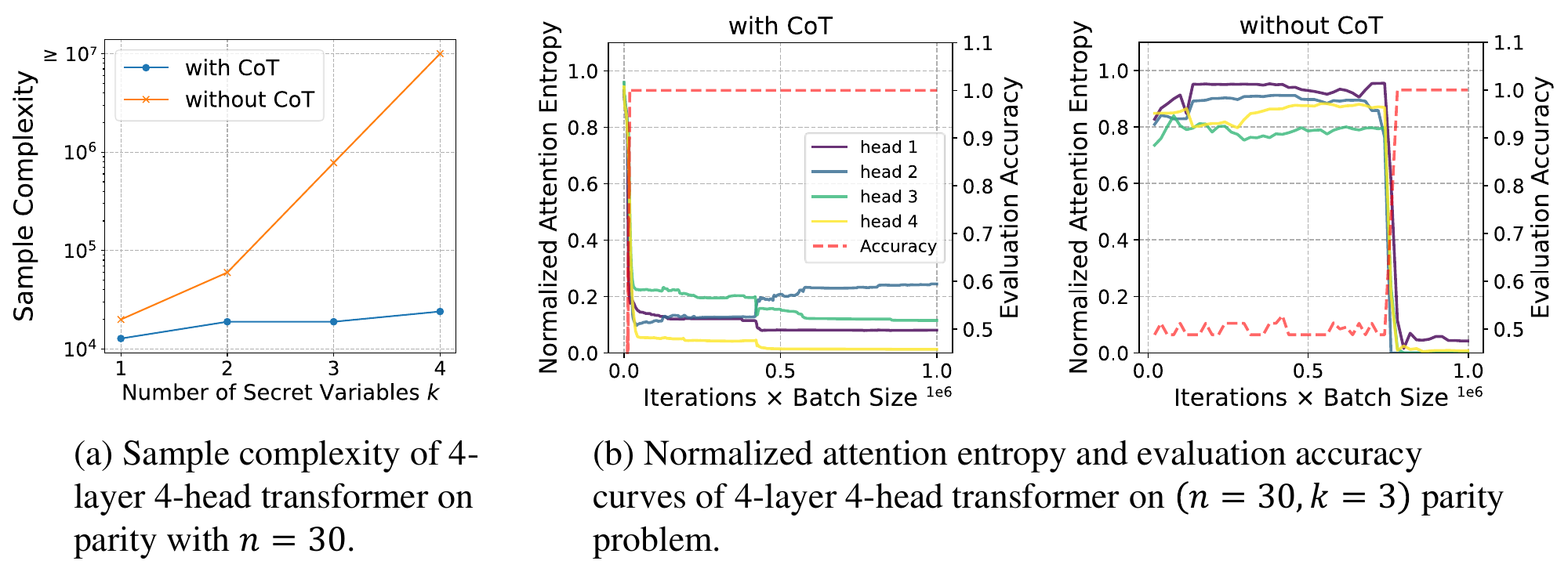}
    \caption{(a) We show that, without Chain-of-Thought (CoT), the sample complexity for training transformers to learn the parity function grows exponentially with the hardness parameter $k$. In contrast, utilizing CoT significantly improves sample efficiency.  (b) We also show that the sparsity of attention layers, measured by normalized entropy (\ref{eq: normalized attention entropy}), is crucial in the parity learning experiment. In both CoT and non-CoT scenarios, as the attention layers become sparser—indicated by a rapid decrease in normalized entropy—a corresponding jump in evaluation accuracy occurs. }
    \label{fig: motivating example}
\end{figure}


We start with empirically exploring the sample efficiency of training transformers to learn the class of parity functions \citep{blum2003} with and without CoT. The parity functions exhibits simple structure but yet hard to learn by traditional networks \cite{abbe2018provable, malach2022hardness}. Specifically, given a set of \(n\) binary variables \(b_1, b_2, \dots, b_n\), a parity function takes $k$ secret variables \( b_{i_1}, \dots b_{i_k} \) and outputs~$1$ if the sum of these $k$ variables is odd, and $0$ if it is even:
\[ 
f(b_1, ... , b_n) = b_{i_1} \oplus b_{i_2} \oplus \cdots \oplus b_{i_k},      
\]
where $b_i \in \{0,1\}$ and $\oplus$ is the XOR operator.
For example, $f(b_1, b_2, b_3, b_4, b_5) = b_1 \oplus b_{2} \oplus b_{4} $ is a $5$-variable parity function with $k=3$. The function \(f\) returns 1 if the sum of \(b_1\), \(b_{2}\) and \(b_{4}\) is odd, and $0$ otherwise, independently of the value of $b_3$ and $b_5$. Intuitively, the parameter $k$ controls the hardness of the problem. When \(k\) increases, the number of possible subsets grows exponentially, making the identification of the correct secret set more challenging. 

Given a $n$-variable parity function~$f$, we generate sequences of length $n+2$ as auto-regressive manner: $b_1, \cdots b_{n}, b_{n+1}:=0$ and $b_{n+2}=f(b_{[1:n]})$. To incorporate CoT, we break the sum of XOR down into $k$ steps and add all the intermediate steps into the sequence, i.e. including  $b_{i_1}$, $b_{i_1} \oplus b_{i_2}$, $\cdots$, $b_{i_1} \oplus b_{i_2} \oplus \cdots \oplus b_{i_k}$. As a result, the CoT data has length $n+k+1$. 
With the example function $f(b_1, b_2, b_3, b_4, b_5) = b_1 \oplus b_{2} \oplus b_{4} $, one sampled sequence would be  
\begin{align*}
\text{No CoT } \quad & \underbrace{0, 1, 0, 1, 0}_{\mathrm{input}}, \underbrace{0}_{\mathrm{[EOS]}}, \underbrace{0}_{\mathrm{answer}} \in \{0,1\}^7, \text{ as } b_1 \oplus b_{2} \oplus b_{4} = 0 \oplus 1 \oplus 1 = 0. \\
\text{With CoT } \quad & \underbrace{0, 1, 0, 1, 0}_{\mathrm{input}}, \underbrace{0}_{\mathrm{[EOS]}} \underbrace{0, 1}_{\mathrm{CoT}}, \underbrace{0}_{\mathrm{answer}} \in \{0,1\}^{9}, \text{as } b_1 = 0, b_1 \oplus b_{2} = 1. 
\end{align*}



Next, we train transformer networks with and without CoT respectively, with a common held-out test set, and compare their sample complexities. The sample complexity is defined as the amount of data a model sees at the first time it achieves perfect validation accuracy. More precisely, we halt the training process when the model reaches 100\% validation accuracy and record the total number of training examples used up to this point as the empirical sample complexity.


In Figure~\ref{fig: motivating example}, we compare the empirical results of learning parity function with and without CoT. The results show that the sample complexity without CoT (orange curve) grows exponentially, while with CoT (blue curve), it increases linearly. In other words, incorporating CoT allows us to learn the task with exponentially fewer samples. Additionally, it suggests that sparsity in the attention layers plays a crucial role in the learning process. To explore why CoT enables transformers to learn more efficiently, we formally define our problem setup below.


\section{Theoretical Analysis} \label{sec: theory}
In this section, we provide a formal analysis of the training dynamics of Transformers on the parity problem, both with and without Chain of Thought (CoT). We select the parity problem as our testbed because identifying a set of key variables amidst various confounding ones is a fundamental aspect of many reasoning tasks, and parity serves as an abstraction of this process.

\subsection{Notations and Definitions}


\textbf{Parity Problem.} Each token in our setting is either $0$ or $1$. We represent a sequence of length $T$ as $\bsequence[1], \ldots, \bsequence[T]$. A parity function is a function of form $\mathrm{parity}_S (\bsequence) = \oplus_{j \in \mathcal{S}} \bsequence[j]$, where $\mathcal{S}$ is the set of secret indices that is fixed during training and testing, and $\oplus$ is the XOR operator on binary variables. The cardinal of the secret set $k = |S|$  controls the hardness of the problem.


\begin{definition}[Parity Problem $(n,k)$ without CoT]
\label{def:nocot}
Given a secret set $\mathcal{S}$ of cardinal $k$, we define the parity problem without CoT as the following distribution of sequences $\pdatanocot$:
    \begin{align*} 
        \bsequence[i] \sim U(\{0,1\}), \forall i \in [1,n],
        \bsequence[n + 1] = 0, 
        \bsequence[n + 2] = \mathrm{parity}_S (\bsequence).
    \end{align*}
where $\bsequence[1], ..., \bsequence[n]$ are uniformly sampled from $0$ and $1$.  
\end{definition}

\begin{definition}[Parity Problem $(n,k)$  with CoT]
\label{def:cot} Given a secret set $\mathcal{S}$ of cardinal $k$, 
    we define the parity problem with CoT as the following distribution of augmented sequences $\pdatacot$:
    \begin{align*} 
        \bsequence[i] \sim U(\{0,1\}), \forall i \in [1,n],
        \bsequence[n + 1] = 0, 
        \bsequence[i + 1] = \bsequence[i] \oplus \bsequence[S[i]], \forall i \in \{ n + 1 , \ldots,  n + k \}.
    \end{align*}
    where $S$ is any permutation of $\mathcal{S}$, i.e. $\{S[i] \mid i = n + 1 , \ldots, n + k \} = \mathcal{S}$. 
\end{definition}
When CoT is provided, the data includes a step-by-step computation of the desired parity function, adding one variable at a time. Note that CoT is not unique for a given secret set since we can arbitrarily permute $\mathcal{S}$, given the commutative property of the XOR operator.

\textbf{Transformer Architecture.} To conduct the theoretical analysis, we simplify the Transformer architecture similar to prior works (see e.g. \cite{wang2024transformersprovablylearnsparse,nichani2024transformerslearncausalstructure,pmlr-v202-li23p}). More precisely, we drop all the layer norms or batch normalization; simplify the positional embedding; use a square matrix to represent the attention layer and concatenate the residual branch in a Densenet fashion~\cite{huang2017densely}. After simplification, the network still has strong expressive power like the standard Transformers.


In a standard Transformer, an embedding layer matches each vocabulary token into a dense vector of size $d$ and then adds a positional embedding. In our case, we only have boolean tokens and the sequence has constant length~$T$. Hence, we simply freeze $2T$ unit vectors as the embedding vectors: 
\begin{definition}[Embedding Module]
\label{def:embed}
For any position $i \in [T]$, we sample two embedding vectors $e_{i,0}$ and $e_{i,1}$ uniformly at random from unit hypercube $\mathrm{U}\left(\{\frac{1}{\sqrt{d}}, -\frac{1}{\sqrt{d}}\}^d \right)$. These embedding vectors are frozen during training.
Then for any binary sequence $\bsequence \in \{0, 1\}^T$, the embedding is   defined as 
    \begin{align*}
        \embed \left( \bsequence \right) =[e_{(i, \bsequence[i])}]_{i \in [T]} \in \R^{d \times T}.
    \end{align*}
Due to the properties of the hypercube, with high probability, the embedding vectors are near-orthogonal with each other.
\end{definition}


Next, we define the attention layer. In standard architecture, the attention layer is derived from the product of query and key matrices, i.e. $QK^T = \hiddenstate^T W^Q (W^K)^T \hiddenstate$. In our case, we directly use a full matrix $\hiddenstate^T A \hiddenstate$ to parameterize it, which has the same representation power. 
\begin{definition}[Attention Module]
\label{def:attn}
Given an input matrix $\hiddenstate \in \R^{d \times T}$ and attention weight $A$, we define the attention module as 
    \begin{align*}
        &\attention(\hiddenstate) = \hiddenstate \  \softmax(C + \hiddenstate^T\qk \hiddenstate),
    \end{align*}
    where $C \in \R^{T \times T}$ is the autoregressive mask with value $-\infty$ on the lower triangular matrix.     
\end{definition}



The attention module is then followed by a fully connected layer with ReLU activation:
\begin{definition}
\label{def:ffn}  
 We define the FFN function with width $2m$ over a matrix $\hiddenstate \in \R^{2d \times T}$ as 
 \begin{align*}
     \FFN(\hiddenstate) = \head^T \relu( \weight X).
 \end{align*}
Here $\weight \in \R^{2m \times 2d}, \head \in \R^{2m \times o}$ with $o$ being the output dimension and $\relu$ is element-wise. 
\end{definition}

Finally, we concatenate the modules into a simplified Transformer block.
\begin{definition}[Simplified Transformer Block]
\label{def:trans}
We define a simplified Transformer block as,
\begin{align*}
    \transformer(\bsequence) = \FFN\left(\begin{bmatrix} 
    \embed\left( \bsequence \right) \\
    \attention\left(\embed\left( \bsequence \right) \right) \end{bmatrix} \right),
\end{align*}
where $\embed$ is the embedding module, $\attention$ is the attention module and $\FFN$ the fully connected layer. An $L$-layer Transformer consists of a composition of $L$ such blocks, with the embedding module $\embed$ appearing only in the initial layer. The intermediate dimension is set to $d$, while the output dimension of the final layer is 1. 
\end{definition}

Compared with the standard residual structure, we use a Densenet structure to concatenate the residual branch with identity branch. Again, this transformation does not affect the representation power of the network, which is the standard practice in previous theoretical analysis (see e.g. \citet{wang2024transformersprovablylearnsparse,nichani2024transformerslearncausalstructure}).

\textbf{Loss function.} To simplify our analysis, we use hinge loss $\ell(\hat y, y) = \max\{ (-1)^y \hat y + 1, 0\}$ as the loss function. We define the next token prediction loss of a boolean sequence $\bsequence$ as  $ L(w) =   \ell( \transformer^{(L)}(\bsequence)[n+1], \bsequence[n + 2])$ and
$ L(w) =  \sum_{i = n + 1}^{n + k} \ell( \transformer^{(L)}(\bsequence)[i], \bsequence[i + 1])$ for with and without CoT setup respectively,
where $w$ denotes all the trainable parameters.

\subsection{Exponential Sample Complexity without CoT}

We now present analysis in the no CoT setup. First, we show that the parity problem is easy to represent with the simplified Transformer architecture:


\begin{theorem}[Easy to Represent]
\label{thm:rep}
     Consider the Transformer model defined in~\Cref{def:trans}, for any $\delta<0.1$ and large enough $n$, when the number of secret indices $k$ is in $[n / \log^5 (n / \delta), n / \log^4(n/\delta)]$, with probability at least $1 - \delta$ over the randomness of embedding $e$, there exists a weight configuration of the Transformer with dimension $d = \Theta(k\log (n/\delta))$ and width $2m =O(k)$ with $\Theta(\log n)$ precision of the weights and activations, such that $d^2+dm=o(nk/\log n)$, and it achieves perfect accuracy on the parity problem $(n,k)$ without CoT, i.e.
    \begin{align*}
        \forall \bsequence \sim \pdatanocot, \sgn\left(\transformer(\bsequence)[n + 1] \right) = (-1)^{\bsequence[n + 2]+1}.
    \end{align*}
\end{theorem}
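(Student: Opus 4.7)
The plan is a fully constructive proof: I exhibit an explicit attention matrix $A$ and FFN weights $(W,h)$ and verify that on every input sequence the sign of $\transformer(\bsequence)[n+1]$ equals $(-1)^{\bsequence[n+2]+1}$, conditional on a high-probability event over the random embedding. The construction mirrors the three modules of \Cref{def:trans} (embedding, attention, FFN) and the quantitative work is concentrated in the last two.

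First I handle the embedding geometry. By Hoeffding's inequality applied to inner products of independent $\pm 1/\sqrt{d}$ vectors, together with a union bound over all $O(T^2)$ embedding pairs, with probability at least $1-\delta$ every pair $(i,b)\neq(i',b')$ satisfies $\|e_{i,b}\|^2=1$ and $|\langle e_{i,b}, e_{i',b'}\rangle|\le C\sqrt{\log(n/\delta)/d}$; under the hypothesis $d=\Theta(k\log(n/\delta))$ this off-diagonal error is $O(1/\sqrt{k})$, so the $2T$ embeddings behave as a near-orthonormal frame. I condition on this event for the rest of the proof.

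Next I construct the attention module so that position $n+1$ attends to $\mathcal{S}$. For every $j\in\mathcal{S}$ I solve the $2\times 2$ system in $\operatorname{span}(e_{j,0}, e_{j,1})$ to obtain $w_j=(e_{j,0}+e_{j,1})/(1+\langle e_{j,0}, e_{j,1}\rangle)$, which satisfies $\langle w_j, e_{j,0}\rangle=\langle w_j, e_{j,1}\rangle = 1$ and thus acts as a \emph{token-independent} position detector. Setting $A = M\bigl(\sum_{j\in\mathcal{S}} w_j\bigr) e_{n+1,0}^{\top}$ with temperature $M=\Theta(\log n)$, the attention logit at query $n+1$ equals $M+$ lower-order noise for $j\in\mathcal{S}$ and only lower-order for $j\notin\mathcal{S}$, so the softmax concentrates almost all its mass on the secret positions, giving an attention column $v\approx\frac{1}{k}\sum_{j\in\mathcal{S}} e_{j,\bsequence[j]}$. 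The FFN then decodes the parity in two conceptual sub-steps: I decompose $e_{j,b}=\mu_j+(2b-1)u_j/2$ with $u_j=e_{j,1}-e_{j,0}$, and pick $p=U(U^{\top} U)^{-1}\vone$, where $U$ has columns $\{u_j\}_{j\in\mathcal{S}}$, so that $\langle p, u_j\rangle=1$ uniformly in $j$; then $\sigma:=\langle p, v\rangle-(\text{const})$ is an affine function of $s:=\sum_{j\in\mathcal{S}}\bsequence[j]$ in one-to-one correspondence with $s\in\{0,\ldots,k\}$. Finally I realize $s\bmod 2$ by the standard ReLU ladder $\sum_{i=0}^{k-1}(-1)^i\bigl(\relu(c(\sigma-\sigma_i))-\relu(c(\sigma-\sigma_{i+1}))\bigr)$, which uses exactly $2k$ ReLU units — within the budget $2m=O(k)$ — and route its sign to position $n+1$.

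The main difficulty is quantitative noise control, because the spacing between consecutive levels of $\sigma$ is only $\Theta(1/k)$. I must show that the cumulative error coming from (i) softmax mass leaking onto $\mathcal{S}^c$, (ii) the off-diagonal corrections when inverting $U^{\top} U$, (iii) token-dependent perturbations of the attention logits inside $\mathcal{S}$, and (iv) the $\Theta(\log n)$-bit quantization of weights and activations together stays strictly below $1/(2k)$; this is what pins down the scalings $d=\Theta(k\log(n/\delta))$, $M=\Theta(\log n)$ and the $\Theta(\log n)$ precision in the theorem. I close with the parameter count: $A\in\R^{d\times d}$, $W\in\R^{2m\times 2d}$ and $h\in\R^{2m}$ give a total of $d^2+O(dm)=\Theta(k^2\log^2(n/\delta))$, and the hypothesis $k\le n/\log^4(n/\delta)$ gives $d^2+dm=o(nk/\log n)$ as claimed.
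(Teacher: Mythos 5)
Your overall plan — fix a high-probability near-orthogonality event for the random embeddings, build a rank-one attention matrix $A$ of the form $M\,(\text{position detector})\,e_{n+1,0}^{\top}$ so that query $n+1$ places roughly uniform mass on $\mathcal{S}$, then decode $s=\sum_{j\in\mathcal{S}}\bsequence[j]$ and compute $s\bmod 2$ with a width-$O(k)$ ReLU ladder — is the same as the paper's, and your parameter-count check at the end is correct. However, there is a genuine gap in the attention step, and it is precisely the thing the paper's construction is engineered to avoid.

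You take $w_j=(e_{j,0}+e_{j,1})/(1+\langle e_{j,0},e_{j,1}\rangle)$, which forces $\langle w_j,e_{j,0}\rangle=\langle w_j,e_{j,1}\rangle=1$ but makes no promise about $\langle w_j,e_{j',b'}\rangle$ for other secret positions $j'\in\mathcal{S}\setminus\{j\}$. Those cross-terms are not zero; each is of order $1/\sqrt{d}$, and summing $k$ of them gives a residual of order $\sqrt{k/d}\cdot\sqrt{\log(k/\delta)}$, which under the scaling $d=\Theta(k\log(n/\delta))$ of the theorem is only a \emph{constant} (you can shrink the constant by enlarging the $\Theta(\cdot)$, but it cannot be made $o(1)$ in $n$ without pushing $d$ to $k\log^{3}(n/\delta)$, which then violates $d^2=o(nk/\log n)$ in the regime $k\in[n/\log^5,n/\log^4]$). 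After you multiply by the temperature $M=\Theta(\log n)$, the logits at the secret positions are therefore not ``$M$ plus lower-order noise'': they spread over an interval of width $\Theta(\log n)$, and the softmax weights on $\mathcal{S}$ can consequently differ by a polynomial factor $n^{\Theta(1)}$. The attention output $v$ is then \emph{not} close to $\frac1k\sum_{j\in\mathcal{S}}e_{j,\bsequence[j]}$ — for some inputs it essentially collapses onto a single secret token — and $\langle p,v\rangle$ is no longer an affine function of $s$, so the $\Theta(1/k)$-spaced ReLU ladder has nothing to threshold. The noise-budget claim ``cumulative error strictly below $1/(2k)$'' is thus not achievable with your $w_j$.

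The fix is already implicit in your FFN step: you form $p=U(U^{\top}U)^{-1}\vone$ so that $\langle p,u_j\rangle=1$ \emph{exactly} for every $j\in\mathcal{S}$. Do the same thing for the attention detector. In the paper this is Lemmas~\ref{lem:embedmatrix}--\ref{lem:indicatorvec2}: one assembles the $2k\times d$ matrix $M$ of secret-position embeddings, verifies via random-matrix bounds (\Cref{lem:matrixbound}, \Cref{lem:mineigen}) that $MM^{\top}$ is well-conditioned, and defines $u_{j,b}=M^{\top}(MM^{\top})^{-1}o_{p(j,b)}$ and $v_j=u_{j,0}+u_{j,1}$. This gives $\langle v_j,e_{j',b'}\rangle=\1(j=j')$ \emph{exactly} for all $j'\in\mathcal{S}$ (and $\|\sum_j v_j\|=O(\sqrt k)$, so the leakage onto $j'\notin\mathcal{S}$ is still $\le 1/2$ after a union bound). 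With $A=40\log(n)\,(\sum_j v_j)\,e_{n+1,0}^{\top}$ all secret logits are exactly $40\log n$, all non-secret logits are $\le 20\log n$, and the softmax on $\mathcal{S}$ is $\frac1k\pm O(n^{-19})$; the rest of your argument then goes through. In short: your FFN pseudo-inverse idea is the right one, and the missing ingredient is simply applying it once more to the attention side rather than using the cheaper but insufficiently biorthogonal $w_j$.
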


In other words, the model possesses sufficient expressive power to represent any parity function, even in the absence of the chain of thought. Therefore, we are in the \emph{representational-sufficient} regime where expressiveness is not the bottleneck. The proof of this statement is based on random matrix theory and concentration inequalities, which we defer to~\Cref{app:thm:rep}. 

While the function is expressible, this does not guarantee that the solution can be easily found. In fact, we show that achieving a perfectly accurate solution using a gradient-based optimization method requires an exponential number of samples in $k$ assuming the memory, i.e. space complexity, to perform optimization is bounded throughout the training.

\begin{theorem}[Hard to Learn]
\label{thm:nocot}
For any randomly initialized simplified Transformer model with constant layers, when the embedding dimension $d$ and width $2m$ satisfies $d^2 + dm = o(nk / \log(n))$, for any constant number of passes $q$, when the model is trained with $q-$pass stochastic gradient descent with $O((d^2 + dm)\log n)$ memory, the sample complexity required to learn the parity problem $(n,k)$ without CoT to any nontrivial accuracy $a > 50\%$ with nontrivial probability $p > 50\%$ is $2^{\Omega(k)}$.
\end{theorem}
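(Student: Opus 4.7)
The plan is to reduce the training of the bounded-memory Transformer to an abstract memory-bounded learning algorithm and then invoke known time--space lower bounds for learning sparse parities from the line of work of Raz, Kol--Raz--Tal, and Garg--Raz--Tal. The argument has three ingredients: (i) bounding the bit-complexity of the learner's persistent state, (ii) casting $q$-pass SGD as a branching program (or streaming) learner of that width, and (iii) applying a parity-specific sample lower bound in that model.

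First, I would carefully account for the memory consumed throughout training. A constant-depth simplified Transformer of dimension $d$ and FFN width $2m$ carries $O(d^2+dm)$ trainable real parameters, each stored to $O(\log n)$ bits of precision by assumption, so the entire persistent state fits in $M := O((d^2+dm)\log n) = o(nk)$ bits. Any scratch used inside a single forward/backward pass on one sample is a function only of that sample and the current parameters, hence can be erased after the update and does not enlarge the persistent memory. This lets me view the $q$-pass SGD learner as a width-$2^M$ branching program that reads the $q$-fold concatenation of the i.i.d.\ training stream from $\pdatanocot$ and finally outputs a hypothesis.

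Next, I would invoke the memory-bounded lower bound for $k$-sparse parities. Placing a uniform prior on the secret set $\mathcal{S}$ over $k$-subsets of $[n]$, the Raz/Kol--Raz--Tal family of results implies that any branching-program learner that achieves accuracy above $50\%$ with probability above $50\%$ must either use memory $\Omega(nk/\log n)$ or consume $2^{\Omega(k)}$ samples. Plugging in $M = o(nk)$ forces the sample side of the dichotomy and yields the claimed $2^{\Omega(k)}$ lower bound; the constant number $q$ of passes only multiplies the effective stream length by a constant and is absorbed into the $\Omega(\cdot)$.

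The main obstacle is making this reduction airtight, i.e., ensuring that no information flows outside the memory budget and that the off-the-shelf parity lower bound applies verbatim. Three bookkeeping points are critical. First, the random embeddings $e_{i,0},e_{i,1}$ are sampled before any data is seen, so they are part of the algorithm's public randomness rather than data-dependent state; a conditioning/averaging argument shows that the lower bound still applies after fixing a ``good'' embedding. Second, I need to verify that the weak-learning threshold $a>50\%$ required in the statement matches the threshold of the parity lower bound, which follows from the fact that any parity has Fourier mass concentrated at its defining monomial. Third, multi-pass access does not circumvent the single-pass bound because the samples remain i.i.d.\ and the branching-program simulation handles repeated reads transparently. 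Once these points are checked, the $2^{\Omega(k)}$ bound drops out of plugging the memory budget into the parity lower bound.
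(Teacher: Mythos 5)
Your overall strategy matches the paper's: view bounded-memory $q$-pass SGD as a branching-program learner, account for the persistent state in bits, and invoke a memory--sample lower bound for learning $k$-sparse parities. The treatment of the frozen embedding matrix also lands in the same place — the paper handles it by regenerating $e$ from a fixed PRNG seed inside each transition, while you appeal to public randomness and conditioning; both are acceptable ways to keep $e$ out of the $o(nk)$ state budget.

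However, your handling of the multi-pass case contains a genuine gap. You assert that ``the constant number $q$ of passes only multiplies the effective stream length by a constant and is absorbed into the $\Omega(\cdot)$,'' and later that ``multi-pass access does not circumvent the single-pass bound because the samples remain i.i.d.\ and the branching-program simulation handles repeated reads transparently.'' This is incorrect as a matter of the lower-bound technology. The single-pass results of Raz and Kol--Raz--Tal crucially exploit that each sample is read exactly once; a $q$-pass learner re-reads the \emph{same} samples, and a two-pass branching program with bounded width can in general do things no single-pass program of the same width can (e.g., it could first extract a fingerprint and then correlate). Concatenating the stream does not simulate re-reading, because the second ``copy'' of the stream would be fresh i.i.d.\ samples rather than the originals. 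Extending time--space lower bounds from one pass to a constant number of passes required new arguments (Garg--Raz--Tal, and the tight version the paper actually cites, Theorem~6 of Lyu et al.\ 2023). The paper's proof explicitly relies on that multi-pass theorem; your proposal would need to cite and use such a multi-pass result rather than claim the single-pass bound carries over for free.
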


The proof is deferred to~\Cref{app:thm:nocot}. We use results from the classical online learning communities~\citep{lyu2023tighttimespacelowerbounds} to show that in the regime where the memory required to perform the training is less than $nk$, the model can't effectively store information about enough samples inside the parameters during training, and hence can't infer the secret indices effectively. Here the parameters $k$ denotes the size of the secret set $\mathcal{S}$ and the sample complexity grows exponentially when $k$ grows.

\subsection{Polynomial Sample Complexity with CoT}

The result in~\Cref{thm:nocot} presents a seemingly negative outcome for learning algorithmic reasoning, as the model requires exponentially many samples relative to $k$. However, in our main result,~\Cref{thm:main}, we will demonstrate that the sample complexity can be significantly reduced if a step-by-step derivation is provided. In other words, CoT is much more sample efficient, where model can effectively learn complex relationships as long as each token depends on only a few previous tokens.


We will use the following initialization of the 1-layer Transformer model. The attention is initialized to be uniform ($A = 0$) and the contribution of attention output is initialized to be zero ($W_{r, d+1:2d} = 0$). The parameter $\head$ in the FFN and the word embedding $\embed$ is fixed during training.

\begin{assumption}[Initialization]
\label{assum:init}
At initialization, 
\begin{align*}
    \forall r \in [2m], A &= 0, W_{r, d + 1: 2d} = 0, 
    W_{r, 1:d} = \sum_{i = n + 1}^{n + k} \sum_{b = 0}^1 \nu_{r, i, b} e_{i, b}, h_{1:m} = \frac{1}{2m}, h_{m+1:2m} = -\frac{1}{2m}.
\end{align*}
Here $\nu_{r, i, b}$ is independent random variable sampled uniformly from $\{-\epsilon, \epsilon\}$. 
\end{assumption}

We can then train the model with stochastic gradient descent (SGD) and get the following theorem.

\begin{theorem}[Easy to Learn with CoT]
    \label{thm:main}
    For any constant $\delta \in (0,1)$,  when the number of secret indices $k$ is in $[n / \log^5 (n / \delta), n / \log^4(n/\delta)]$, with probability $1 - \delta$, a randomly initialized simplified Transformer (see~\Cref{assum:init}) with $o(nk)$ parameters trained for constant steps using mini-batch SGD with $\Tilde O(n)$ samples using appropriate hyperparameters (see~\Cref{assum:order}) can reach perfect accuracy on a parity problem $(n,k)$ with CoT,
    \begin{align*}
        \forall \bsequence \sim \pdatacot, i \in \{n + 1, \ldots, n + k\},\sgn\left(\transformer(\bsequence)[i] \right) = \bsequence[i + 1].
    \end{align*}
    Furthermore, after training, the attention pattern is interpretable and one-hot in the sense that, for any $\bsequence \sim \pdatacot, i \in \{n + 1,\cdots,  n + k\}, j\leq i$,
    \begin{align*}
        \big| \softmax(C +  \embed(\bsequence)A\embed(\bsequence) )[j, i] - \1(j = S[i])  \big| < 1/n^8.
    \end{align*}
    Moreover, the result still holds even if all the weights and activations are  in $\Theta(\log n)$ precision.
\end{theorem}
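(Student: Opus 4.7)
The plan is to track the network through a constant number of mini-batch SGD phases, exploiting~\Cref{assum:init}: with $A=0$ and $W_{r,d+1:2d}=0$ the attention head initially copies nothing, and the antisymmetric $\nu$'s together with $h_{1:m}=-h_{m+1:2m}=1/(2m)$ make the initial output of~\Cref{def:trans} identically zero, so every CoT position starts at hinge-loss $1$ with $\pm 1$ subgradients. Throughout I will condition on the high-probability event used in~\Cref{thm:rep} that the random-hypercube embeddings are near-orthogonal, $\langle e_{i,b},e_{j,b'}\rangle=\1[(i,b){=}(j,b')]+O(\sqrt{\log(n/\delta)/d})$, and work in $\pm 1$ encoding via $(-1)^{b_i\oplus b_{S[i]}}=(-1)^{b_i}(-1)^{b_{S[i]}}$, so the target at CoT position $i\in\{n{+}1,\ldots,n{+}k\}$ is a \emph{product} of two token signs.

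\textbf{Phase 1 (warm up the copy direction).} At initialization the attention output at position $i$ is the uniform average $\bar e_i=\frac{1}{i}\sum_{j\le i}e_{j,\bsequence[j]}$, so the only nontrivial first-step gradient sits on $W_{r,d+1:2d}$. I will show that with the learning rate prescribed in~\Cref{assum:order}, one SGD step pushes $W_{r,d+1:2d}$ along the cross-correlation between the target sign and $\bar e_i$; on a mini-batch of size $\tilde O(n)$ the contributions from the $n-k-1$ distractor positions average to zero (they are independent of the target) while the $e_{S[i],\bsequence[S[i]]}$ term survives with the correct sign. The antisymmetric FFN initialization is crucial here: it forces the gradient on $W_{r,1:d}$, which would see only the \emph{current} embedding $e_{i,\bsequence[i]}$, to cancel in expectation, so the useful signal concentrates in the attention branch rather than being absorbed by the current-token branch.

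\textbf{Phase 2 (attention collapses to one-hot).} With the copy direction installed I will compute $\nabla_A\Ls$. Because the derivative of $\softmax$ at the uniform point is a centering operator, the $(j,i)$ entry of the gradient is proportional to the inner product of the Phase-1 copy direction with $e_{j,\bsequence[j]}$, minus its average over $j'\le i$. By near-orthogonality this inner product is $\Theta(1)$ when $j=S[i]$ and $O(\sqrt{\log(n/\delta)/d})$ otherwise; summed over a mini-batch the signal at $j=S[i]$ dominates. Scaling the SGD learning rate to $\Theta(\log n)$ then drives $A[S[i],i]-A[j,i]=\Omega(\log n)$, and a single softmax yields the claimed $1/n^8$ bound after a union bound over the $O(nk)$ relevant $(j,i)$ pairs.

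\textbf{Phase 3 and conclusion.} Once attention is essentially one-hot the FFN input at CoT position $i$ is, up to error $1/n^8$, the concatenation of $e_{i,\bsequence[i]}$ and $e_{S[i],\bsequence[S[i]]}$; a constant number of further steps on $W_{r,1:d}$ and $W_{r,d+1:2d}$ realize the two-bit XOR via standard $O(1)$-width ReLU gadgets, pushing every hinge margin above $1$ for all $\bsequence\sim\pdatacot$. The parameter count is $o(nk)$ via $d=\Theta(k\log(n/\delta))$ and $m=O(k)$, matching~\Cref{thm:rep}, and the $\Theta(\log n)$-precision statement follows because every intermediate quantity stays polynomially bounded, so rounding does not flip a sign. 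The main obstacle will be the noise control in Phase 2: I must show that a $\Theta(1)$ signal at one entry beats the aggregated noise from $n$ distractor positions \emph{simultaneously} across all $k$ CoT positions and all sequences, which is precisely what forces $d=\Omega(k\log(n/\delta))$. I will handle this by Hanson--Wright/matrix-Bernstein concentration plus a careful union bound that exploits the independence of the $\nu_{r,i,b}$'s across rows to avoid a blow-up in the dependence constants.
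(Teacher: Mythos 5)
Your three-phase outline is the same as the paper's (FFN output weights first acquire correlation with secret embeddings, then attention collapses to one-hot, then a last FFN step realizes the XOR), and the noise-control intuition for why $d=\Omega(k\log(n/\delta))$ is needed is spot on. However there are two misattributions that would have to be repaired before the argument runs.

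First, the initialization does \emph{not} make the transformer output identically zero, and the antisymmetry $h_{1:m}=-h_{m+1:2m}$ is not what kills the gradient on $W_{r,1:d}$. The $\nu_{r,i,b}$ are i.i.d.\ across $r$ with no pairing, so the initial output $\sum_r h_r\relu(\nu_{r,i,\bsequence[i]}+\cdots)$ is a small random number of order $\epsilon/\sqrt m$, not exactly $0$ (this is harmless for the hinge loss, but it is not ``identically zero''). More substantively, for a single neuron $r$ the gradient on $W_{r,1:d}$ carries an overall factor $h_r$ --- there is no cancellation \emph{across} neurons inside a per-neuron gradient. What actually makes $\nabla_{W_{r,1:d}}$ small in the paper is the CoT data distribution: since $\bsequence[i+1]=\bsequence[i]\oplus\bsequence[S[i]]$ with $\bsequence[S[i]]$ a fresh fair coin, the sign $(-1)^{\bsequence[i+1]}$ conditioned on $\bsequence[i]=b$ is exactly zero-mean, so the per-batch statistic $\delta_{t,i,b}=\sum_s(-1)^{\bsequence^{(s)}[i+1]}\1(\bsequence^{(s)}[i]=b)$ concentrates at scale $\sqrt{B}$ rather than $B$. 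That is the paper's Lemma~\ref{lem:gdonembedding}, and it has nothing to do with the sign pattern of $h$. If you built Phase~1 on the antisymmetric-$h$ story, the per-neuron bookkeeping would not close.

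Second, the Phase~3 description (``standard $O(1)$-width ReLU gadgets over a constant number of further steps'') is a representation argument where the paper actually needs a dynamics argument. The paper shows in one carefully-sized SGD step (Lemmas~\ref{lem:gdformlast}--\ref{lem:linearFFNoutput2}) that the gradient on $W_{r,d+1:2d}$ already points in the direction that, together with the frozen $h$ and the ReLU gating by $\nu_{r,i,\bsequence[i]}$, reconstructs $(-1)^{\bsequence[i]+\bsequence[S[i]]}$ up to $O(\epsilon)$ error. You cannot simply invoke a gadget existing; you must show SGD lands on it, which requires the fact that once attention is one-hot the loss restricted to $W_{r,d+1:2d}$ becomes linear in the useful direction with margin bounded away from zero.

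With those two corrections the skeleton does match the paper's proof, which proceeds exactly via Lemmas~\ref{lem:gdform}, \ref{lem:attnsignal}, \ref{lem:signalcorrelation}, \ref{lem:onehotattn}, \ref{lem:onehotoutputattn}, and \ref{lem:linearFFNoutput2}.
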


This theorem indicates that the attention module can successfully extract the sparse sequential dependencies in the CoT data and faithfully represent it in the attention pattern, which is validated in our experiments (see \Cref{fig: sample complexity with CoT and attention}). It is also amongst the first optimization dynamics analysis of Transformers using finite-sample gradients rather than population gradients. We delay the full proof of the theorem to~\Cref{sec:proofofthm3}.

\emph{Proof Sketch.} In our analysis, the dynamics of the model includes three key phases. In the first phase, the weight of the FFN layers become correlated with the embeddings associated to the secret indices. In the second phase, this correlation caused the attention module to receive a strong signal to amplify the attention weight on the corresponding secret indices at each position. The attention pattern will become one-hot after this step. Finally, in the third phase, the FFN layers learn the correct mapping to the output, utilizing both the embedding at the current token and the retrieved embedding from the secret indices as indicated by the attention.

\textbf{Phase 1. Configuring the FFN} At initialization, as the FFN weight corresponds to the attention output ($W_{r, d+1:2d}$) is initialized to be zero. The set of neurons activated at each position $i \in [n+1, n+k]$ is solely determined by the word embedding $e_{(i,\bsequence[i])}$ at the position. Because $e_{i, b}$ are nearly orthogonal, it holds that $\langle e_{i,\mathbf b[i]},W_{r,1:d}\rangle \approx \nu_{r,i,b}$ for all $i$ and $r$ with high probability. As a result, for a fixed $i$ and $\mathbf{b}[i]$, the set of activated neurons at position $i$ is $\{r \mid \nu_{r, i, \bsequence[i]} > 0 \}$. Therefore, at initialization, the MLP can be viewed as an ensemble of multiple linear functions specialized to each position and boolean value. We can show that when the learning rate is small, the set of activated neurons at each position remains the same through the training process. Hence, we can conceptually view the FFN function as a set of different linear functions applied independently at each position and value. We will denote this set of linear weight as $\kappa_{t,i,b}$ (formally defined in~\Cref{lem:attnsignal}).

As the attention weight matrix $\qk$ is initialized as zero, the attention pattern will be uniform at initialization. Notice that in the parity data with CoT, the only position whose value correlates with $\bsequence[i + 1]$ when conditioned on $\bsequence[i]$ is $S[i]$. This suggests that the linear weight $\kappa_{1,i,\bsequence[i]}$ will have a stronger correlation with the embedding $e_{(S[i], \bsequence[j])}$ than other embeddings. 

This step crucially relies on the sparse dependency in the CoT data. The strong linear correlation will not be present in the data without CoT when $k > 1$, as every token $\bsequence[i]$ for $i \in [n+1]$ will be uncorrelated with the desired output $\bsequence[n + 2]$ in such case.

\textbf{Phase 2. Learning the Sparse Attention} 
At the second step, as  the FFN weight corresponds to the attention output is no longer zero, the attention weight matrix $\qk$ will receive a non-zero gradient. By the chain rule, the gradient corresponding to how the attention from $i-$th token attends to the $j-$th token  $\softmax(C + \hiddenstate^T\qk \hiddenstate)[j,i]$ will be larger when the approximate contribution of the embedding of the $j-$th token to the output at the $i-$th token $\langle \kappa_{1,i,\bsequence[i]}, e_{(j, \bsequence[j])} \rangle$ is larger. This suggests that the attention will be amplified on the index that has a strong correlation with the output at the current token, which is the secret index $S[i]$. This step will make the attention pattern approximately one-hot as in the theorem statement.

\textbf{Phase 3. Learning the Output Mapping} 
At the final step, as the attention pattern is one-hot, the FFN layer only needs to learn a mapping from the embedding at the current token and the embedding at the secret index to the output. As the mapping is linear conditioned on the embedding at the current token, the FFN layer can learn the correct mapping within a single step. This step will make the model reach perfect accuracy on the parity problem with CoT.


\begin{figure}[t]
    \centering
    \includegraphics[width=0.95\linewidth]{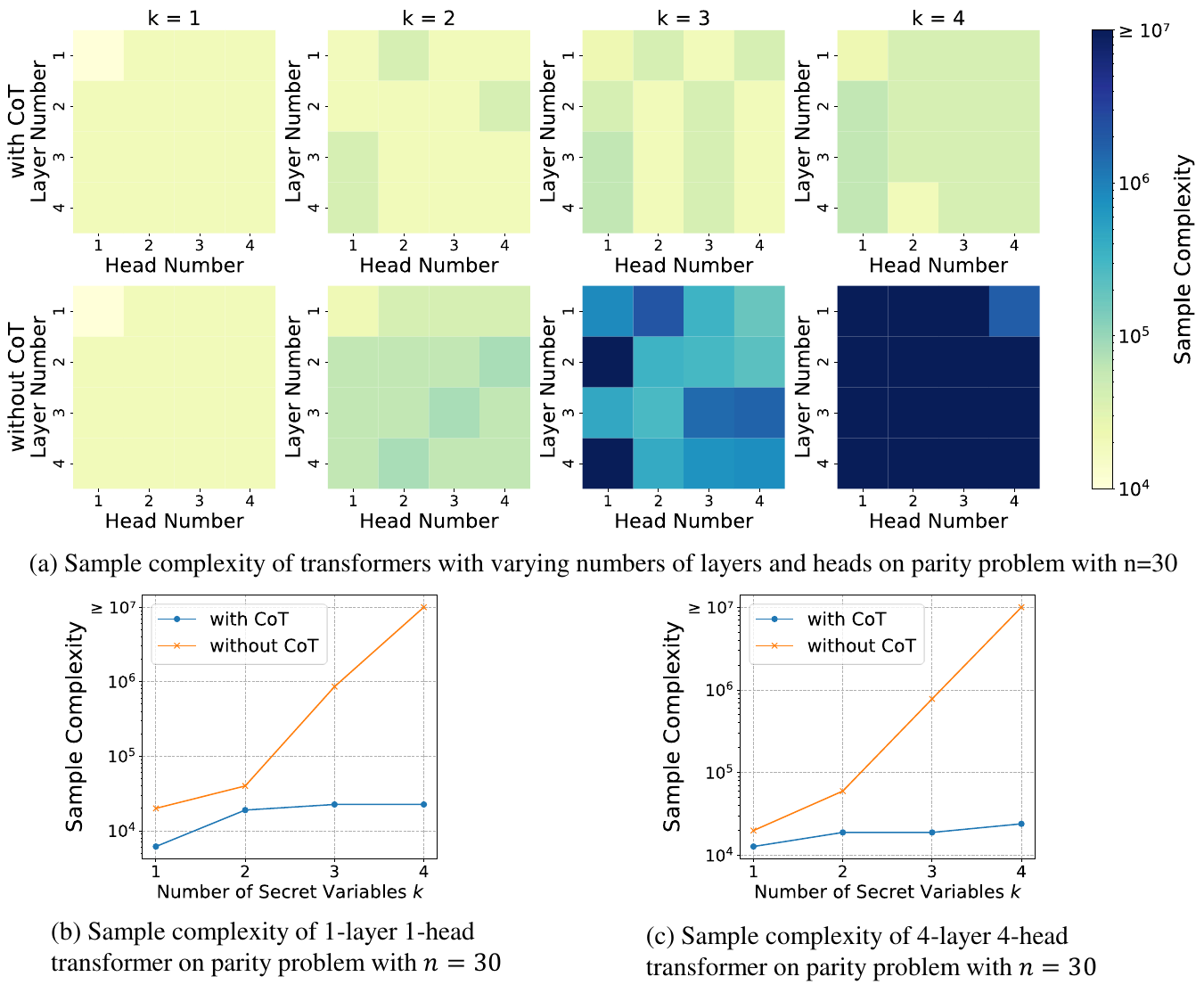}
    \caption{The sample complexity for learning parity without CoT increases exponentially with $k$. CoT significantly reduces the sample complexity, demonstrating exponential improvement across varying numbers of heads and layers.}
    \label{fig: sample complexity}
\end{figure}

\begin{figure}[t]
    \centering
\includegraphics[width = \linewidth]{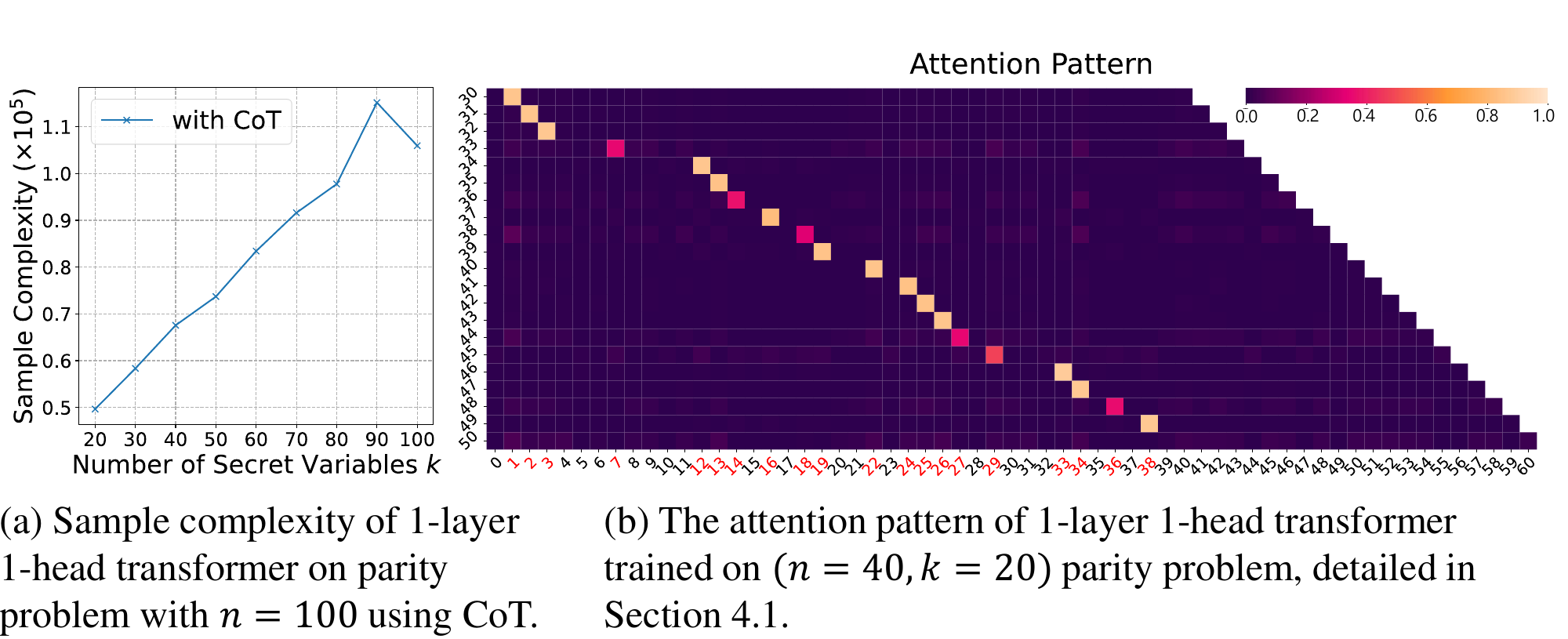}
    \caption{(a) When $n$ is fixed, the sample complexity of learning parity with CoT grows approximately linearly with $k$. (b) The attention pattern learned by the transformer with CoT is \textbf{interpretable}, as the $i$-th output token of CoT predominantly attends to secret index $S[i]$.}
    \label{fig: sample complexity with CoT and attention}
\end{figure}

\section{Empirical Experiments}\label{sec:exp}

In this section, we validate and extend our theoretical findings through comprehensive experiments in the following three aspects: First, we empirically confirm that CoT reduces the sample complexity of transformers in learning the parity problem. 
Second, we conduct multi-pass training of transformers without CoT, as a complement to the established lower bound (Theorem \ref{thm:nocot}) which only applies to constant-pass training. The results indicate that multi-pass training indeed improves the models' ability to learn parity problems. 
Third, we conduct experiments on the GSM8K dataset \citep{cobbe2021training} to show that CoT introduces sparse sequential dependence on real-world training data.
Although necessary simplifications on transformer are made in Section \ref{sec: theory} to develop theoretical results, we use the standard transformer architecture in the subsequent experiments following the GPT-2 architecture \citep{radford2019language} with trainable position embeddings unless otherwise specified.

\subsection {Parity Learning with Multi-layer Transformers}\label{sec: exp multi-layer}

In this section, we conduct an ablation study on the sample complexity of transformers with standard GPT-2 architectures and one-pass training,  with and without CoT.



\textbf{Ablation study on sample complexity~(Figure \ref{fig: sample complexity}).}
%
We train transformers on parity problem with $n=30$ and $k=1,2,3,4$, with and without CoT, varying layers and heads from $1$ to $4$.  We choose the best performer across learning rates from $6 \times 10^{-5}$, $8 \times 10^{-5}$, and $1 \times 10^{-4}$.
At each step, a fresh batch of training data is sampled, with a maximum budget of $10^7$ samples. We record the number of samples seen by the model before reaching an evaluation accuracy of $1$ as the sample complexity.


\textbf{Results.} Figure \ref{fig: sample complexity} shows that training with CoT [first row in Figure \ref{fig: sample complexity} (a)] consistently achieves better sample efficiency than training without CoT [second row in Figure \ref{fig: sample complexity} (a)]. Moreover, for a fixed configuration of heads and layers, the sample complexity without CoT grows exponentially with the parameter $k$. In contrast, CoT greatly reduces the sample complexity, showing an exponential improvement across different numbers of heads and layers. These findings are consistent with our theoretical analysis.

\textbf{Training with CoT induce sparse and interpretable attentions (Figure \ref{fig: sample complexity with CoT and attention}).}~~We evaluate the sample complexity of a $1$-layer, $1$-head transformer on larger parity problems ($n=100$ and $20 \leq k \leq 100$). Without CoT, training exceeds the sample budget when $k \geq 4$. However, with CoT, we can successfully train for any $k$. Figure \ref{fig: sample complexity with CoT and attention}b shows the attention pattern of a $1$-layer, $1$-head transformer trained on a parity problem with $(n=40, k=20)$ using CoT on a random query. In this case, CoT processes the secret variables in ascending order. The $i$-th row illustrates the attention weight of the $i$-th token attending to the previous ones, with indices corresponding to secret variables highlighted in red. 


\textbf{Results.} In Figure \ref{fig: sample complexity with CoT and attention}a, we show that training with CoT succeed with large $n$ and $k$. Moreover, the empirical result suggests that the sample complexity of learning parity with CoT grows approximately linearly with $k$. This shows a clear contrast to without CoT setting where the required samples explodes exponentially. To validate our theorem on learning the sparse dependence, we show in Figure \ref{fig: sample complexity with CoT and attention}b that the attention pattern learned by the transformer with CoT is interpretable. Specifically, the $i$-th output token of CoT primarily focuses on the $i$-th secret variable, with minimal attention given to other tokens.  

\subsection{Multi-pass Training Improves Parity Learning without CoT}\label{sec:multipass}

\begin{figure}[t!]
\setlength{\abovecaptionskip}{-1pt}
    \centering
    \includegraphics[width=0.9\linewidth]{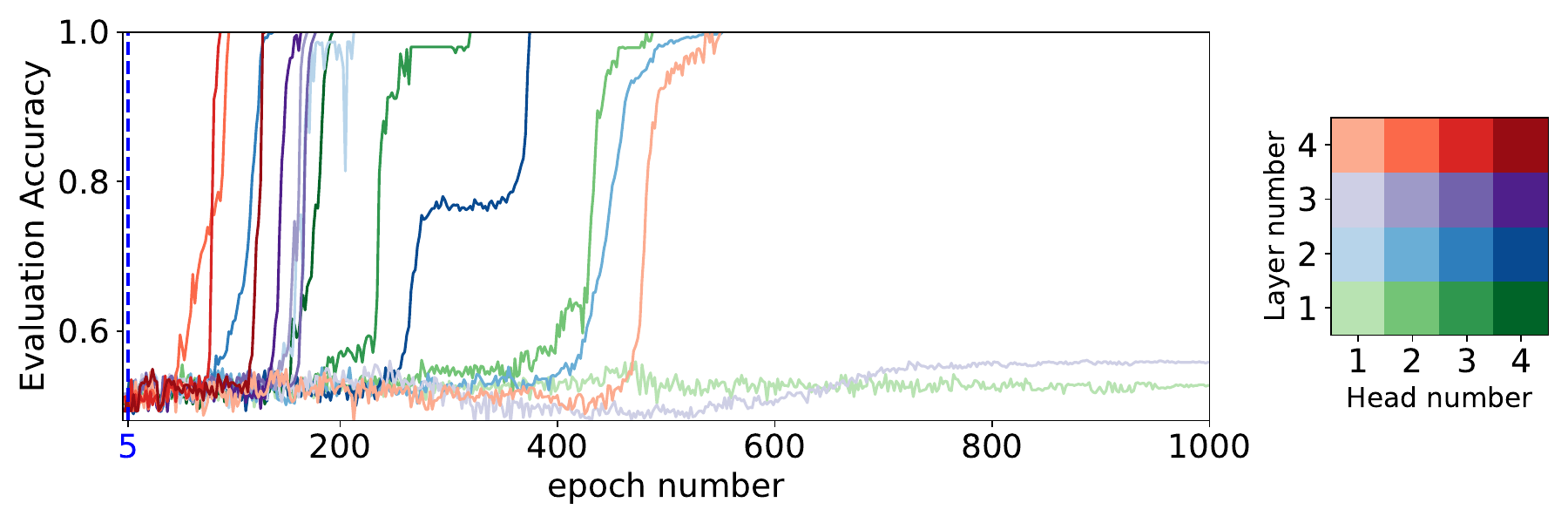}
    \caption{Evaluation accuracy of transformers on the $(n=20, k=6)$ parity problem \textbf{without} CoT. The model is trained on a dataset of $10,000$ samples for $1,000$ epochs. Almost all layer-head configuration achieve perfect evaluation accuracy. Adding more heads is more effective than adding layers. The blue dashed line marks the \textbf{with} CoT setup, which achieves perfect accuracy in 5 epochs.}
    \label{fig: multi-pass helps}
\end{figure}

\begin{figure}[t!]
    \centering
\includegraphics[width=\linewidth]{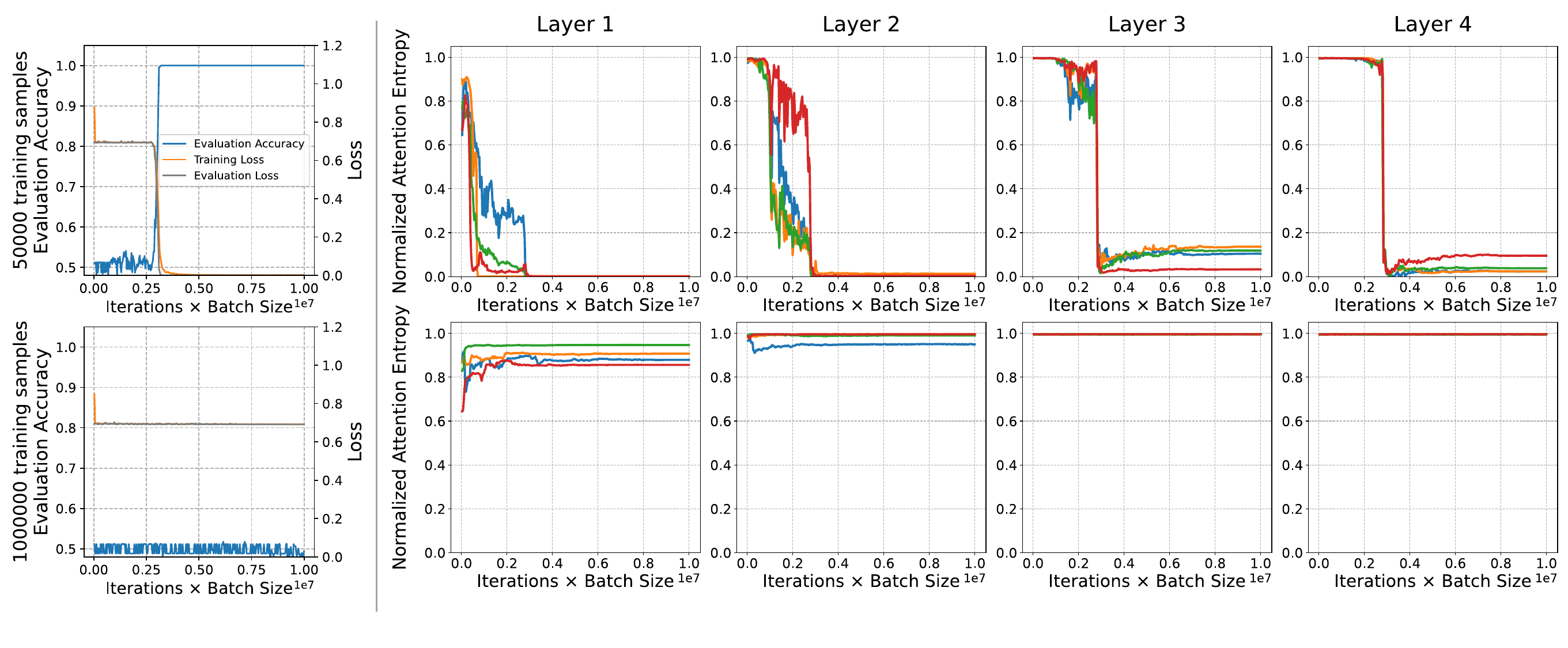}
    \caption{$4$-layer $4$-head transformer trained on the $(n=20, k=6)$ parity problem without CoT using multi-pass training, detailed in~\Cref{sec:multipass}. When trained on a small dataset of $50000$ samples, the model \text{achieves perfect evaluation accuracy} (Top), accompanied by a significant decrease in entropy. Surprisingly, when trained on an even larger training set with $1000000$ samples, the model \text{fails to learn} (Bottom), and both the training loss and the normalized attention entropy remain elevated. This shows that the development of attention sparsity may improve optimization.}
    \label{fig: repeated-data better}
\end{figure}


In our theory, we only establish lower bounds for the sample complexity of learning parity problems with constant pass SGD training. To complement the established lower bounds for transformers learning parity (\Cref{thm:nocot}), we conduct empirical experiments by training transformers {without CoT} using \textbf{multi-pass training}. We make two observations below.
First, the results demonstrate that training with repeated data can help Transformers learn the parity function, although this process still consumes significantly more computation (epochs) than trained on CoT data  (\Cref{fig: multi-pass helps}). Second, a key difference between one-pass and multi-pass training is the development of \textbf{sparse attention} (see Figure \ref{fig: repeated-data better} Right), similar to the role of CoT shown in the previous section. This shows that the development of sparse attention is crucial even in when training without CoT on the parity data.

\textbf{Experiment protocols.} On $(n=20,k=6)$ parity problem, we conduct multi-pass training without CoT. In Figure \ref{fig: multi-pass helps}, the models are trained with $10^4$ samples for $1000$ epochs, with the number of layers and heads ranging from $1$ to $4$. In Figure \ref{fig: repeated-data better}, we compare the training of $4$-layer $4$-head transformer on $5\cdot 10^4$ and $10^6$ samples respectively. The learning rate is initialized $10^{-4}$. 

\textbf{Normalized attention entropy.} To measure attention sparsity, we introduce the concept of \emph{normalized attention entropy} for each attention head (illustrated in~\Cref{fig: repeated-data better} Right). Let $\mathcal{P}^{\ell,h}(\mathbf{x})[i]$ denote the attention score distribution produced by the $h$-th head in the $\ell$-th layer at the $i$-th token of input $\mathbf{x}$. The {normalized attention entropy} for the input $\mathbf{x}$ is then defined as:
\begin{align}\label{eq: normalized attention entropy} \mathrm{Ent}(\mathbf{x}; \ell, h) = \min_{i\geq 2} \frac{H(\mathcal{P}^{\ell,h}(\mathbf{x})[i])}{\log i} \in [0,1], \end{align}
where $H(\mathcal{P}) = -\int \log(\mathcal{P}) d\mathcal{P}$ is the entropy of distribution $\mathcal{P}$. The normalization term $\log i$ represents the entropy of a uniform distribution over $i$ tokens, account for the varying context length. The minimum is taken over different tokens in $\mathbf{x}$ since attention heads may specialize in extracting information for specific tokens. As a result, a lower normalized attention entropy indicates a sparser attention pattern. To compute the normalized attention entropy for each head, we would average the normalized entropy across all question-answer pairs in the validation set.

\textbf{Results.} As shown in Figure \ref{fig: multi-pass helps}, when trained on $10^4$ samples, which only accounts for a small portion ($\sim 1\%$) of all possible inputs ($2^{20}\approx 10^6$), most of the transformer architectures we examined achieve perfect evaluation accuracy given sufficient epochs. While the $k=6$ problem is intractable with one-pass training without CoT, these results demonstrate that multi-pass training can indeed enhance learning in the no-CoT setup. However, CoT is by far the most effective accelerator, achieving perfect accuracy in just 5 epochs, significantly outperforming the multi-pass no-CoT training.  

Although learning without CoT is less efficient, it offers a "slow-mode" trajectory. In Figure \ref{fig: repeated-data better} (Top), the no-CoT loss function initially plateaus before eventually dropping to zero. During this plateau, the entropy in the attention layers continues to decrease, indicating that feature learning is occurring gradually. When all the attention heads become sparse, a transition phase occurs in the loss and accuracy: the loss drops to zero, and evaluation accuracy jumps to 1. In contrast, in the failure case shown in Figure \ref{fig: repeated-data better} (Bottom), the entropy of the attention layers remains high throughout.

This experiment on no-CoT confirms that sparse attention is crucial for parity learning. As previously demonstrated in Figure \ref{fig: motivating example}, CoT accelerates learning by quickly inducing sparsity. This suggests that CoT not only improves the sample efficiency but also improve the optimization landscape by facilitating sparsity, due to the introduction of sparse dependencies added in the intermediate steps.
\subsection{CoT Induces Sparsity on Real-World Data}\label{sec: GSM8K}

\begin{figure}[t] 
    \centering
    \includegraphics[width=\linewidth]{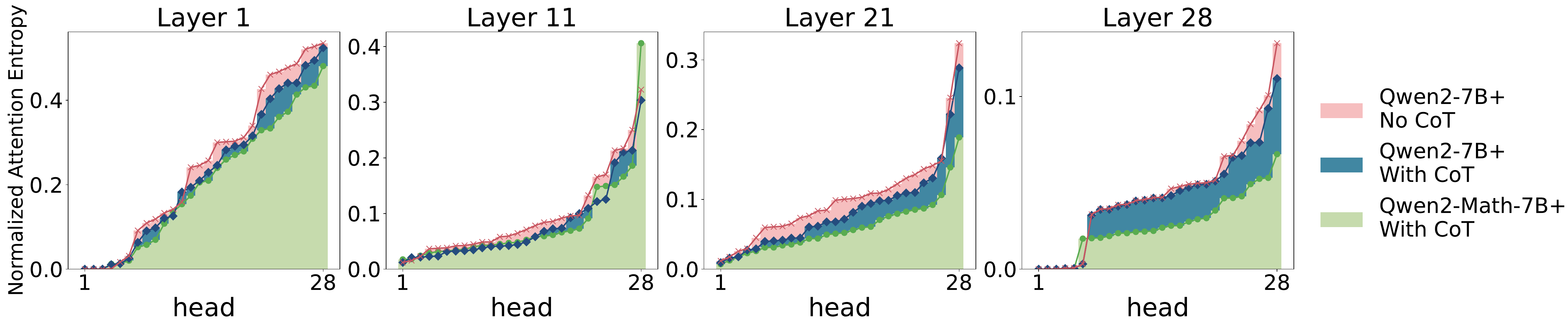}
    \caption{We compare the normalized attention entropy of the pre-trained Qwen2-7B and math-specialized Qwen2-Math-7B models on the GSM8K dataset with and without CoT prompting (Section \ref{sec: GSM8K}). Each bar represents the entropy of an attention head. The Qwen2-7B model exhibits sparser attention when processing CoT data, indicating that real-world CoT data has a sparser structure. The entropy difference between the Qwen2-7B and Qwen2-Math-7B model suggests that fine-tuning on CoT data promotes the development of sparser attention patterns of the model.}
    \label{fig: GSM8K data}
\end{figure}

Now we move from the synthetic parity problem to real world experiment on GSM8K dataset of grade-school math word problems \citep{cobbe2021training}. We observe that: 
\begin{enumerate}[leftmargin=*]
    \item Real-world CoT data also exhibits sparse sequential dependence, leading to a sparser attention pattern in pre-trained models.
    \item Fine-tuning on CoT data further enhances the models' attention sparsity on the input data.
\end{enumerate}

\textbf{Experiment protocols.} In Figure \ref{fig: GSM8K data}, we examine two data types: \texttt{With CoT}, where inputs from GSM8K dataset are concatenated with ground truth answers that include multiple reasoning steps, and \texttt{No CoT}, where inputs are directly concatenated with the final answer. We evaluate two language models: the pre-trained model Qwen2-7B \citep{yang2024qwen2} and the specialized mathematics model Qwen2-7B-Math \citep{qwen2024} which is fine-tuned from Qwen2-7B on a mathematics-specific corpus with CoT data. 
We plot the normalized attention entropy (\Cref{eq: normalized attention entropy}) across different heads. More details can be found in the Appendix \Cref{sec: experiment detail}. 

\textbf{Results.}
Unlike the synthetic parity problem, the sequential dependency of real-world data is hard to measure directly. However, it can be inferred from the attention sparsity of pre-trained models when they process such data as input. As shown in Figure \ref{fig: GSM8K data}, comparing the normalized attention entropy of the same pre-trained model Qwen2-7B on different types of data, we can see that the entropy is lower for \texttt{With CoT} data compared to \texttt{No CoT} data, indicating that real-world CoT data indeed exhibits a sparser structure. Furthermore, on the same \texttt{With CoT} data, Qwen2-Math-7B model demonstrates lower attention entropy compared to Qwen2-7B model, suggesting that fine-tuning on CoT data promotes the development of sparser attention patterns in the model.





\section{Additional Related Work}
\textbf{Parity Learning.} The most relevant work to ours is~\cite{wies2023subtask}, which shows that subtask decomposition enables learning the parity problem with polynomial sample complexity in recurrent neural networks (RNNs). Their learnability results rely on techniques from~\cite{wang2022provablegeneralizationrecurrentneural}, which operate within the NTK linearization regime of RNNs. In contrast, the Transformers analyzed in our work exhibit feature learning and identify the sparse secret set within the attention module. On the positive side, a line of works that studies optimization dynamics of neural networks on parity~\citep{kou2024matchingstatisticalquerylower,barak2023hiddenprogressdeeplearning,edelman2023paretofrontiersneuralfeature,daniely2020learningparitiesneuralnetworks,abbe2023sgdlearningneuralnetworks,abbe2024mergedstaircasepropertynecessarynearly} show that $n^{\Omega(k)}$ samples is sufficient to learn parity, which is close to the statistical query lower bound \citep{kearns1998efficient}. 
{On the negative side, it is well-established that learning parity using gradient descent (GD) requires $\Omega(n^k)$ iterations due to the SQ lower bound \citep{kearns1998efficient}. For SGD learning, \cite{shalevshwartz2017failuresgradientbaseddeeplearning,abbe2020polytimeuniversalitylimitationsdeep} showed that an exponential number of samples is necessary when SGD is hindered by additional noise or when the number of weights updated in each step is constrained. } However, it remains unclear whether standard mini-batch SGD will still need exponential sample complexity. In our paper, we take the alternative approach to consider the training as an online algorithm with bounded memory (parameters) and utilize results from online learning literature to provide a rigorous exponential lower bound \citep{lyu2023tighttimespacelowerbounds,kol2017time}. \cite{hahn2024sensitivefunctionshardtransformers} shows that while parity is easy to represent using Transformers, the sensitivity structure of the function will require the representation to have large weight norms. {A recent work \citep{abbe2024far} conjectured that a distribution is weakly learnable by a Transformer if and only if it has constant globality, which is supported by empirical experiments. The work also highlights the parity problem as a notable special case.}
 
\textbf{Chain-of-Thoughts (CoT).} CoT is the technique to let the model generate a reasoning process before final answers.
CoT prompting has proven effective in enhancing language models' reasoning capabilities~\citep{wei2022chain,zhang2022automaticchainthoughtprompting,wang2023planandsolvepromptingimprovingzeroshot,zhou2023leasttomostpromptingenablescomplex,wang2024chainofthoughtreasoningprompting,creswell2023selectioninference}. Training with CoT data has further improved the model's capability in performing complex reasoning (see~\citet{qwen2024,yue2023mammothbuildingmathgeneralist,yu2024metamathbootstrapmathematicalquestions,kim2023cotcollectionimprovingzeroshot} and reference therein).
Different lines of work have explored the effect of CoT. From the representation theory perspective, works including \cite{feng2024towards,merrill2024expressivepowertransformerschain,li2024chainthoughtempowerstransformers,nowak2024representationalcapacityneurallanguage,wen2024rnnstransformersyetkey} show that CoT can provably expand the representation power of neural architectures including Transformers and RNNs. From the statistical approximation level, prompting with CoT reduces the statistical error \citep{hu2024unveilingstatisticalfoundationschainofthought,prystawski2023thinkstepstepreasoning}. \cite{li2023dissectingchainofthoughtcompositionalityincontext} studies how MLP learns with CoT data. \cite{li2024how} studies how Transformers learn to perform CoT through prompting using gradient descent. We differ from this work as (1) we focus on how Transformer captures the sparse sequential dependency in CoT, which is not reflected in their data modeling; (2) we study zero-shot CoT data directly. \citet{dutta2024thinkstepbystepmechanisticunderstanding,wang2024understandingtransformerperformmultistep} studies pre-trained Transformers' activation on CoT data. They highlight the attention head's role in moving the essential information from the context to the reasoning process, which is consistent with our theoretical insight. Concurrently with our work, \citet{juno2024transformer} theoretically demonstrated that CoT reduces the number of gradient steps required to solve the parity problem. While their analysis focuses on GD with a noisy oracle, our work investigates mini-batch SGD and provides theoretical guarantees on the exponential reduction in sample complexity.


\textbf{Transformer Optimization Dynamics.} Our works fall in the line of works that studies the optimization dynamics of Transformers on synthetic datasets~\citep{pmlr-v202-li23p,chen2024trainingdynamicsmultiheadsoftmax,kim2024transformerslearnnonlinearfeatures,wibisono2024context,chan2022data,wibisono2024how,cole2024provableincontextlearninglinear,sheen2024implicitregularizationgradientflow,chen2024trainingdynamicsmultiheadsoftmax,NEURIPS2023_e359ebe5,nichani2024transformerslearncausalstructure,10636559}. Similar to our works, \cite{wang2024transformersprovablylearnsparse} highlights that Transformers can learn to select sparse critical tokens from the context on linear data. We differ from their work in studying the mini-batch optimization dynamics on nonlinear data and hence establishing sample complexity bounds. 

\section{Conclusion and Future Works.} 
Our work demonstrates that Transformers trained on Chain-of-Thought (CoT) data with sparse sequential dependencies can efficiently learn sparse attention mechanisms, accurately capturing these dependencies while requiring exponentially fewer samples than models trained without CoT. Our current analysis of CoT training assumes that each subsequent token depends on exactly one token in the context and focuses solely on the parity function. A promising future direction is to explore more general scenarios where each token depends on multiple previous tokens and extends to function classes beyond parity.
\section*{Acknowledgements}
Jingzhao Zhang is supported by National Key R\&D Program of China 2024YFA1015800 and Shanghai Qi Zhi Institute Innovation Program.

\bibliographystyle{plainnat}
\bibliography{iclr2025_conference}

\begin{thebibliography}{79}
\providecommand{\natexlab}[1]{#1}
\providecommand{\url}[1]{\texttt{#1}}
\expandafter\ifx\csname urlstyle\endcsname\relax
  \providecommand{\doi}[1]{doi: #1}\else
  \providecommand{\doi}{doi: \begingroup \urlstyle{rm}\Url}\fi

\bibitem[Abbe and Sandon(2018)]{abbe2018provable}
Emmanuel Abbe and Colin Sandon.
\newblock Provable limitations of deep learning.
\newblock \emph{arXiv preprint arXiv:1812.06369}, 2018.

\bibitem[Abbe and Sandon(2020)]{abbe2020polytimeuniversalitylimitationsdeep}
Emmanuel Abbe and Colin Sandon.
\newblock Poly-time universality and limitations of deep learning, 2020.
\newblock URL \url{https://arxiv.org/abs/2001.02992}.

\bibitem[Abbe et~al.(2023)Abbe, Boix-Adsera, and Misiakiewicz]{abbe2023sgdlearningneuralnetworks}
Emmanuel Abbe, Enric Boix-Adsera, and Theodor Misiakiewicz.
\newblock Sgd learning on neural networks: leap complexity and saddle-to-saddle dynamics, 2023.
\newblock URL \url{https://arxiv.org/abs/2302.11055}.

\bibitem[Abbe et~al.(2024{\natexlab{a}})Abbe, Bengio, Lotfi, Sandon, and Saremi]{abbe2024far}
Emmanuel Abbe, Samy Bengio, Aryo Lotfi, Colin Sandon, and Omid Saremi.
\newblock How far can transformers reason? the locality barrier and inductive scratchpad.
\newblock \emph{arXiv preprint arXiv:2406.06467}, 2024{\natexlab{a}}.

\bibitem[Abbe et~al.(2024{\natexlab{b}})Abbe, Boix-Adsera, and Misiakiewicz]{abbe2024mergedstaircasepropertynecessarynearly}
Emmanuel Abbe, Enric Boix-Adsera, and Theodor Misiakiewicz.
\newblock The merged-staircase property: a necessary and nearly sufficient condition for sgd learning of sparse functions on two-layer neural networks, 2024{\natexlab{b}}.
\newblock URL \url{https://arxiv.org/abs/2202.08658}.

\bibitem[Barak et~al.(2023)Barak, Edelman, Goel, Kakade, Malach, and Zhang]{barak2023hiddenprogressdeeplearning}
Boaz Barak, Benjamin~L. Edelman, Surbhi Goel, Sham Kakade, Eran Malach, and Cyril Zhang.
\newblock Hidden progress in deep learning: Sgd learns parities near the computational limit, 2023.
\newblock URL \url{https://arxiv.org/abs/2207.08799}.

\bibitem[Besta et~al.(2024)Besta, Blach, Kubicek, Gerstenberger, Podstawski, Gianinazzi, Gajda, Lehmann, Niewiadomski, Nyczyk, et~al.]{besta2024graph}
Maciej Besta, Nils Blach, Ales Kubicek, Robert Gerstenberger, Michal Podstawski, Lukas Gianinazzi, Joanna Gajda, Tomasz Lehmann, Hubert Niewiadomski, Piotr Nyczyk, et~al.
\newblock Graph of thoughts: Solving elaborate problems with large language models.
\newblock In \emph{Proceedings of the AAAI Conference on Artificial Intelligence}, volume~38, 2024.

\bibitem[Blum et~al.(2003)Blum, Kalai, and Wasserman]{blum2003}
Avrim Blum, Adam Kalai, and Hal Wasserman.
\newblock Noise-tolerant learning, the parity problem, and the statistical query model.
\newblock \emph{J. ACM}, 50\penalty0 (4):\penalty0 506–519, jul 2003.
\newblock \doi{10.1145/792538.792543}.

\bibitem[Chan et~al.(2022)Chan, Santoro, Lampinen, Wang, Singh, Richemond, McClelland, and Hill]{chan2022data}
Stephanie Chan, Adam Santoro, Andrew Lampinen, Jane Wang, Aaditya Singh, Pierre Richemond, James McClelland, and Felix Hill.
\newblock Data distributional properties drive emergent in-context learning in transformers.
\newblock \emph{Advances in Neural Information Processing Systems}, 2022.

\bibitem[Chen et~al.(2024)Chen, Sheen, Wang, and Yang]{chen2024trainingdynamicsmultiheadsoftmax}
Siyu Chen, Heejune Sheen, Tianhao Wang, and Zhuoran Yang.
\newblock Training dynamics of multi-head softmax attention for in-context learning: Emergence, convergence, and optimality, 2024.
\newblock URL \url{https://arxiv.org/abs/2402.19442}.

\bibitem[Chowdhery et~al.(2023)Chowdhery, Narang, Devlin, Bosma, Mishra, Roberts, Barham, Chung, Sutton, Gehrmann, Schuh, Shi, Tsvyashchenko, Maynez, Rao, Barnes, Tay, Shazeer, Prabhakaran, Reif, Du, Hutchinson, Pope, Bradbury, Austin, Isard, Gur-Ari, Yin, Duke, Levskaya, Ghemawat, Dev, Michalewski, Garcia, Misra, Robinson, Fedus, Zhou, Ippolito, Luan, Lim, Zoph, Spiridonov, Sepassi, Dohan, Agrawal, Omernick, Dai, Pillai, Pellat, Lewkowycz, Moreira, Child, Polozov, Lee, Zhou, Wang, Saeta, Diaz, Firat, Catasta, Wei, Meier-Hellstern, Eck, Dean, Petrov, and Fiedel]{JMLR:v24:22-1144}
Aakanksha Chowdhery, Sharan Narang, Jacob Devlin, Maarten Bosma, Gaurav Mishra, Adam Roberts, Paul Barham, Hyung~Won Chung, Charles Sutton, Sebastian Gehrmann, Parker Schuh, Kensen Shi, Sasha Tsvyashchenko, Joshua Maynez, Abhishek Rao, Parker Barnes, Yi~Tay, Noam Shazeer, Vinodkumar Prabhakaran, Emily Reif, Nan Du, Ben Hutchinson, Reiner Pope, James Bradbury, Jacob Austin, Michael Isard, Guy Gur-Ari, Pengcheng Yin, Toju Duke, Anselm Levskaya, Sanjay Ghemawat, Sunipa Dev, Henryk Michalewski, Xavier Garcia, Vedant Misra, Kevin Robinson, Liam Fedus, Denny Zhou, Daphne Ippolito, David Luan, Hyeontaek Lim, Barret Zoph, Alexander Spiridonov, Ryan Sepassi, David Dohan, Shivani Agrawal, Mark Omernick, Andrew~M. Dai, Thanumalayan~Sankaranarayana Pillai, Marie Pellat, Aitor Lewkowycz, Erica Moreira, Rewon Child, Oleksandr Polozov, Katherine Lee, Zongwei Zhou, Xuezhi Wang, Brennan Saeta, Mark Diaz, Orhan Firat, Michele Catasta, Jason Wei, Kathy Meier-Hellstern, Douglas Eck, Jeff Dean, Slav Petrov, and Noah Fiedel.
\newblock Palm: Scaling language modeling with pathways.
\newblock \emph{Journal of Machine Learning Research}, 24\penalty0 (240), 2023.

\bibitem[Chung et~al.(2024)Chung, Hou, Longpre, Zoph, Tay, Fedus, Li, Wang, Dehghani, Brahma, Webson, Gu, Dai, Suzgun, Chen, Chowdhery, Castro-Ros, Pellat, Robinson, Valter, Narang, Mishra, Yu, Zhao, Huang, Dai, Yu, Petrov, Chi, Dean, Devlin, Roberts, Zhou, Le, and Wei]{JMLR:v25:23-0870}
Hyung~Won Chung, Le~Hou, Shayne Longpre, Barret Zoph, Yi~Tay, William Fedus, Yunxuan Li, Xuezhi Wang, Mostafa Dehghani, Siddhartha Brahma, Albert Webson, Shixiang~Shane Gu, Zhuyun Dai, Mirac Suzgun, Xinyun Chen, Aakanksha Chowdhery, Alex Castro-Ros, Marie Pellat, Kevin Robinson, Dasha Valter, Sharan Narang, Gaurav Mishra, Adams Yu, Vincent Zhao, Yanping Huang, Andrew Dai, Hongkun Yu, Slav Petrov, Ed~H. Chi, Jeff Dean, Jacob Devlin, Adam Roberts, Denny Zhou, Quoc~V. Le, and Jason Wei.
\newblock Scaling instruction-finetuned language models.
\newblock \emph{Journal of Machine Learning Research}, 25\penalty0 (70), 2024.

\bibitem[Cobbe et~al.(2021)Cobbe, Kosaraju, Bavarian, Chen, Jun, Kaiser, Plappert, Tworek, Hilton, Nakano, et~al.]{cobbe2021training}
Karl Cobbe, Vineet Kosaraju, Mohammad Bavarian, Mark Chen, Heewoo Jun, Lukasz Kaiser, Matthias Plappert, Jerry Tworek, Jacob Hilton, Reiichiro Nakano, et~al.
\newblock Training verifiers to solve math word problems.
\newblock \emph{arXiv preprint arXiv:2110.14168}, 2021.

\bibitem[Cole et~al.(2024)Cole, Lu, O'Neill, and Zhang]{cole2024provableincontextlearninglinear}
Frank Cole, Yulong Lu, Riley O'Neill, and Tianhao Zhang.
\newblock Provable in-context learning of linear systems and linear elliptic pdes with transformers, 2024.
\newblock URL \url{https://arxiv.org/abs/2409.12293}.

\bibitem[Creswell et~al.(2023)Creswell, Shanahan, and Higgins]{creswell2023selectioninference}
Antonia Creswell, Murray Shanahan, and Irina Higgins.
\newblock Selection-inference: Exploiting large language models for interpretable logical reasoning.
\newblock In \emph{The Eleventh International Conference on Learning Representations}, 2023.

\bibitem[Daniely and Malach(2020)]{daniely2020learningparitiesneuralnetworks}
Amit Daniely and Eran Malach.
\newblock Learning parities with neural networks, 2020.
\newblock URL \url{https://arxiv.org/abs/2002.07400}.

\bibitem[Dutta et~al.(2024)Dutta, Singh, Chakrabarti, and Chakraborty]{dutta2024thinkstepbystepmechanisticunderstanding}
Subhabrata Dutta, Joykirat Singh, Soumen Chakrabarti, and Tanmoy Chakraborty.
\newblock How to think step-by-step: A mechanistic understanding of chain-of-thought reasoning, 2024.
\newblock URL \url{https://arxiv.org/abs/2402.18312}.

\bibitem[Edelman et~al.(2023)Edelman, Goel, Kakade, Malach, and Zhang]{edelman2023paretofrontiersneuralfeature}
Benjamin~L. Edelman, Surbhi Goel, Sham Kakade, Eran Malach, and Cyril Zhang.
\newblock Pareto frontiers in neural feature learning: Data, compute, width, and luck, 2023.
\newblock URL \url{https://arxiv.org/abs/2309.03800}.

\bibitem[Feng et~al.(2023)Feng, Zhang, Gu, Ye, He, and Wang]{feng2023towards}
Guhao Feng, Bohang Zhang, Yuntian Gu, Haotian Ye, Di~He, and Liwei Wang.
\newblock Towards revealing the mystery behind chain of thought: A theoretical perspective.
\newblock In \emph{Thirty-seventh Conference on Neural Information Processing Systems}, 2023.

\bibitem[Feng et~al.(2024)Feng, Zhang, Gu, Ye, He, and Wang]{feng2024towards}
Guhao Feng, Bohang Zhang, Yuntian Gu, Haotian Ye, Di~He, and Liwei Wang.
\newblock Towards revealing the mystery behind chain of thought: a theoretical perspective.
\newblock In \emph{Advances in Neural Information Processing Systems}, 2024.

\bibitem[Fu et~al.(2023)Fu, Peng, Sabharwal, Clark, and Khot]{fu2023complexitybased}
Yao Fu, Hao Peng, Ashish Sabharwal, Peter Clark, and Tushar Khot.
\newblock Complexity-based prompting for multi-step reasoning.
\newblock In \emph{The Eleventh International Conference on Learning Representations}, 2023.

\bibitem[Hahn and Rofin(2024)]{hahn2024sensitivefunctionshardtransformers}
Michael Hahn and Mark Rofin.
\newblock Why are sensitive functions hard for transformers?, 2024.
\newblock URL \url{https://arxiv.org/abs/2402.09963}.

\bibitem[Hoffmann et~al.(2022)Hoffmann, Borgeaud, Mensch, Buchatskaya, Cai, Rutherford, de~las Casas, Hendricks, Welbl, Clark, Hennigan, Noland, Millican, van~den Driessche, Damoc, Guy, Osindero, Simonyan, Elsen, Vinyals, Rae, and Sifre]{hoffmann2022an}
Jordan Hoffmann, Sebastian Borgeaud, Arthur Mensch, Elena Buchatskaya, Trevor Cai, Eliza Rutherford, Diego de~las Casas, Lisa~Anne Hendricks, Johannes Welbl, Aidan Clark, Tom Hennigan, Eric Noland, Katherine Millican, George van~den Driessche, Bogdan Damoc, Aurelia Guy, Simon Osindero, Karen Simonyan, Erich Elsen, Oriol Vinyals, Jack~William Rae, and Laurent Sifre.
\newblock An empirical analysis of compute-optimal large language model training.
\newblock In \emph{Advances in Neural Information Processing Systems}, 2022.

\bibitem[Hu et~al.(2024)Hu, Zhang, Chen, and Yang]{hu2024unveilingstatisticalfoundationschainofthought}
Xinyang Hu, Fengzhuo Zhang, Siyu Chen, and Zhuoran Yang.
\newblock Unveiling the statistical foundations of chain-of-thought prompting methods, 2024.
\newblock URL \url{https://arxiv.org/abs/2408.14511}.

\bibitem[Huang et~al.(2017)Huang, Liu, Van Der~Maaten, and Weinberger]{huang2017densely}
Gao Huang, Zhuang Liu, Laurens Van Der~Maaten, and Kilian~Q Weinberger.
\newblock Densely connected convolutional networks.
\newblock In \emph{Proceedings of the IEEE conference on computer vision and pattern recognition}, 2017.

\bibitem[Kaplan et~al.(2020)Kaplan, McCandlish, Henighan, Brown, Chess, Child, Gray, Radford, Wu, and Amodei]{kaplan2020scaling}
Jared Kaplan, Sam McCandlish, Tom Henighan, Tom~B Brown, Benjamin Chess, Rewon Child, Scott Gray, Alec Radford, Jeffrey Wu, and Dario Amodei.
\newblock Scaling laws for neural language models.
\newblock \emph{arXiv preprint arXiv:2001.08361}, 2020.

\bibitem[Kearns(1998)]{kearns1998efficient}
Michael Kearns.
\newblock Efficient noise-tolerant learning from statistical queries.
\newblock \emph{Journal of the ACM (JACM)}, 45\penalty0 (6):\penalty0 983--1006, 1998.

\bibitem[Kim and Suzuki(2024{\natexlab{a}})]{juno2024transformer}
Juno Kim and Taiji Suzuki.
\newblock Transformers provably solve parity efficiently with chain of thought.
\newblock 2024{\natexlab{a}}.
\newblock URL \url{https://arxiv.org/abs/2410.08633}.

\bibitem[Kim and Suzuki(2024{\natexlab{b}})]{kim2024transformerslearnnonlinearfeatures}
Juno Kim and Taiji Suzuki.
\newblock Transformers learn nonlinear features in context: Nonconvex mean-field dynamics on the attention landscape, 2024{\natexlab{b}}.
\newblock URL \url{https://arxiv.org/abs/2402.01258}.

\bibitem[Kim et~al.(2023)Kim, Joo, Kim, Jang, Ye, Shin, and Seo]{kim2023cotcollectionimprovingzeroshot}
Seungone Kim, Se~June Joo, Doyoung Kim, Joel Jang, Seonghyeon Ye, Jamin Shin, and Minjoon Seo.
\newblock The cot collection: Improving zero-shot and few-shot learning of language models via chain-of-thought fine-tuning, 2023.
\newblock URL \url{https://arxiv.org/abs/2305.14045}.

\bibitem[Kingma(2014)]{kingma2014adam}
Diederik~P Kingma.
\newblock Adam: A method for stochastic optimization.
\newblock \emph{arXiv preprint arXiv:1412.6980}, 2014.

\bibitem[Kojima et~al.(2022)Kojima, Gu, Reid, Matsuo, and Iwasawa]{kojima2022large}
Takeshi Kojima, Shixiang~Shane Gu, Machel Reid, Yutaka Matsuo, and Yusuke Iwasawa.
\newblock Large language models are zero-shot reasoners.
\newblock In Alice~H. Oh, Alekh Agarwal, Danielle Belgrave, and Kyunghyun Cho, editors, \emph{Advances in Neural Information Processing Systems}, 2022.

\bibitem[Kol et~al.(2017)Kol, Raz, and Tal]{kol2017time}
Gillat Kol, Ran Raz, and Avishay Tal.
\newblock Time-space hardness of learning sparse parities.
\newblock In \emph{Proceedings of the 49th Annual ACM SIGACT Symposium on Theory of Computing}, pages 1067--1080, 2017.

\bibitem[Kou et~al.(2024)Kou, Chen, Gu, and Kakade]{kou2024matchingstatisticalquerylower}
Yiwen Kou, Zixiang Chen, Quanquan Gu, and Sham~M. Kakade.
\newblock Matching the statistical query lower bound for k-sparse parity problems with stochastic gradient descent, 2024.
\newblock URL \url{https://arxiv.org/abs/2404.12376}.

\bibitem[Lewkowycz et~al.(2022)Lewkowycz, Andreassen, Dohan, Dyer, Michalewski, Ramasesh, Slone, Anil, Schlag, Gutman-Solo, Wu, Neyshabur, Gur-Ari, and Misra]{lewkowycz2022solving}
Aitor Lewkowycz, Anders~Johan Andreassen, David Dohan, Ethan Dyer, Henryk Michalewski, Vinay~Venkatesh Ramasesh, Ambrose Slone, Cem Anil, Imanol Schlag, Theo Gutman-Solo, Yuhuai Wu, Behnam Neyshabur, Guy Gur-Ari, and Vedant Misra.
\newblock Solving quantitative reasoning problems with language models.
\newblock In \emph{Advances in Neural Information Processing Systems}, 2022.

\bibitem[Li et~al.(2024{\natexlab{a}})Li, Wang, Lu, Cui, and Chen]{li2024how}
Hongkang Li, Meng Wang, Songtao Lu, Xiaodong Cui, and Pin-Yu Chen.
\newblock How do nonlinear transformers acquire generalization-guaranteed cot ability?
\newblock In \emph{High-dimensional Learning Dynamics 2024: The Emergence of Structure and Reasoning}, 2024{\natexlab{a}}.
\newblock URL \url{https://openreview.net/forum?id=8pM8IrT6Xo}.

\bibitem[Li et~al.(2024{\natexlab{b}})Li, Wang, Zhang, Liu, and Chen]{10636559}
Hongkang Li, Meng Wang, Shuai Zhang, Sijia Liu, and Pin-Yu Chen.
\newblock Learning on transformers is provable low-rank and sparse: A one-layer analysis.
\newblock In \emph{2024 IEEE 13rd Sensor Array and Multichannel Signal Processing Workshop (SAM)}, pages 1--5, 2024{\natexlab{b}}.
\newblock \doi{10.1109/SAM60225.2024.10636559}.

\bibitem[Li et~al.(2023{\natexlab{a}})Li, Sreenivasan, Giannou, Papailiopoulos, and Oymak]{li2023dissectingchainofthoughtcompositionalityincontext}
Yingcong Li, Kartik Sreenivasan, Angeliki Giannou, Dimitris Papailiopoulos, and Samet Oymak.
\newblock Dissecting chain-of-thought: Compositionality through in-context filtering and learning, 2023{\natexlab{a}}.
\newblock URL \url{https://arxiv.org/abs/2305.18869}.

\bibitem[Li et~al.(2023{\natexlab{b}})Li, Li, and Risteski]{pmlr-v202-li23p}
Yuchen Li, Yuanzhi Li, and Andrej Risteski.
\newblock How do transformers learn topic structure: Towards a mechanistic understanding.
\newblock In Andreas Krause, Emma Brunskill, Kyunghyun Cho, Barbara Engelhardt, Sivan Sabato, and Jonathan Scarlett, editors, \emph{Proceedings of the 40th International Conference on Machine Learning}, volume 202 of \emph{Proceedings of Machine Learning Research}, pages 19689--19729. PMLR, 23--29 Jul 2023{\natexlab{b}}.
\newblock URL \url{https://proceedings.mlr.press/v202/li23p.html}.

\bibitem[Li et~al.(2024{\natexlab{c}})Li, Liu, Zhou, and Ma]{li2024chain}
Zhiyuan Li, Hong Liu, Denny Zhou, and Tengyu Ma.
\newblock Chain of thought empowers transformers to solve inherently serial problems.
\newblock In \emph{The Twelfth International Conference on Learning Representations}, 2024{\natexlab{c}}.

\bibitem[Li et~al.(2024{\natexlab{d}})Li, Liu, Zhou, and Ma]{li2024chainthoughtempowerstransformers}
Zhiyuan Li, Hong Liu, Denny Zhou, and Tengyu Ma.
\newblock Chain of thought empowers transformers to solve inherently serial problems, 2024{\natexlab{d}}.
\newblock URL \url{https://arxiv.org/abs/2402.12875}.

\bibitem[Lightman et~al.(2024)Lightman, Kosaraju, Burda, Edwards, Baker, Lee, Leike, Schulman, Sutskever, and Cobbe]{lightman2024lets}
Hunter Lightman, Vineet Kosaraju, Yuri Burda, Harrison Edwards, Bowen Baker, Teddy Lee, Jan Leike, John Schulman, Ilya Sutskever, and Karl Cobbe.
\newblock Let's verify step by step.
\newblock In \emph{The Twelfth International Conference on Learning Representations}, 2024.

\bibitem[Lyu et~al.(2023)Lyu, Tal, Wu, and Yang]{lyu2023tighttimespacelowerbounds}
Xin Lyu, Avishay Tal, Hongxun Wu, and Junzhao Yang.
\newblock Tight time-space lower bounds for constant-pass learning, 2023.
\newblock URL \url{https://arxiv.org/abs/2310.08070}.

\bibitem[Malach and Shalev-Shwartz(2022)]{malach2022hardness}
Eran Malach and Shai Shalev-Shwartz.
\newblock When hardness of approximation meets hardness of learning.
\newblock \emph{Journal of Machine Learning Research}, 23, 2022.

\bibitem[Merrill and Sabharwal(2024)]{merrill2024expressivepowertransformerschain}
William Merrill and Ashish Sabharwal.
\newblock The expressive power of transformers with chain of thought, 2024.
\newblock URL \url{https://arxiv.org/abs/2310.07923}.

\bibitem[Nichani et~al.(2024)Nichani, Damian, and Lee]{nichani2024transformerslearncausalstructure}
Eshaan Nichani, Alex Damian, and Jason~D. Lee.
\newblock How transformers learn causal structure with gradient descent, 2024.
\newblock URL \url{https://arxiv.org/abs/2402.14735}.

\bibitem[Nowak et~al.(2024)Nowak, Svete, Butoi, and Cotterell]{nowak2024representationalcapacityneurallanguage}
Franz Nowak, Anej Svete, Alexandra Butoi, and Ryan Cotterell.
\newblock On the representational capacity of neural language models with chain-of-thought reasoning, 2024.
\newblock URL \url{https://arxiv.org/abs/2406.14197}.

\bibitem[Nye et~al.(2022)Nye, Andreassen, Gur-Ari, Michalewski, Austin, Bieber, Dohan, Lewkowycz, Bosma, Luan, et~al.]{nye2022show}
Maxwell Nye, Anders~Johan Andreassen, Guy Gur-Ari, Henryk Michalewski, Jacob Austin, David Bieber, David Dohan, Aitor Lewkowycz, Maarten Bosma, David Luan, et~al.
\newblock Show your work: Scratchpads for intermediate computation with language models.
\newblock In \emph{Deep Learning for Code Workshop}, 2022.

\bibitem[Paszke et~al.(2019)Paszke, Gross, Massa, Lerer, Bradbury, Chanan, Killeen, Lin, Gimelshein, Antiga, et~al.]{paszke2019pytorch}
Adam Paszke, Sam Gross, Francisco Massa, Adam Lerer, James Bradbury, Gregory Chanan, Trevor Killeen, Zeming Lin, Natalia Gimelshein, Luca Antiga, et~al.
\newblock Pytorch: An imperative style, high-performance deep learning library.
\newblock In \emph{Advances in neural information processing systems}, 2019.

\bibitem[Prystawski et~al.(2023)Prystawski, Li, and Goodman]{prystawski2023thinkstepstepreasoning}
Ben Prystawski, Michael~Y. Li, and Noah~D. Goodman.
\newblock Why think step by step? reasoning emerges from the locality of experience, 2023.
\newblock URL \url{https://arxiv.org/abs/2304.03843}.

\bibitem[Prystawski et~al.(2024)Prystawski, Li, and Goodman]{prystawski2024think}
Ben Prystawski, Michael Li, and Noah Goodman.
\newblock Why think step by step? reasoning emerges from the locality of experience.
\newblock \emph{Advances in Neural Information Processing Systems}, 36, 2024.

\bibitem[Qwen(2024)]{qwen2024}
Team Qwen.
\newblock Introducing qwen2-math.
\newblock \url{https://qwenlm.github.io/blog/qwen2-math/}, 2024.
\newblock Accessed: 2024-09-28.

\bibitem[Radford et~al.(2019)Radford, Wu, Child, Luan, Amodei, Sutskever, et~al.]{radford2019language}
Alec Radford, Jeffrey Wu, Rewon Child, David Luan, Dario Amodei, Ilya Sutskever, et~al.
\newblock Language models are unsupervised multitask learners.
\newblock \emph{OpenAI blog}, 1\penalty0 (8):\penalty0 9, 2019.

\bibitem[Saparov and He(2023)]{saparov2023language}
Abulhair Saparov and He~He.
\newblock Language models are greedy reasoners: A systematic formal analysis of chain-of-thought.
\newblock In \emph{The Eleventh International Conference on Learning Representations}, 2023.

\bibitem[Shalev-Shwartz et~al.(2017)Shalev-Shwartz, Shamir, and Shammah]{shalevshwartz2017failuresgradientbaseddeeplearning}
Shai Shalev-Shwartz, Ohad Shamir, and Shaked Shammah.
\newblock Failures of gradient-based deep learning, 2017.
\newblock URL \url{https://arxiv.org/abs/1703.07950}.

\bibitem[Sheen et~al.(2024)Sheen, Chen, Wang, and Zhou]{sheen2024implicitregularizationgradientflow}
Heejune Sheen, Siyu Chen, Tianhao Wang, and Harrison~H. Zhou.
\newblock Implicit regularization of gradient flow on one-layer softmax attention, 2024.
\newblock URL \url{https://arxiv.org/abs/2403.08699}.

\bibitem[Tian et~al.(2023)Tian, Wang, Chen, and Du]{NEURIPS2023_e359ebe5}
Yuandong Tian, Yiping Wang, Beidi Chen, and Simon~S Du.
\newblock Scan and snap: Understanding training dynamics and token composition in 1-layer transformer.
\newblock In A.~Oh, T.~Naumann, A.~Globerson, K.~Saenko, M.~Hardt, and S.~Levine, editors, \emph{Advances in Neural Information Processing Systems}, volume~36, pages 71911--71947. Curran Associates, Inc., 2023.
\newblock URL \url{https://proceedings.neurips.cc/paper_files/paper/2023/file/e359ebe56ba306b674e8952349c6049e-Paper-Conference.pdf}.

\bibitem[Vershynin(2018)]{Vershynin2018HDP}
Roman Vershynin.
\newblock \emph{High-Dimensional Probability: An Introduction with Applications in Data Science}, volume~47 of \emph{Cambridge Series in Statistical and Probabilistic Mathematics}.
\newblock Cambridge University Press, 2018.
\newblock ISBN 978-1-108-41590-3.
\newblock \doi{10.1017/9781108231596}.
\newblock URL \url{https://doi.org/10.1017/9781108231596}.

\bibitem[Wang et~al.(2023{\natexlab{a}})Wang, Xu, Lan, Hu, Lan, Lee, and Lim]{wang2023planandsolvepromptingimprovingzeroshot}
Lei Wang, Wanyu Xu, Yihuai Lan, Zhiqiang Hu, Yunshi Lan, Roy Ka-Wei Lee, and Ee-Peng Lim.
\newblock Plan-and-solve prompting: Improving zero-shot chain-of-thought reasoning by large language models, 2023{\natexlab{a}}.
\newblock URL \url{https://arxiv.org/abs/2305.04091}.

\bibitem[Wang et~al.(2022{\natexlab{a}})Wang, Shen, Hu, and Cao]{wang2022provablegeneralizationrecurrentneural}
Lifu Wang, Bo~Shen, Bo~Hu, and Xing Cao.
\newblock On the provable generalization of recurrent neural networks, 2022{\natexlab{a}}.
\newblock URL \url{https://arxiv.org/abs/2109.14142}.

\bibitem[Wang and Zhou(2024)]{wang2024chainofthoughtreasoningprompting}
Xuezhi Wang and Denny Zhou.
\newblock Chain-of-thought reasoning without prompting, 2024.
\newblock URL \url{https://arxiv.org/abs/2402.10200}.

\bibitem[Wang et~al.(2022{\natexlab{b}})Wang, Wei, Schuurmans, Le, Chi, and Zhou]{wang2022rationale}
Xuezhi Wang, Jason Wei, Dale Schuurmans, Quoc Le, Ed~Chi, and Denny Zhou.
\newblock Rationale-augmented ensembles in language models.
\newblock \emph{arXiv preprint arXiv:2207.00747}, 2022{\natexlab{b}}.

\bibitem[Wang et~al.(2023{\natexlab{b}})Wang, Wei, Schuurmans, Le, Chi, Narang, Chowdhery, and Zhou]{wang2023selfconsistency}
Xuezhi Wang, Jason Wei, Dale Schuurmans, Quoc~V Le, Ed~H. Chi, Sharan Narang, Aakanksha Chowdhery, and Denny Zhou.
\newblock Self-consistency improves chain of thought reasoning in language models.
\newblock In \emph{The Eleventh International Conference on Learning Representations}, 2023{\natexlab{b}}.

\bibitem[Wang et~al.(2024{\natexlab{a}})Wang, Wang, Zhang, Zhou, Jin, Hu, Sun, Li, Zhang, and Xu]{wang2024understandingtransformerperformmultistep}
Zhiwei Wang, Yunji Wang, Zhongwang Zhang, Zhangchen Zhou, Hui Jin, Tianyang Hu, Jiacheng Sun, Zhenguo Li, Yaoyu Zhang, and Zhi-Qin~John Xu.
\newblock Towards understanding how transformer perform multi-step reasoning with matching operation, 2024{\natexlab{a}}.
\newblock URL \url{https://arxiv.org/abs/2405.15302}.

\bibitem[Wang et~al.(2024{\natexlab{b}})Wang, Wei, Hsu, and Lee]{wang2024transformersprovablylearnsparse}
Zixuan Wang, Stanley Wei, Daniel Hsu, and Jason~D. Lee.
\newblock Transformers provably learn sparse token selection while fully-connected nets cannot, 2024{\natexlab{b}}.
\newblock URL \url{https://arxiv.org/abs/2406.06893}.

\bibitem[Wei et~al.(2022)Wei, Wang, Schuurmans, Bosma, brian ichter, Xia, Chi, Le, and Zhou]{wei2022chain}
Jason Wei, Xuezhi Wang, Dale Schuurmans, Maarten Bosma, brian ichter, Fei Xia, Ed~H. Chi, Quoc~V Le, and Denny Zhou.
\newblock Chain of thought prompting elicits reasoning in large language models.
\newblock In \emph{Advances in Neural Information Processing Systems}, 2022.

\bibitem[Wen et~al.(2024)Wen, Dang, and Lyu]{wen2024rnnstransformersyetkey}
Kaiyue Wen, Xingyu Dang, and Kaifeng Lyu.
\newblock Rnns are not transformers (yet): The key bottleneck on in-context retrieval, 2024.
\newblock URL \url{https://arxiv.org/abs/2402.18510}.

\bibitem[Wibisono and Wang()]{wibisono2024context}
Kevin~Christian Wibisono and Yixin Wang.
\newblock In-context learning from training on unstructured data: The role of co-occurrence, positional information, and training data structure.
\newblock In \emph{ICML 2024 Workshop on Theoretical Foundations of Foundation Models}.

\bibitem[Wibisono and Wang(2024)]{wibisono2024how}
Kevin~Christian Wibisono and Yixin Wang.
\newblock How in-context learning emerges from training on unstructured data: The role of co-occurrence, positional information, and noise structures.
\newblock In \emph{ICML 2024 Workshop on Theoretical Foundations of Foundation Models}, 2024.
\newblock URL \url{https://openreview.net/forum?id=55DHL6rJwK}.

\bibitem[Wies et~al.(2023)Wies, Levine, and Shashua]{wies2023subtask}
Noam Wies, Yoav Levine, and Amnon Shashua.
\newblock Sub-task decomposition enables learning in sequence to sequence tasks.
\newblock In \emph{The Eleventh International Conference on Learning Representations}, 2023.

\bibitem[Yang et~al.(2024)Yang, Yang, Hui, Zheng, Yu, Zhou, Li, Li, Liu, Huang, et~al.]{yang2024qwen2}
An~Yang, Baosong Yang, Binyuan Hui, Bo~Zheng, Bowen Yu, Chang Zhou, Chengpeng Li, Chengyuan Li, Dayiheng Liu, Fei Huang, et~al.
\newblock Qwen2 technical report.
\newblock \emph{arXiv preprint arXiv:2407.10671}, 2024.

\bibitem[Yao et~al.(2024)Yao, Yu, Zhao, Shafran, Griffiths, Cao, and Narasimhan]{yao2024tree}
Shunyu Yao, Dian Yu, Jeffrey Zhao, Izhak Shafran, Tom Griffiths, Yuan Cao, and Karthik Narasimhan.
\newblock Tree of thoughts: Deliberate problem solving with large language models.
\newblock \emph{Advances in Neural Information Processing Systems}, 36, 2024.

\bibitem[Yu et~al.(2024)Yu, Jiang, Shi, Yu, Liu, Zhang, Kwok, Li, Weller, and Liu]{yu2024metamathbootstrapmathematicalquestions}
Longhui Yu, Weisen Jiang, Han Shi, Jincheng Yu, Zhengying Liu, Yu~Zhang, James~T. Kwok, Zhenguo Li, Adrian Weller, and Weiyang Liu.
\newblock Metamath: Bootstrap your own mathematical questions for large language models, 2024.
\newblock URL \url{https://arxiv.org/abs/2309.12284}.

\bibitem[Yue et~al.(2023)Yue, Qu, Zhang, Fu, Huang, Sun, Su, and Chen]{yue2023mammothbuildingmathgeneralist}
Xiang Yue, Xingwei Qu, Ge~Zhang, Yao Fu, Wenhao Huang, Huan Sun, Yu~Su, and Wenhu Chen.
\newblock Mammoth: Building math generalist models through hybrid instruction tuning, 2023.
\newblock URL \url{https://arxiv.org/abs/2309.05653}.

\bibitem[Zelikman et~al.(2022)Zelikman, Wu, Mu, and Goodman]{zelikman2022star}
Eric Zelikman, Yuhuai Wu, Jesse Mu, and Noah Goodman.
\newblock {ST}ar: Bootstrapping reasoning with reasoning.
\newblock In \emph{Advances in Neural Information Processing Systems}, 2022.

\bibitem[Zhang et~al.(2022)Zhang, Zhang, Li, and Smola]{zhang2022automaticchainthoughtprompting}
Zhuosheng Zhang, Aston Zhang, Mu~Li, and Alex Smola.
\newblock Automatic chain of thought prompting in large language models, 2022.
\newblock URL \url{https://arxiv.org/abs/2210.03493}.

\bibitem[Zhang et~al.(2023)Zhang, Zhang, Li, and Smola]{zhang2023automatic}
Zhuosheng Zhang, Aston Zhang, Mu~Li, and Alex Smola.
\newblock Automatic chain of thought prompting in large language models.
\newblock In \emph{The Eleventh International Conference on Learning Representations}, 2023.

\bibitem[Zhou et~al.(2023{\natexlab{a}})Zhou, Sch{\"a}rli, Hou, Wei, Scales, Wang, Schuurmans, Cui, Bousquet, Le, and Chi]{zhou2023leasttomost}
Denny Zhou, Nathanael Sch{\"a}rli, Le~Hou, Jason Wei, Nathan Scales, Xuezhi Wang, Dale Schuurmans, Claire Cui, Olivier Bousquet, Quoc~V Le, and Ed~H. Chi.
\newblock Least-to-most prompting enables complex reasoning in large language models.
\newblock In \emph{The Eleventh International Conference on Learning Representations}, 2023{\natexlab{a}}.

\bibitem[Zhou et~al.(2023{\natexlab{b}})Zhou, Schärli, Hou, Wei, Scales, Wang, Schuurmans, Cui, Bousquet, Le, and Chi]{zhou2023leasttomostpromptingenablescomplex}
Denny Zhou, Nathanael Schärli, Le~Hou, Jason Wei, Nathan Scales, Xuezhi Wang, Dale Schuurmans, Claire Cui, Olivier Bousquet, Quoc Le, and Ed~Chi.
\newblock Least-to-most prompting enables complex reasoning in large language models, 2023{\natexlab{b}}.
\newblock URL \url{https://arxiv.org/abs/2205.10625}.

\end{thebibliography}

\appendix
\section{Additional Related Work}
\label{sec:app:related}

\textbf{Transformer Optimization Dynamics.} Our works fall in the line of works that studies the optimization dynamics of Transformers on synthetic datasets~\citep{pmlr-v202-li23p,chen2024trainingdynamicsmultiheadsoftmax,kim2024transformerslearnnonlinearfeatures,wibisono2024context,chan2022data,wibisono2024how,cole2024provableincontextlearninglinear,sheen2024implicitregularizationgradientflow,chen2024trainingdynamicsmultiheadsoftmax,NEURIPS2023_e359ebe5,nichani2024transformerslearncausalstructure,10636559}. Similar to our works, \cite{wang2024transformersprovablylearnsparse} highlights that Transformers can learn to select sparse critical tokens from the context on linear data. We differ from their work in studying the mini-batch optimization dynamics on nonlinear data and hence establishing sample complexity bounds. 

\section{Omitted Proof}

\subsection{Notation and Assumptions.}

We will denote a sequence of binary variables as $\bsequence[1], \ldots, \bsequence[T]$ with $\bsequence[i] \in \{0, 1\}$. We will use $[j]$ to denote variables corresponding to the $j-$th token dimension without otherwise specified.

\subsubsection{Training Specification}

\paragraph{Training Update} We will consider the following 1-pass SGD with batch size $B$:

\begin{align*}
    L^{(t)} &=  \sum_{s = 1}^B \sum_{i = n + 1}^{n + k} \ell( \transformer[w^{(t)}](\bsequence^{(t,s)})[i], \bsequence^{(t,s)}[i + 1]).\\
    w^{(t + 1)} &= w^{(t)} - \eta_t \frac{1}{B}  \nabla_{w} L^{(t)}.
\end{align*}

Here $w$ includes $\qk$ and $\weight$. We will leave $\head$ unchanged for the simplicity of analysis.

\subsection{Representation Theory}
\label{app:thm:rep}

\begin{proof}[Proof of~\Cref{thm:rep}]

We will set $d = \Theta(k \log(n / \delta))$ and $m = k+1$. We will use $p(x, y)$ as shorthand for $2 (x - 1) + y + 1$.

Define embedding matrix $M$ as follows
\begin{align*}
    M_{p(i, b)} = e_{S[i], b}^T.
\end{align*}
for $\{S[1], \ldots, S[p]\} = \mathcal{S}$. Then $M \in \R^{2k \times d}$.

\begin{lemma}
\label{lem:embedmatrix}
For large enough $n$, with probability $1 - \delta$, it holds that
\begin{align*}
    \| M^T \|_{\mathrm{op}} = O(1). \quad
    \lambda_{\mathrm{min}}(MM^T) > 1/12.
\end{align*}
\end{lemma}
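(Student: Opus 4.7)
Both conclusions reduce to a single spectral concentration statement: if $\|MM^T - I_{2k}\|_{\mathrm{op}} \leq 1/2$, then $\|M^T\|^2 = \|MM^T\| \leq 3/2$, giving the operator norm bound, and $\lambda_{\mathrm{min}}(MM^T) \geq 1/2 > 1/12$, giving the eigenvalue bound. So the plan is to establish this deviation bound with probability at least $1-\delta$. Observe that the $2k$ rows $\{e_{S[i],b}\}$ of $M$ are sampled independently, each with i.i.d.\ entries uniform on $\{\pm 1/\sqrt{d}\}$, so the entries of $M$ are themselves i.i.d.\ $\pm 1/\sqrt{d}$; in particular the diagonal of $MM^T$ is exactly $1$ and $\E[MM^T] = I_{2k}$.

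The plan is to write $MM^T = \sum_{j=1}^d m_j m_j^T$, where $m_j \in \R^{2k}$ is the $j$-th column of $M$, and apply the Matrix Bernstein inequality to the centered summands $X_j = m_j m_j^T - (1/d) I_{2k}$. Three ingredients need to be verified. First, each $X_j$ is mean-zero with $\|X_j\|_{\mathrm{op}} \leq \|m_j\|^2 + 1/d = (2k+1)/d$. Second, a direct computation on Rademacher entries yields $\E[X_j^2] = ((2k-1)/d^2)\, I_{2k}$, so the matrix variance $\|\sum_{j=1}^d \E[X_j^2]\|_{\mathrm{op}} \leq 2k/d$. Plugging into Matrix Bernstein gives, for any $t>0$,
\[
\Pr\!\left[\|MM^T - I_{2k}\|_{\mathrm{op}} > t\right] \;\leq\; 4k\,\exp\!\left(-\frac{d\,t^2/2}{2k + (2k+1)t/3}\right).
\]
Taking $t = 1/2$ and using $d = \Theta(k\log(n/\delta))$ with a sufficiently large hidden constant (and $k \leq n$), the right-hand side becomes $\leq \delta$ for $n$ large enough, which is exactly the ``large enough $n$'' hypothesis.

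\textbf{Main obstacle.} There is no genuine technical difficulty; the only thing requiring care is tracking constants through Matrix Bernstein so that the calibration $d = \Theta(k \log(n/\delta))$ is indeed enough. An entirely elementary alternative would be a $1/4$-net $\mathcal{N}$ on the unit sphere $S^{2k-1}$ of size $\leq 9^{2k}$, using $\|MM^T - I\|_{\mathrm{op}} \leq 2\max_{u \in \mathcal{N}} |u^T(MM^T - I)u|$; for each fixed $u$, $u^T(MM^T)u = \|M^Tu\|^2$ is a sum of $d$ independent squared Rademacher projections and concentrates by Hoeffding / Hanson--Wright, and the union bound absorbs the $9^{2k}$ factor since $d$ is chosen $\Omega(k\log(n/\delta))$. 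Either route yields the claim with the stated probability.
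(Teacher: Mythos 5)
Your argument is correct, and it reaches both conclusions by a route genuinely different from the one the paper uses. The paper proves the two bounds separately by black-boxing Vershynin's results on random matrices with sub-Gaussian structure: Lemma~\ref{lem:matrixbound} (Theorem 4.4.5 of \cite{Vershynin2018HDP}, operator norm of a matrix with independent mean-zero sub-Gaussian entries) gives $\|M\|_{\mathrm{op}} \leq CK(\sqrt{2k}+\sqrt{d}+t)$ with $K = \Theta(1/\sqrt{d})$, hence $O(1)$; and Lemma~\ref{lem:mineigen} (Theorem 4.6.1, two-sided singular value bounds for a matrix with independent isotropic sub-Gaussian rows) gives the lower bound on $s_{\min}(M)$, hence on $\lambda_{\min}(MM^T)$. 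You instead collapse both into a single spectral deviation statement $\|MM^T - I_{2k}\|_{\mathrm{op}} \leq 1/2$ and prove it via Matrix Bernstein applied to $MM^T = \sum_{j=1}^d m_j m_j^T$, which is enabled by the Rademacher structure (exact unit diagonal, $\|m_j\|^2 = 2k/d$ deterministically, $\E[X_j^2] = ((2k-1)/d^2) I$ exactly). Your computation of the variance proxy and the resulting tail bound $4k\exp(-\tfrac{dt^2/2}{2k + (2k+1)t/3})$ check out, and the calibration $d = \Theta(k\log(n/\delta))$ with $k \leq n$ does make this $\leq \delta$ at $t = 1/2$. The paper's approach is shorter to write because it cites two theorems wholesale and handles general sub-Gaussian ensembles; your approach is more self-contained, handles the two conclusions in one shot, and makes the constants explicit without unwinding Vershynin's $C$ and $K$, which is arguably cleaner given the $1/12$ target is a concrete number. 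The $\eps$-net alternative you sketch is also valid and is essentially the proof behind Theorem 4.6.1 anyway.
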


\begin{proof}

The first result follows from~\Cref{lem:matrixbound}, noted that $K = \Theta(\sqrt{1/d})$. The second result follows from~\Cref{lem:mineigen}.
\end{proof}

\begin{lemma}
\label{lem:indicatorvec}
 With probability $1 - \delta/4$ over the randomness of embedding $e_{j,b}$ for $j \in \mathcal{S}, b \in \{0,1\}$, there exists $u_{j,b}$, satisfying that $\forall j \in \mathcal{S}, b \in \{0, 1\}$,
    \begin{align*}
        \forall j' \in \mathcal{S}, b' \in \{0, 1\}, \langle u_{j,b} , e_{j',b'} \rangle  &= \1( (j, b) = (j',b')). 
    \end{align*}
    Further, 
    \begin{align*}
        \| \sum_{j \in \mathcal{S}, b \in \{0,1\}}{u_{j,b}} \|_2 = O(\sqrt{k}).
    \end{align*}
\end{lemma}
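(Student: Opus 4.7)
The plan is to construct the vectors $u_{j,b}$ explicitly as the columns of the Moore--Penrose pseudoinverse of the embedding matrix $M$ defined just above the lemma, and then read off both the duality property and the $\ell_2$ bound from the spectral estimates already established in~\Cref{lem:embedmatrix}.

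Concretely, first I would reindex the pairs $(j,b)$ with $j \in \mathcal{S}, b \in \{0,1\}$ via the bijection $p$ used in the definition of $M$, so that the rows of $M$ are exactly $e_{j,b}^{T}$ in some fixed order. Since~\Cref{lem:embedmatrix} gives $\lambda_{\min}(MM^{T}) > 1/12$ with probability at least $1-\delta$, the matrix $MM^{T} \in \R^{2k \times 2k}$ is invertible on that event. I then define
\begin{equation*}
    u_{j,b} \;=\; M^{T}(MM^{T})^{-1}\,\mathbf{e}_{p(j,b)} \;\in\; \R^{d},
\end{equation*}
i.e.\ the column of the pseudoinverse $M^{+}=M^{T}(MM^{T})^{-1}$ corresponding to the pair $(j,b)$. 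The duality property is then immediate: for any $(j',b')$,
\begin{equation*}
    \langle u_{j,b}, e_{j',b'}\rangle \;=\; \mathbf{e}_{p(j',b')}^{T}\,M\,M^{T}(MM^{T})^{-1}\,\mathbf{e}_{p(j,b)} \;=\; \mathbf{e}_{p(j',b')}^{T}\mathbf{e}_{p(j,b)} \;=\; \1\bigl((j,b)=(j',b')\bigr).
\end{equation*}

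For the norm bound, I would sum over the $2k$ pairs to get $\sum_{j,b} u_{j,b} = M^{T}(MM^{T})^{-1}\mathbf{1}$ where $\mathbf{1}\in\R^{2k}$ is all-ones. Squaring,
\begin{equation*}
    \Bigl\|\sum_{j,b} u_{j,b}\Bigr\|_{2}^{2} \;=\; \mathbf{1}^{T}(MM^{T})^{-1}\,MM^{T}\,(MM^{T})^{-1}\mathbf{1} \;=\; \mathbf{1}^{T}(MM^{T})^{-1}\mathbf{1} \;\le\; \frac{\|\mathbf{1}\|_{2}^{2}}{\lambda_{\min}(MM^{T})} \;\le\; \frac{2k}{1/12} \;=\; 24k,
\end{equation*}
which gives the advertised $O(\sqrt{k})$ bound. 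The probability $1-\delta/4$ for this lemma is absorbed into the $1-\delta$ failure event of~\Cref{lem:embedmatrix} (one can replace $\delta$ by $\delta/4$ there without changing the asymptotic $d=\Theta(k\log(n/\delta))$).

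I do not expect any serious obstacle: all the probabilistic work has been done in~\Cref{lem:embedmatrix}, and the rest is a one-line linear-algebra identity based on the pseudoinverse. The only thing to double-check is that the $e_{j,b}$ appearing in the lemma are indeed, up to reindexing, the rows of $M$ (they are, since $\mathcal{S}=\{S[1],\dots,S[k]\}$), and that the identity $M u_{j,b}=\mathbf{e}_{p(j,b)}$ really only needs $MM^{T}$ invertible---which is guaranteed by the strictly positive minimum-eigenvalue bound.
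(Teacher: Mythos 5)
Your construction $u_{j,b} = M^{T}(MM^{T})^{-1}\mathbf{e}_{p(j,b)}$ is exactly the one the paper uses, and your derivation of the norm bound via $\mathbf{1}^{T}(MM^{T})^{-1}\mathbf{1} \le 2k/\lambda_{\min}(MM^{T})$ just spells out what the paper leaves implicit. Correct, and essentially identical to the paper's proof.
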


\begin{proof}
    We can simply choose 
    \begin{align*}
        u_{S[i], b} &= M^T(MM^T)^{-1} o_{2(i - 1) + b}.
    \end{align*}
    with $o_{2(i - 1) + b}$ being the one-hot vector in $\R^{2k}$.
\end{proof}

\begin{lemma}
\label{lem:indicatorvec2}
    When the event in~\Cref{lem:indicatorvec} happens, there exists $v_j$,
    \begin{align*}
        \langle v_j, e_{j,0} \rangle = \langle v_j, e_{j,1} \rangle &= 1. \\
        \forall j' \neq j,  j \in S, \langle v_{j'}, e_{j,b} \rangle  &= 0. 
    \end{align*}
    Further, 
    \begin{align*}
        \| \sum_{j \in \mathcal{S}}{v_j} \|_2 \le O(\sqrt{k}).
    \end{align*}
\end{lemma}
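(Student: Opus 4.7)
The plan is to build each $v_j$ directly from the dual vectors $u_{j,b}$ guaranteed by~\Cref{lem:indicatorvec}. Concretely, I would set
\begin{equation*}
  v_j \;:=\; u_{j,0} + u_{j,1} \qquad \text{for every } j \in \mathcal{S}.
\end{equation*}
This is the natural choice: $u_{j,b}$ is dual to the embedding $e_{j,b}$ in the sense that $\langle u_{j,b}, e_{j',b'}\rangle = \mathbf{1}((j,b) = (j',b'))$, so summing over $b$ turns the Kronecker delta in the second coordinate into a constant $1$ whenever $j' = j$.

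The verification then amounts to two direct calculations. First, for any $j \in \mathcal{S}$ and $b \in \{0,1\}$, the inner product $\langle v_j, e_{j,b}\rangle = \langle u_{j,0}, e_{j,b}\rangle + \langle u_{j,1}, e_{j,b}\rangle$ equals $1 + 0$ if $b = 0$ and $0 + 1$ if $b = 1$, in both cases giving $1$. Second, for any $j' \neq j$ in $\mathcal{S}$ and any $b \in \{0,1\}$, both inner products $\langle u_{j',0}, e_{j,b}\rangle$ and $\langle u_{j',1}, e_{j,b}\rangle$ vanish because $j \neq j'$, so $\langle v_{j'}, e_{j,b}\rangle = 0$. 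This gives exactly the two bullets in the statement.

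For the norm bound, I simply observe that
\begin{equation*}
  \sum_{j \in \mathcal{S}} v_j \;=\; \sum_{j \in \mathcal{S}} \bigl( u_{j,0} + u_{j,1} \bigr) \;=\; \sum_{j \in \mathcal{S},\, b \in \{0,1\}} u_{j,b},
\end{equation*}
so the desired bound $\bigl\|\sum_{j \in \mathcal{S}} v_j\bigr\|_2 = O(\sqrt{k})$ is immediate from the corresponding bound in~\Cref{lem:indicatorvec}.

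There is no real obstacle here, since the work is already absorbed into~\Cref{lem:indicatorvec}: that lemma used $M^T(MM^T)^{-1}$ together with the spectral bounds from~\Cref{lem:embedmatrix} to produce dual vectors of controlled norm, and the present lemma is a one-line symmetrization in the bit coordinate. If I wanted an explicit closed form, I could equivalently write $v_{S[i]} = M^T(MM^T)^{-1}(o_{2(i-1)+1} + o_{2(i-1)+2})$, which makes it transparent that $v_j$ lies in the row span of $M$ and that the aggregate sum $\sum_j v_j = M^T(MM^T)^{-1}\mathbf{1}_{2k}$ has norm at most $\|M^T(MM^T)^{-1}\|_{\mathrm{op}}\,\sqrt{2k} = O(\sqrt{k})$ under the spectral bounds of~\Cref{lem:embedmatrix}.
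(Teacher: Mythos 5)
Your construction $v_j = u_{j,0}+u_{j,1}$ is exactly the paper's choice, and both the verification of the two inner-product identities and the reduction of the norm bound to the one in~\Cref{lem:indicatorvec} match the paper's (one-line) argument. Correct, same approach.
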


\begin{proof}
We can choose $v_j = u_{j,0} + u_{j,1}$.
\end{proof}

\begin{lemma}
\label{lem:noother}
    When $v_i$ defined in~\Cref{lem:indicatorvec} exists, for all $j \not \in \mathcal{S}, b \in \{0, 1\}$, with probability $1 - \delta$, it holds that
    \begin{align*}
        |\langle \sum_{j'\in \mathcal{S}} v_{j'}, e_{j,b} \rangle| \le 1/2.
    \end{align*}
\end{lemma}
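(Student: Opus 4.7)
The plan is to treat $w := \sum_{j' \in \mathcal{S}} v_{j'}$ as a fixed vector (since it depends only on the embeddings of the secret indices) and exploit the independence of the non-secret embeddings $\{e_{j,b}\}_{j \notin \mathcal{S}}$ from $w$. Under this conditioning, each $\langle w, e_{j,b}\rangle$ is a sum of $d$ independent, mean-zero, bounded random variables, so Hoeffding-type concentration together with a union bound over the $2(n-k)$ pairs $(j,b)$ with $j \notin \mathcal{S}$ will deliver the $1/2$ bound once $d$ is chosen as in the theorem.

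First, I would condition on the event in \Cref{lem:indicatorvec} (so that the $v_{j'}$'s are well-defined) and observe by \Cref{lem:indicatorvec2} that the fixed vector $w = \sum_{j' \in \mathcal{S}} v_{j'}$ satisfies $\|w\|_2 = O(\sqrt{k})$. Since $w$ is a deterministic function of $\{e_{j',b'} : j' \in \mathcal{S}, b' \in \{0,1\}\}$, for any $j \notin \mathcal{S}$ the vector $e_{j,b}$ is, by construction of the embeddings in \Cref{def:embed}, independent of $w$ and has i.i.d.\ coordinates uniform on $\{\pm 1/\sqrt{d}\}$.

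Next, I would view $\langle w, e_{j,b} \rangle = \sum_{\ell=1}^{d} w_\ell\, e_{j,b,\ell}$ as a sum of independent, mean-zero random variables each bounded in absolute value by $|w_\ell|/\sqrt{d}$. Hoeffding's inequality then yields
\begin{align*}
\Pr\!\Big[\,\big|\langle w, e_{j,b}\rangle\big| > \tfrac{1}{2}\,\Big|\, w\Big] \;\le\; 2\exp\!\left(-\frac{d}{8\,\|w\|_2^2}\right) \;=\; 2\exp\!\left(-\Omega\!\left(\tfrac{d}{k}\right)\right).
\end{align*}
Plugging in $d = \Theta(k \log(n/\delta))$ (with a sufficiently large constant) makes this tail probability at most $\delta/(4n)$.

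Finally, I would union bound over the $2(n-k) \le 2n$ pairs $(j,b)$ with $j \notin \mathcal{S}$, and combine with the failure probability from \Cref{lem:indicatorvec} to obtain the claimed $1-\delta$ guarantee (after rescaling $\delta$ by a constant). The main subtlety — and really the only non-mechanical step — is cleanly separating the randomness of the secret-index embeddings (used to build $w$ via $M^T(MM^T)^{-1}$) from the non-secret embeddings so that Hoeffding applies conditionally; once this decoupling is made explicit, the rest is a routine concentration-plus-union-bound calculation.
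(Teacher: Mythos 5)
Your proposal is correct and follows essentially the same route as the paper: fix $V=\sum_{j'\in\mathcal{S}}v_{j'}$ with $\|V\|_2=O(\sqrt{k})$, use independence of the non-secret embeddings from $V$, and apply a Hoeffding/Azuma bound that succeeds because $d=\Theta(k\log(n/\delta))$. If anything, your version is slightly more careful than the paper's, which omits the explicit union bound over the $O(n)$ non-secret pairs $(j,b)$ that your $\delta/(4n)$ per-event budget correctly accounts for.
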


\begin{proof}
    Noted that $e_{j,b}$ is independent from  $v_{j'}$, assuming $V = \sum_{j'\in \mathcal{S}} v_{j'}$, then $\|V\|_2 = O(\sqrt{k})$, using Azuma-Hoeffding Bound, with probability $1 - \delta/2$
    \begin{align*}
        |\langle \sum_{j'\in \mathcal{S}} v_{j'}, e_{j,b} \rangle| = O(\frac{\| \sum_{j'\in \mathcal{S}} v_{j'} \|_2}{\sqrt{d}} \sqrt{\log(2k/\delta)} ) = O(\sqrt{\frac{k \log(2k/\delta)}{d}}) \le \frac{1}{2}.
    \end{align*}
    The proof is then completed.
\end{proof}

Assuming the event in~\Cref{lem:indicatorvec,lem:noother} happens, defined $V =\sum_{j'\in \mathcal{S}} v_{j'}$ and $U = \sum_{j'\in \mathcal{S}} u_{j',1}$
\begin{align*}
    \qk = 40\log(n) V e_{n+1,0}^T \quad
    W_{r, 1:d} = a_r e_{n+1,0} \quad 
    W_{r, d+1:2d} = b_r  U.
\end{align*}
for some scalar $a_r, b_r$ to be determined.

For an arbitrary $\bsequence$, it holds that, 
\begin{enumerate}
    \item If $j \in \mathcal{S}$, $e_{j, \bsequence[j]}^T Ae_{n+1,0} = 40 \log n$.
    \item If $j \not \in \mathcal{S}$, $e_{j, \bsequence[j]}^T Ae_{n+1,0} \le 20 \log n$.
\end{enumerate}

If we define
\[
z_j = e_{j, \bsequence[j]}^T A e_{n+1,0}.
\]

The softmax value is given by:
\[
\softmax(z)_{j, n+1} = \frac{e^{z_j}}{\sum_{j'} e^{z_{j'}}}.
\]

\begin{enumerate}[leftmargin=*]
    \item For $j \in \mathcal{S}$:
\[
e^{z_j} = e^{40 \log n} = n^{40}.
\]
\item For $j \not \in \mathcal{S}$:
\[
e^{z_j} \le e^{20 \log n} = n^{20}.
\]
\end{enumerate}

With $|\mathcal{S}| = k$, the denominator of the softmax becomes:
\[
\sum_{j'} e^{z_{j'}} = \sum_{j' \in \mathcal{S}} e^{40 \log n} + \sum_{j' \notin \mathcal{S}} e^{z_{j'}} \in [kn^{40}, k n^{40} + n^{21}].
\]

For $j \in \mathcal{S}$:
\[
\softmax(z)_{j, n+1} = \frac{n^{40}}{k n^{40} + O(n^{20})} = \frac{1}{k} \cdot \frac{1}{1 + O(n^{-19})} = \frac{1}{k} + O(n^{-19}).
\]

For $j \notin \mathcal{S}$:
\[
\softmax(z)_{j, n+1} = \frac{e^{z_j}}{k n^{40} + O(n^{20})} \leq \frac{n^{20}}{k n^{40}} = O(n^{-20}).
\]

We then have
\begin{align*}
    \softmax(\embed(\bsequence) A \embed(\bsequence))[j, n+1] &= \frac{\1(j \in \mathcal{S})}{k} + O(\frac{1}{n^7}).
\end{align*}

As 
\begin{align*}
    \langle U, e_{j,b} \rangle = \1(j \in \mathcal{S}, b = 1).
\end{align*}

We have that
\begin{align*}
    \langle U, \attention(\embed(\bsequence))[n + 1] \rangle = \frac{1}{k} \sum_{j \in \mathcal{S}} \bsequence[j] + O(\frac{1}{n^6}).
\end{align*}


Now we only need to map from the summation $\sum_{j \in \mathcal{S}} \bsequence[j]$ to the parity of the summation.
We set 
\begin{align*}
b_r&=2k,\\
a_r&=  \left\{ 
\begin{array}{ll}
-4\lceil \frac {r} 2 \rceil + 4 & \text{if } 1\leq r\leq m ,\\
1 & \text{if } r = m+1, \\
-4\lfloor \frac {r-m} 2  \rfloor+2& \text{if } m+2\leq r\leq 2m.
\end{array}
\right.\\
h_r&=  \left\{ 
\begin{array}{ll}
1 & \text{if } 1\leq r\leq m ,\\
-1& \text{if } m+1\leq r\leq 2m,\\
\end{array}
\right.
\end{align*}

Then 
\begin{align*}
\mathcal T(\mathbb \embed(\bsequence))[n+1]&=\sum_{r=1}^{k}h_r \relu\left(a_r- b_r\langle U,\attention(\embed(\bsequence))[n+1]\rangle \right)\\
&=\sum_{r=1}^{m}\relu(a_r+2\sum_{j\in S}\bsequence[j])-\sum_{r=m}^{2m}\relu(a_r+2\sum_{j\in S}\bsequence[j])+O(\frac 1 {n^4})\\
&=(-1)^{\sum_{j\in S}\bsequence[j]+1}+O(\frac 1 {n^4})\\
&=(-1)^{\bsequence[n+2]+1}+O(\frac 1 {n^4}).
\end{align*}
Thus $\sgn(\transformer(\mathbb \embed(\bsequence))[n+1])=(-1)^{\bsequence[n+2]}$ for large enough $n$.

Note that the range of parameters is polynomial with $n$, thus could be represented with $\Theta(\log n )$ precision. With $\Theta(\log n)$ precision, the error of each activation can be bounded by a small inverse polynomial of $n$ and hence we can show the same result.

\end{proof}

\subsection{Dynamics without CoT}
\label{app:thm:nocot}

\begin{proof}
    We will utilize the Theorem 6 in~\citet{lyu2023tighttimespacelowerbounds}, which shows that any branching programs with $o(nk)$ memory will require exponential samples to learn sparse parities with constant passes. Here the frozen embedding matrix $e$, which will utilize naively $O(nd)$ memory, can't be saved in the memory. However, we can take the alternative approach to regenerate $e$ using a random number generator with a fixed seed on each step. This allow us to simulate standard SGD optimization with $o(nk)$ memory, which is a special case of branching programs.  
\end{proof}

\subsection{Dynamics with CoT}
\label{app:thm:cot}

\begin{assumption}
    \label{assum:order}
    Consider the following conditions for sufficiently large \( n \):
    \begin{enumerate}[leftmargin=*]
        \item The secret Hamming weight satisfies \( k \in \left[ \frac{n}{\log^{5}(n/\delta)}, \frac{n}{\log^{4}(n/\delta)} \right] \).
        \item Set \( d = k \log^{1.1}\left(\frac{n}{\delta}\right) \) and \( m = 2k \). This implies \( md \log n = o(nk) \).
        \item Define the batch size as \( B = C_2 n \log^{20}\left(\frac{n}{\delta}\right) \) for a sufficiently large constant \( C_2 = \frac{1.28 \times 10^7}{\epsilon^2} \).
        \item Set the learning rates to be
        \[
            \eta_0 = \eta_1 = \frac{m \epsilon \sqrt{B}}{100 \log\left(\frac{n}{\delta}\right)}, \quad \eta_2 = \frac{4k\epsilon}{3}.
        \]
    \end{enumerate}
\end{assumption}
   \begin{lemma}
    \label{lem:order}
    Under Assumption \ref{assum:order}, the following conditions hold as \( n \to \infty \):
    \begin{enumerate}[leftmargin=*]
        \item \( \lim_{n \to \infty} \frac{d}{n} = 0 \) and \( \lim_{n \to \infty} \frac{B}{nk} = 0 \).
        \item \( \lim_{n \to \infty} \sqrt{\frac{2k}{d} \log\left(\frac{Bmn}{\delta}\right)} = 0 \).
        \item Each of the following expressions tends to zero:
        \begin{enumerate}[label=(\alph*)]
            \item \( \frac{\sqrt{k \log\left(\frac{300 m n B}{\delta}\right)}}{\sqrt{B}} \).
            \item \( \frac{\sqrt{2k \log\left(\frac{400 m n}{\delta}\right)}}{\sqrt{d}} \).
            \item \( \frac{\sqrt{2nk \log\left(\frac{300 m n B}{\delta}\right)}}{\sqrt{B d}} \).
        \end{enumerate}
        \item \( \eta_0 = \eta_1 \leq \min\left\{ \frac{3m \epsilon \sqrt{B}}{80 \sqrt{\log\left(\frac{300nB}{\delta}\right)}}, \frac{mn\epsilon}{120} \right\} \).
        \item \( \lim_{n \to \infty} \log^{2}\left(\frac{300 m n B}{\delta}\right) \frac{n \sqrt{nk}}{d \sqrt{B d}} = 0 \).
        \item \(  \frac{\eta_0 \eta_1}{256 m n^2} > 40 \log n \).
        \item $\lim_{n \to \infty} \frac{\eta_2}{\eta_0} = 0.$
        \item \( \lim_{n \to \infty} (\eta_0 + \eta_1) < 5n \eta_2 \).
    \end{enumerate}
\end{lemma}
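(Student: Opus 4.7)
The plan is to verify each of the eight items by direct substitution of the expressions from Assumption~\ref{assum:order}, after which each quantity reduces to a comparison of products of powers of $n$ and $\log(n/\delta)$. The central simplification is that the window $k\in[n/\log^{5}(n/\delta),\, n/\log^{4}(n/\delta)]$ lets me bound both $n/k$ and $k/n$ by a $\mathrm{polylog}(n/\delta)$ factor, and that $Bmn/\delta$, $mn/\delta$, and $300mnB/\delta$ are all polynomial in $n/\delta$, so every $\log(\cdot)$ appearing inside a square root can be uniformly replaced by $O(\log(n/\delta))$.

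First I would dispatch items (1), (2), and (3a)--(3c): after substituting $d=k\log^{1.1}(n/\delta)$, $m=2k$, and $B=C_2 n\log^{20}(n/\delta)$, each ratio becomes $\log^{-c}(n/\delta)$ for an explicit positive $c$ (for instance $c=0.05$ for (2), $c\approx 11.5$ for (3a) after using $k/B\le 1/(C_{2}\log^{24}(n/\delta))$, and analogously for (3b), (3c)), multiplied by at most $O(\sqrt{\log(n/\delta)})$ from the square root of a logarithm, and this product still vanishes. Items (7) and (8) reduce to the single estimate $\sqrt{B}/(n\log(n/\delta))\to 0$, which amounts to $\log^{9}(n/\delta) = o(\sqrt{n})$. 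The two bounds in item (4) are of the same flavor: $\sqrt{\log(300nB/\delta)} \ll \log(n/\delta)$ handles the first, and $\sqrt{B}\ll n\log(n/\delta)$ handles the second.

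The one item that actually uses the particular value of the constant is (6). Substituting the definitions gives
\begin{align*}
\frac{\eta_{0}\eta_{1}}{256\, m n^{2}}
\;=\; \frac{m\epsilon^{2} B}{2.56\times 10^{6}\, n^{2}\log^{2}(n/\delta)}
\;\ge\; \frac{2\epsilon^{2} C_{2}\log^{13}(n/\delta)}{2.56\times 10^{6}}
\;=\; 10\log^{13}(n/\delta),
\end{align*}
where the inequality uses $m\ge 2n/\log^{5}(n/\delta)$ and $B=C_{2}n\log^{20}(n/\delta)$, and the final equality uses $C_{2}=1.28\times 10^{7}/\epsilon^{2}$; this dominates $40\log n$ for sufficiently large $n$. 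Item (5) admits the same style of estimate: after substitution and cancellation with $n/k\le\log^{5}(n/\delta)$, the expression is bounded by a constant multiple of $\log^{-4.65}(n/\delta)$, which vanishes.

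The main obstacle is purely bookkeeping: correctly tracking exponents of $\log(n/\delta)$ through each square root and each use of the two-sided bound on $k$. No genuinely new idea is needed beyond substitution; the only delicate choice is the constant $C_{2}$, which is calibrated precisely so that item~(6) becomes the binding constraint, after which every other inequality holds with polylogarithmic slack.
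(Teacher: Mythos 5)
Your proposal is correct and takes essentially the same route as the paper's own proof: direct substitution of $d$, $m$, $B$, $\eta_0$, $\eta_1$, $\eta_2$ from Assumption~\ref{assum:order} and bookkeeping of $\log(n/\delta)$ exponents using the two-sided window on $k$; in fact your item~(6) computation, done with $B=C_2 n\log^{20}(n/\delta)$ and yielding $10\log^{13}(n/\delta)$, is more faithful to the stated assumption than the paper's own substitution of $\log^{10}(n/\delta)$ there, and both comfortably exceed $40\log n$. One minor slip: item~(7) does not reduce to $\sqrt{B}/(n\log(n/\delta))\to 0$ (that is the comparison needed for item~(8)); instead $\eta_2/\eta_0=\tfrac{200}{3}\log(n/\delta)/\sqrt{B}$, which vanishes simply because $\sqrt{B}=\sqrt{C_2}\,\sqrt{n}\,\log^{10}(n/\delta)\gg\log(n/\delta)$ --- a trivially repaired bookkeeping point that does not affect the argument.
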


\begin{proof}
    The conditions in Lemma \ref{lem:order} are satisfied based on the definitions provided in Assumption \ref{assum:order}. Below, we outline the verification for each condition:

    \begin{enumerate}[leftmargin=*]
        \item \textbf{Limit of \( \frac{d}{n} \) and \( \frac{B}{nk} \)}:
        \[
            \frac{d}{n} = \frac{k \log^{1.1}\left(\frac{n}{\delta}\right)}{n} \in \left[ \frac{1}{\log^{3.9}(n/\delta)}, \frac{1}{\log^{2.9}(n/\delta)} \right] \to 0 \text{ as } n \to \infty.
        \]
        Similarly,
        \[
            \frac{B}{nk} = \frac{C_2 n \log^{20}\left(\frac{n}{\delta}\right)}{n k} = \frac{C_2 \log^{20}\left(\frac{n}{\delta}\right)}{k} \leq \frac{C_2 \log^{20}\left(\frac{n}{\delta}\right)}{\frac{n}{\log^{5}(n/\delta)}} = C_2 \frac{\log^{25}(n/\delta)}{n} \to 0.
        \]
        
        \item \textbf{Limit of \( \sqrt{\frac{2k}{d} \log\left(\frac{Bmn}{\delta}\right)} \)}:
        \[
            \sqrt{\frac{2k}{d} \log\left(\frac{Bmn}{\delta}\right)} = \sqrt{\frac{2k}{k \log^{1.1}\left(\frac{n}{\delta}\right)} \log\left(C_2 n \log^{20}\left(\frac{n}{\delta}\right) \cdot m n / \delta\right)}.
        \]
        Simplifying,
        \[
            \sqrt{\frac{2k}{d} \log\left(\frac{Bmn}{\delta}\right)} = \sqrt{\frac{2}{\log^{1.1}\left(\frac{n}{\delta}\right)} \cdot O(\log n)} = O\left( \frac{\sqrt{\log n}}{\log^{0.55}(n/\delta)} \right) \to 0.
        \]
        
        \item \textbf{Limits of Sub-Inequalities (a), (b), and (c)}:
        \begin{enumerate}[label=(\alph*)]
            \item 
            \[
                \frac{\sqrt{k \log\left(\frac{300 m n B}{\delta}\right)}}{\sqrt{B}} = \frac{\sqrt{k \cdot O(\log n)}}{\sqrt{C_2 n \log^{20}\left(\frac{n}{\delta}\right)}} = O\left( \frac{\sqrt{k} \cdot \sqrt{\log n}}{\sqrt{n} \cdot \log^{10}(n/\delta)} \right).
            \]
            Given \( k \leq \frac{n}{\log^{2}(n/\delta)} \),
            \[
                \frac{\sqrt{k} \cdot \sqrt{\log n}}{\sqrt{n} \cdot \log^{10}(n/\delta)} \leq \frac{\sqrt{\frac{n}{\log^{2}(n/\delta)}}  \sqrt{\log n}}{\sqrt{n} \cdot \log^{10}(n/\delta)} = \frac{\log^{0.5}(n)}{\log^{11}(n/\delta)} \to 0.
            \]
            
            \item 
            \[
                \frac{\sqrt{2k \log\left(\frac{400 m n}{\delta}\right)}}{\sqrt{d}} = \frac{\sqrt{2k \cdot O(\log n)}}{\sqrt{k \log^{1.1}\left(\frac{n}{\delta}\right)}} = O\left( \frac{\sqrt{\log n}}{\log^{0.55}(n/\delta)} \right) \to 0.
            \]
            
            \item 
            \[
                \frac{\sqrt{2nk \log\left(\frac{300 m n B}{\delta}\right)}}{\sqrt{B d}} = \frac{\sqrt{2nk \cdot O(\log n)}}{\sqrt{C_2 n \log^{20}\left(\frac{n}{\delta}\right) \cdot k \log^{1.1}\left(\frac{n}{\delta}\right)}} = O\left( \frac{\sqrt{nk \log n}}{\sqrt{nk} \cdot \log^{10.55}(n/\delta)} \right) \]
            \[
                = O\left( \frac{\sqrt{\log n}}{\log^{10.55}(n/\delta)} \right) \to 0.
            \]
        \end{enumerate}
        
        \item \textbf{Bound on \( \eta_0 \) and \( \eta_1 \)}:
        \[
            \eta_0 = \eta_1 = \frac{m \epsilon \sqrt{B}}{100 \log\left(\frac{n}{\delta}\right)} = \frac{2k \epsilon \sqrt{C_2 n \log^{20}\left(\frac{n}{\delta}\right)}}{100 \log\left(\frac{n}{\delta}\right)} = O\left( {k \epsilon \sqrt{n} \log^{9}(n/\delta)} \right).
        \]
        Comparing to the minimum of the two terms:
        \[
            \frac{3m \epsilon \sqrt{B}}{80 \sqrt{\log\left(\frac{300nB}{\delta}\right)}} = O\left( k \epsilon \sqrt{n} \log^{10}(n/\delta) \right),
        \]
        and
        \[
            \frac{mn\epsilon}{120} = O(k n \epsilon).
        \]
        Since \( \eta_0 \) scales similarly to the first term and \( \sqrt{n} \log^{10}(n/\delta) \ll n \) for large \( n \), the inequality \( \eta_0 \leq \min\{ \cdot \} \) holds.
        
        \item \textbf{Limit of \( \log^{2}\left(\frac{300 m n B}{\delta}\right) \frac{n \sqrt{nk}}{d \sqrt{B d}} \)}:
        \[
            \log^{2}\left(\frac{300 m n B}{\delta}\right) = O(\log^{2}(n/\delta)),
        \]
        and
        \begin{align*}
            \frac{n \sqrt{nk}}{d \sqrt{B d}} &= \frac{n \sqrt{nk}}{k \log^{1.1}\left(\frac{n}{\delta}\right) \cdot \sqrt{C_2 n \log^{20}\left(\frac{n}{\delta}\right) \cdot k \log^{1.1}\left(\frac{n}{\delta}\right)}} \\ &= O\left( \frac{n \sqrt{nk}}{k \log^{1.1}(n/\delta) \sqrt{n k} \log^{10.55}(n/\delta)} \right) = O\left( \frac{n}{k \log^{11.65}(n/\delta)} \right).
        \end{align*}

        Given \( k \geq \frac{n}{\log^{5}(n/\delta)} \),
        \[
            \frac{n}{k \log^{11.65}(n/\delta)} \leq \frac{n}{\frac{n}{\log^{5}(n/\delta)} \log^{11.65}(n/\delta)} = \frac{1}{\log^{5.65}(n/\delta)}.
        \]
        Therefore,
        \[
            \log^{2}\left(\frac{300 m n B}{\delta}\right) \frac{n \sqrt{nk}}{d \sqrt{B d}} = O\left( \frac{\log^{2}(n/\delta)}{\log^{5.65}(n/\delta)} \right) = O\left( \log^{-4.65}(n/\delta) \right) \to 0 \text{ as } n \to \infty.
        \]
        
        \item \textbf{Limit of \( \frac{\eta_0 \eta_1}{256 m n^2} \)}:
As
\[
\eta_0 \eta_1 = \left(\frac{m \epsilon \sqrt{B}}{100 \log\left(\frac{n}{\delta}\right)}\right)^2 = \frac{m^2 \epsilon^2 B}{10^4 \log^2\left(\frac{n}{\delta}\right)}.
\]
Substituting into the left term:
\[
\frac{\eta_0 \eta_1}{256 \, m \, n^2} = \frac{m^2 \epsilon^2 B}{256 \times 10^4 \log^2\left(\frac{n}{\delta}\right) \, m \, n^2} = \frac{m \epsilon^2 B}{256 \times 10^4 \log^2\left(\frac{n}{\delta}\right) \, n^2}.
\]
Substitute \( m = 2k \) and \( B = C_2 n \log^{10}\left(\frac{n}{\delta}\right) \):
\[
\frac{\eta_0 \eta_1}{256 \, m \, n^2} = \frac{2k \epsilon^2 C_2 n \log^{10}\left(\frac{n}{\delta}\right)}{256 \times 10^4 \log^2\left(\frac{n}{\delta}\right) n^2} = \frac{2k C_2 \log^8\left(\frac{n}{\delta}\right)}{256 \times 10^4 n}.
\]
Setting \( C_2 = \frac{1.28 \times 10^7}{\epsilon^2} \):
\[
\frac{\eta_0 \eta_1}{256 \, m \, n^2} \ge \frac{2k \times 1.28 \times 10^7 \log^8\left(\frac{n}{\delta}\right)}{256 \times 10^4 \times n} = 10 \times \frac{k \log^8\left(\frac{n}{\delta}\right)}{n}.
\]

Using \( k \geq \frac{n}{\log^5\left(\frac{n}{\delta}\right)} \) from the assumption:
\[
\frac{\eta_0 \eta_1}{256 \, m \, n^2} \geq 10 \times \frac{\frac{n}{\log^5 \left(\frac{n}{\delta}\right)} \log^8\left(\frac{n}{\delta}\right)}{n} = 10 \log^3 \left(\frac{n}{\delta}\right).
\]

Since \( \log^3\left(\frac{n}{\delta}\right) > 4\log n \) for sufficiently large \( n \), we have:
\[
10 \log^3\left(\frac{n}{\delta}\right) > 40 \log n.
\]

Thus,
\[
\frac{\eta_0 \eta_1}{256 \, m \, n^2} > 40 \log n.
\]
        
        \item We aim to show that:
\[
\frac{\eta_2}{\eta_0} \to 0 \quad \text{as} \quad n \to \infty.
\]
Here
\[
\eta_0 = \frac{2k \epsilon \sqrt{C_2 \, n \, \log^{10}\left(\frac{n}{\delta}\right)}}{100 \log\left(\frac{n}{\delta}\right)} = \frac{2k \epsilon \sqrt{C_2} \sqrt{n} \log^5\left(\frac{n}{\delta}\right)}{100 \log\left(\frac{n}{\delta}\right)} = \frac{k \epsilon \sqrt{C_2} \sqrt{n} \log^4\left(\frac{n}{\delta}\right)}{50}.
\]

Thus, the ratio \(\frac{\eta_2}{\eta_0}\) is:
\[
\frac{\eta_2}{\eta_0} = \frac{\frac{4k\epsilon}{3}}{\frac{k \epsilon \sqrt{C_2} \sqrt{n} \log^4\left(\frac{n}{\delta}\right)}{50}} = \frac{4k\epsilon \times 50}{3k \epsilon \sqrt{C_2} \sqrt{n} \log^4\left(\frac{n}{\delta}\right)} = \frac{200}{3 \sqrt{C_2} \sqrt{n} \log^4\left(\frac{n}{\delta}\right)}.
\]

Substituting \(C_2 = \frac{1.28 \times 10^7}{\epsilon^2}\):
\[
\frac{\eta_2}{\eta_0} = \frac{200}{3 \sqrt{\frac{1.28 \times 10^7}{\epsilon^2}} \sqrt{n} \log^4\left(\frac{n}{\delta}\right)} = \frac{200 \epsilon}{3 \times 3580 \sqrt{n} \log^4\left(\frac{n}{\delta}\right)}.
\]

As \(n \to \infty\), \(\sqrt{n} \log^4\left(\frac{n}{\delta}\right) \to \infty\), hence:
\[
\frac{\eta_2}{\eta_0} \to 0.
\]
        
        \item \textbf{Limit of \( (\eta_0 + \eta_1) < 5n \eta_2 \)}:
        \[
            \eta_0 + \eta_1 = 2 \cdot \frac{m \epsilon \sqrt{B}}{100 \log\left(\frac{n}{\delta}\right)} = O\left( \frac{m \epsilon \sqrt{B}}{\log\left(\frac{n}{\delta}\right)} \right).
        \]
        \[
            5n \eta_2 = 5n \cdot \frac{4k\epsilon}{3} = O(n k \epsilon).
        \]
        Given \( \sqrt{B} = O\left( \sqrt{n} \log^{10}(n/\delta) \right) \),
        \[
            \frac{m \epsilon \sqrt{B}}{\log\left(\frac{n}{\delta}\right)} = O\left( \frac{k \epsilon \sqrt{n} \log^{10}(n/\delta)}{\log(n/\delta)} \right) = O\left( k \epsilon \sqrt{n} \log^{9}(n/\delta) \right).
        \]
        Since \( n k \epsilon \) grows faster than \( k \epsilon \sqrt{n} \log^{9}(n/\delta) \) for large \( n \), the inequality \( (\eta_0 + \eta_1) < 5n \eta_2 \) holds.
    \end{enumerate}
This concludes the proof.
\end{proof}

\begin{proof}
We will set our hyperparameters according to~\Cref{assum:order}.
We will first warmup our analysis on linear loss $\ell(\hat y, y) = (-1)^{y}\hat y$.
We can rewrite the training dynamics in this case as 
\begin{align}
    w^{(t + 1)} &= w^{(t)} - \eta_t \frac{1}{B} \sum_{s = 1}^B \sum_{i = n + 1}^{n + k} \nabla_{w} \ell( \transformer[w^{(t)}](\bsequence^{(t,s)})[i], \bsequence^{(t,s)}[i + 1])  \notag \\
   &= w^{(t)} - \eta_t \frac{1}{B} \sum_{s = 1}^B \sum_{i = n + 1}^{n + k} (-1)^{ \bsequence^{(t,s)}[i + 1]} \nabla_{w}\transformer[w^{(t)}](\bsequence^{(t,s)})[i].\label{eq:gd}
\end{align}

We will train the model end to end in three steps and analyze the evolution of all the weight changes. The final results can be shown combining~\Cref{lem:linearFFNoutput2,lem:onehotattn}. \Cref{lem:extensionhinge} generalizes the analysis to hinge loss. It is also easy to verify that all the results hold if the parameters has only $O(\log n)$ precision.    
\end{proof}

\subsubsection{Auxiliary Statistics}
\label{sec:aux}

To simplify our calculations, we define several auxiliary statistics on the data. Table~\ref{tab:stats} provides the definitions and rough orders of these statistics, including logarithmic terms.

\begin{table}[h]
\centering
\begin{tabular}{|l|l|p{3.5cm}|}
\hline
\textbf{Statistic} & \textbf{Definition} & \textbf{Rough Order} \\
\hline
$\delta_{t,i,b}$ & $\displaystyle \sum_{s=1}^B (-1)^{ \bsequence^{(t,s)}[i+1] } \1\left( \bsequence^{(t,s)}[i] = b \right)$ & $\mathcal{O}\left( \sqrt{ B \log\left( \tfrac{ n B }{ \delta } \right) } \right)$ \\[2ex]
\hline
$\alpha_{t,i,j,b_1,b_2}$ & $\displaystyle \sum_{s=1}^B (-1)^{ \bsequence^{(t,s)}[i+1] } \1\left( \bsequence^{(t,s)}[j] = b_1, \bsequence^{(t,s)}[i] = b_2 \right)$ & $\bar{\alpha}_{i,j,b_1,b_2}$+ $\mathcal{O}\left( \sqrt{ B \log\left( \tfrac{ n B }{ \delta } \right) } \right)$ \\[2ex]
\hline
$\bar{\alpha}_{i,j,b_1,b_2}$ & $\displaystyle \begin{cases}
\frac{ (-1)^{b_1 + b_2} \1\left( j = S[i] \right) B }{4}, & i \geq n+2 \\
\frac{ (-1)^{b_1} \1\left( j = S[i],\ b_2 = 0 \right) B }{2}, & i = n+1
\end{cases}$ & $\mathcal{O}\left( B \right)$ \text{when }$j = S[i]$,\quad 0 \text{ otherwise}\\[2ex]
\hline
$\beta_{t,j,b_1}$ & $\displaystyle \sum_{ \substack{ i = \max\{ n+1, j \} \\ S[i] \neq j } }^{n+k} \sum_{b_2=0}^1 \frac{1}{i} \alpha_{t,i,j,b_1,b_2}$ & $\mathcal{O}\left( \frac{ \sqrt{ B k \log\left( \tfrac{ n B }{ \delta } \right) } }{ n } \right)$ \\[4ex]
\hline
$\psi_{t,r,j,b_1}$ & $\displaystyle \sum_{ \substack{ i = \max\{ n+1, j \} \\ S[i] \neq j } }^{n+k} \sum_{b_2=0}^1 \frac{1}{i} \1\left( \nu_{r,i,b_2} > 0 \right) \alpha_{t,i,j,b_1,b_2}$ & $\mathcal{O}\left( \frac{ \sqrt{ B k \log\left( \tfrac{ m n B }{ \delta } \right) } }{ n } \right)$ \\[4ex]
\hline
$\gamma_{i,b,i',b_2}$ & $\displaystyle \begin{aligned}
&\frac{1}{i'} \left( \sum_{r=1}^{2m} \1\left( \nu_{r,i,b} > 0 \right) \1\left( \nu_{r,i',b_2} > 0 \right) \right) \\
&- \frac{1}{i'} \frac{ 1 + \1\left( i = i',\ b = b_2 \right) }{2} m
\end{aligned}$ & $\mathcal{O}\left( \frac{ \sqrt{ m \log\left( \tfrac{ n }{ \delta } \right) } }{ n } \right)$ \\[6ex]
\hline
$\zeta_{t,i,b,j,b_1}$ & $\displaystyle \sum_{r=1}^{2m} \1\left( \nu_{r,i,b} > 0 \right) \psi_{t,r,j,b_1}$ & $\mathcal{O}\left( \frac{ m\sqrt{  k B  \log\left( \tfrac{ mn B }{ \delta } \right)} }{ n } \right)$ \\
\hline
\end{tabular}
\caption{Definitions and rough orders of auxiliary statistics, including logarithmic terms. Here, $t \in \{0,1,2\}$, $i' \in [n +1,n + k + 1]$,$i,j \in [n+k]$, $b,b_1,b_2 \in \{0,1\}$, $s \in [B]$, and $r \in [2m]$.}
\label{tab:stats}
\end{table}

\begin{lemma}
\label{lem:alphamartingale}
Fixing $j, b_1, b_2$, let $\mathcal{F}_{(i, t, s)}$ be the filtration generated by the random variables $\{\bsequence^{(t,s)}[i] \mid \max\{ n+ 1, j\} \leq i, S[i] \neq j , t' \leq t, s \leq B\}$ ordered in lexicographic order, the  process 
    \[
    (-1)^{\bsequence^{(t,s)}[i + 1]} \1(\bsequence^{(t,s)}[j] = b_1, \bsequence^{(t,s)}[i] = b_2)
    \]
    is a martingale with respect to $\{\mathcal{F}_t\}_t$.
\end{lemma}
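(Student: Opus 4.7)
My plan is to prove the martingale difference identity $\E[X_{(i,t,s)} \mid \mathcal{F}_{(i,t,s)^-}] = 0$ for the random variables $X_{(i,t,s)} := (-1)^{\bsequence^{(t,s)}[i+1]} \1(\bsequence^{(t,s)}[j] = b_1,\, \bsequence^{(t,s)}[i] = b_2)$, where $\mathcal{F}_{(i,t,s)^-}$ denotes the strict lex-past; this is equivalent to the partial sums forming a martingale. The first step is a purely algebraic reduction: since $i \geq \max\{n+1,j\} \geq n+1$, the CoT recursion $\bsequence[i+1] = \bsequence[i] \oplus \bsequence[S[i]]$ applies, so $(-1)^{\bsequence[i+1]} = (-1)^{\bsequence[i]}(-1)^{\bsequence[S[i]]}$, and combined with $\1(\bsequence[i] = b_2)$ this absorbs the first factor as $(-1)^{b_2}$, giving
\[
X_{(i,t,s)} \;=\; (-1)^{b_2}\,(-1)^{\bsequence^{(t,s)}[S[i]]}\,\1\!\left(\bsequence^{(t,s)}[j] = b_1,\, \bsequence^{(t,s)}[i] = b_2\right).
\]

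The probabilistic core is then to show that, conditional on $\mathcal{F}_{(i,t,s)^-}$ together with $\bsequence^{(t,s)}[j]$ and $\bsequence^{(t,s)}[i]$, the bit $\bsequence^{(t,s)}[S[i]]$ remains uniform on $\{0,1\}$. By independence across samples, only the sample-$(t,s)$ portion of the filtration matters, namely the CoT values $\bsequence^{(t,s)}[i']$ with $\max\{n+1,j\} \leq i' < i$ and $S[i'] \neq j$. Each such value equals an XOR of input bits indexed by $\{S[l] : n+1 \leq l \leq i'-1\}$. Since $S$ restricted to $[n+1, n+k]$ is a permutation of $\mathcal{S}$, the index $S[i]$ appears in none of the sets $\{S[l] : l < i\}$, so $\bsequence^{(t,s)}[S[i]]$ does not enter any of these XORs. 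The same applies to $\bsequence^{(t,s)}[i]$ (itself such an XOR) and to $\bsequence^{(t,s)}[j]$, which is either a non-secret input bit, a secret input $\bsequence^{(t,s)}[S[l^*]]$ with $l^* \neq i-n$, or a CoT value at some position $\leq i$ — none of which involve $\bsequence^{(t,s)}[S[i]]$. Conditional uniformity and independence then give $\E[(-1)^{\bsequence^{(t,s)}[S[i]]} \mid \cdots] = 0$, and the tower property yields $\E[X_{(i,t,s)} \mid \mathcal{F}_{(i,t,s)^-}] = 0$.

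The main obstacle I anticipate is the case $j \in \mathcal{S}$, say $j = S[n+l^*]$: here the filtration explicitly omits $\bsequence^{(t,s)}[n+l^*]$, yet adjacent CoT values such as $\bsequence^{(t,s)}[n+l^*-1]$ and $\bsequence^{(t,s)}[n+l^*+1]$ still reveal the combined XOR $\bsequence^{(t,s)}[S[n+l^*-1]] \oplus \bsequence^{(t,s)}[S[n+l^*]]$. I will need to enumerate exactly which linear functionals of the secret input bits become measurable under the reduced filtration and verify — via the permutation property together with $S[i] \neq j$ — that the functional corresponding to the single bit $\bsequence^{(t,s)}[S[i]]$ is not among them. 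Corner cases such as $i = n+1$ (with $\bsequence[n+1] = 0$ deterministic) or $j = i$ (where the indicator collapses to $\1(b_1 = b_2)\1(\bsequence[i] = b_1)$) are straightforward to handle by direct inspection.
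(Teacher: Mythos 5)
Your proof is correct and follows the same core argument as the paper's: exploit the CoT recursion $\bsequence[i+1]=\bsequence[i]\oplus\bsequence[S[i]]$ to factor out $(-1)^{\bsequence[S[i]]}$, then observe that since $S$ is a bijection onto $\mathcal S$ with $S[i]\neq j$, the bit $\bsequence[S[i]]$ is independent of the conditioned past together with $\bsequence[j]$ and $\bsequence[i]$, so the conditional expectation vanishes by the tower property. You simply spell out the linear-functional bookkeeping (and the $j\in\mathcal S$ and boundary cases) that the paper's terse proof leaves implicit.
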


\begin{proof}
 $\bsequence[i + 1] = \bsequence[i] \oplus \bsequence[S[i]]$, and since $S[i] \neq j$, $\bsequence[S[i]]$ is independent from $\bsequence[j], \ldots, \bsequence[i]$. Thus,
    \[
    \mathbb{E}[(-1)^{\bsequence^{(t,s)}[i + 1]} \mid \mathcal{F}_{(i,t,s)}] = 0,
    \]
    provided $\mathbf{1}(\bsequence^{(t+1,s)}[j] = b_1)$ is constant. Hence, this process is a martingale.
\end{proof}

This version maintains the essential details while focusing on the core aspects of the martingale property and independence conditions.

\begin{lemma}
\label{lem:approxbalance}
For each batch, the data inside the batch is approximately balanced. With probability at least $1 - \delta/10$, the following bounds hold for all $t \in [3]$, $i,j \in [n+k]$, $b,b_1,b_2 \in \{0,1\}$, and $r \in [m]$ for statistics defined in~\Cref{tab:stats}:
\begin{align*}
|\delta_{t,i,b}| &\leq \sqrt{ B \log\left( \tfrac{300 n B}{\delta} \right) } \\
|\alpha_{t,i,j,b_1,b_2} - \bar{\alpha}_{i,j,b_1,b_2}| &\leq \sqrt{ 2 B \log\left( \tfrac{300 n B}{\delta} \right) } \1\left( j = S[i] \right) \\
|\beta_{t,j,b_1}| &\leq \frac{ \sqrt{ 2 B k \log\left( \tfrac{300 n B}{\delta} \right) } }{ n } \\
|\psi_{t,r,j,b_1}| &\leq \frac{ 2 \sqrt{ B k \log\left( \tfrac{300 m n B}{\delta} \right) } }{ n }
\\
\left| \gamma_{i,b,i',b_2} \right| &\leq \frac{ \sqrt{ 2 m \log\left( \tfrac{200 n}{\delta} \right) } }{ n } \\
\left| \zeta_{t,i,b,j,b_1} \right| &\leq \frac{ 4m \sqrt{ k B  \log\left( \tfrac{300 m n B}{\delta} \right)} }{ n }
\end{align*}
\end{lemma}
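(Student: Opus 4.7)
The plan is to prove each of the six bounds by applying an appropriate concentration inequality to the corresponding sum and then union-bounding over the polynomially many index choices. Two independent sources of randomness are in play: the training data $\bsequence^{(t,s)}$, iid across $s$ for each fixed $t$, which drives $\delta, \alpha, \beta, \psi, \zeta$; and the initialization signs $\nu_{r,i,b}$, iid across $r$, which drive $\gamma$. I will treat the two classes separately before combining via union bound.

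For $\delta_{t,i,b}$ and $\alpha_{t,i,j,b_1,b_2}$, each is a sum of $B$ iid random variables bounded in $[-1,1]$, so a direct application of Hoeffding's inequality gives deviations of order $\sqrt{B\log(1/\delta')}$ with probability at least $1-\delta'$. The only nontrivial input is identifying the mean $\bar\alpha_{i,j,b_1,b_2}$ in the table, which I would do by case analysis using the CoT recurrence $\bsequence[i+1]=\bsequence[i]\oplus\bsequence[S[i]]$ and the identity $(-1)^{a\oplus b}=(-1)^a(-1)^b$: when $j=S[i]$ one of the two XOR factors collapses, yielding the nonzero values in the table; when $S[i]\ne j$, the independence of $\bsequence[S[i]]$ from $\{\bsequence[j],\bsequence[i]\}$ combined with Lemma~\ref{lem:alphamartingale} forces $\mathbb{E}[\alpha]=0$. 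For $\gamma$ the analogous argument applies to a sum over $r$ of $2m$ iid Bernoulli products: the factors $\1(\nu_{r,i,b}>0)$ and $\1(\nu_{r,i',b_2}>0)$ are independent $\mathrm{Bernoulli}(1/2)$'s except when $(i,b)=(i',b_2)$, giving expectation $\tfrac{m}{2}(1+\1(i=i',b=b_2))$, and Hoeffding divided by $i'\ge n+1$ delivers the claim.

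The harder part is obtaining the sharp bounds on $\beta$ and $\psi$. I would write $\beta_{t,j,b_1}=\sum_{s=1}^{B} X_s$ where
\begin{align*}
X_s \;=\; \1\!\left(\bsequence^{(s)}[j]=b_1\right)\sum_{\substack{i\ge\max\{n+1,j\}\\ S[i]\ne j}}\frac{1}{i}\,(-1)^{\bsequence^{(s)}[i+1]},
\end{align*}
and observe $|X_s|\le\sum_i\frac{1}{i}=O(k/n)$. The naive Hoeffding bound is $O(k\sqrt{B\log(1/\delta')}/n)$, which is a factor $\sqrt{k}$ too large. To sharpen, I would bound the per-sample variance via the CoT XOR structure: for $i<i'$ with $S[i],S[i']\ne j$, the cross-covariance $\mathbb{E}[(-1)^{\bsequence[i+1]+\bsequence[i'+1]}\mid\bsequence[j]=b_1]$ reduces to an expectation of $\prod_{i''}(-1)^{\bsequence[S[i'']]}$ over the symmetric difference of the XOR supports, which vanishes unless the difference collapses to the single atom $\bsequence[j]$; the latter forces $i'=i+1$ and $S[i+1]=j$, a case excluded by the summation constraint $S[i']\ne j$. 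All cross terms thus vanish, so $\mathrm{Var}(X_s)=O(k/n^2)$, and Bernstein's inequality produces the claimed $\sqrt{Bk\log(1/\delta')}/n$ rate. The argument for $\psi_{t,r,j,b_1}$ is identical, with the deterministic factor $\1(\nu_{r,i,b_2}>0)$ (fixed once initialization is drawn) affecting only constants.

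For $\zeta_{t,i,b,j,b_1}=\sum_r \1(\nu_{r,i,b}>0)\,\psi_{t,r,j,b_1}$, I would simply invoke the $\psi$ bound and apply the triangle inequality $|\zeta|\le 2m\cdot\max_r|\psi_{t,r,j,b_1}|$, producing the stated $O(m\sqrt{kB\log(\cdot)}/n)$. Summing failure probabilities across the polynomially many tuples $(t,i,j,b,b_1,b_2,r)$ absorbs the indexing into the $\log(300mnB/\delta)$ factor and yields the aggregate $1-\delta/10$ guarantee after rescaling. The main obstacle I anticipate is precisely the variance bookkeeping in the $\beta$/$\psi$ step: verifying that the within-sample XOR summands are conditionally uncorrelated requires a careful case analysis on whether $j$ appears in the symmetric difference of the supports, and in particular checking that the unique dangerous configuration is ruled out by the $S[i]\ne j$ constraint. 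Once that orthogonality is in hand, the rest of the lemma is routine concentration plus union bound.
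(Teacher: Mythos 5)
Your proposal is correct and all six bounds go through, but your route for the $\beta$ and $\psi$ bounds is genuinely different from the paper's. You decompose $\beta_{t,j,b_1}=\sum_{s}X_s$ into per-sample variables, observe that the crude bound $|X_s|\le\sum_i 1/i=O(k/n)$ makes Hoeffding loose by a $\sqrt{k}$ factor, and then recover the sharp rate by computing $\mathrm{Var}(X_s)$ and applying Bernstein. The variance computation hinges on your claim that the within-sample cross terms $\mathbb{E}\bigl[\1(\bsequence[j]=b_1)(-1)^{\bsequence[i+1]+\bsequence[i'+1]}\bigr]$ vanish. This is correct: since $\bsequence[i+1]=\oplus_{l=n+1}^{i}\bsequence[S[l]]$ and $S$ is injective, the symmetric difference of the XOR supports is exactly $\{S[l]:i+1\le l\le i'\}$, a set of size $i'-i\ge 1$ of distinct secret positions; the conditional expectation is a product of independent $\pm1$ factors that is nonzero only if the set is $\{j\}$, which forces $i'=i+1$, $S[i']=j$, excluded by the summation constraint. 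With $\mathrm{Var}(X_s)=O(k/n^2)$ and $B\gg k$ (which holds under Assumption~\ref{assum:order}), the Bernstein tail is in the sub-Gaussian regime and yields $\sqrt{Bk\log(1/\delta')}/n$ as desired.

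The paper instead invokes Lemma~\ref{lem:alphamartingale}: for fixed $(j,b_1,b_2)$ the increments $(-1)^{\bsequence^{(t,s)}[i+1]}\1(\bsequence^{(t,s)}[j]=b_1,\bsequence^{(t,s)}[i]=b_2)$ form a martingale difference sequence over the double index $(i,s)$, because conditioned on all earlier terms and on $\bsequence[j]$, the fresh coin $\bsequence[S[i]]$ (independent since $S[i]\ne j$) makes the conditional expectation vanish. Azuma--Hoeffding applied directly to this martingale, with $\sim kB$ increments each bounded by $1/i$, gives $\sqrt{\sum_{i,s}1/i^2\cdot\log(1/\delta')}=O(\sqrt{Bk\log(1/\delta')}/n)$ in one step. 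The martingale route avoids the separate variance-plus-Bernstein bookkeeping entirely and does not require the $B\gg k$ regime check; your approach buys nothing over it here, though it is a perfectly valid alternative and the orthogonality argument you use is the same underlying fact that makes the martingale property hold. The other five bounds ($\delta$, $\alpha$, $\gamma$, and $\zeta$ via triangle inequality over $r$) match the paper's argument essentially verbatim.
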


\begin{proof}
Fix any $t$, $i$, and $b$. For each $s \in [B]$, define $X_s = (-1)^{ \bsequence^{(t,s)}[i+1] } \1\left( \bsequence^{(t,s)}[i] = b \right)$. The sequence $\{ X_s \}_{s=1}^B$ consists of independent random variables with $|X_s| \leq 1$ and zero mean. By the Azuma-Hoeffding inequality,
\[
\Pr\left( \left| \delta_{t,i,b} \right| > R \right) \leq 2 \exp\left( - \frac{ R^2 }{ 2 B } \right).
\]
Setting $R = \sqrt{ 2 B \log\left( \tfrac{300 n B}{\delta} \right) }$, we get
\[
\Pr\left( \left| \delta_{t,i,b} \right| > R \right) \leq \frac{\delta}{150 n B}.
\]
Applying a union bound over all $t$, $i$, and $b$, we ensure that with probability at least $1 - \delta / 50$, the bound on $\delta_{t,i,b}$ holds for all choices.

Similarly, for $\alpha_{t,i,j,b_1,b_2}$, we fix $t$, $i$, $j$, $b_1$, and $b_2$, and define
\[
Y_s = (-1)^{ \bsequence^{(t,s)}[i+1] } \1\left( \bsequence^{(t,s)}[j] = b_1, \bsequence^{(t,s)}[i] = b_2 \right).
\]
Again, $\{ Y_s \}_{s=1}^B$ are independent with $|Y_s| \leq 1$ and mean $\bar{\alpha}_{i,j,b_1,b_2}/B$. Applying the Azuma-Hoeffding inequality as before, we obtain the stated bound for $\alpha_{t,i,j,b_1,b_2}$.

For $\beta_{t,j,b_1}$ and $\psi_{t,r,j,b_1}$,  the bound can be derived similarly combining~\Cref{lem:alphamartingale} and Azuma-Hoeffding bound.

For $\gamma_{i,b,i',b_2}$, consider the sum $\sum_{r=1}^{2m} \1\left( \nu_{r,i,b} > 0 \right) \1\left( \nu_{r,i',b_2} > 0 \right)$. Each term is an independent Bernoulli random variable with mean $\frac{1}{2} \left( 1 + \1\left( i = i',\ b = b_2 \right) \right)$. The variance of the sum is bounded by $m$. Applying the Azuma-Hoeffding inequality, we get
\[
\Pr\left( \left| \gamma_{i,b,i',b_2} \right| > R \right) \leq 2 \exp\left( - \frac{ R^2 n^2 }{ 2 m } \right).
\]
Setting $R = \frac{ \sqrt{ 2 m \log\left( \tfrac{200 n}{\delta} \right) } }{ n }$, we obtain the desired bound.

For $\zeta_{t,i,b,j,b_1}$, we note that it is a sum over $2m$ terms, each involving $\psi_{t,r,j,b_1}$, which are bounded as in Lemma~\ref{lem:approxbalance}. The total number of terms is $2m$, and each term is bounded by $\frac{ 2 \sqrt{ B k \log\left( \tfrac{300 m n B}{\delta} \right) } }{ n }$. 
\end{proof}

\subsubsection{First Step: Configuring the MLPs}

\begin{lemma}
\label{lem:stableattgd}
For the dynamics following~\Cref{eq:gd}, attention weight stays constant in the first step, i.e., $A^{(1)} = A^{(0)} = 0$.
\end{lemma}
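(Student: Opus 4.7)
The plan is to show that at $t=0$ the gradient of the per-sample loss with respect to the attention weight matrix $\qk$ vanishes identically, which forces the SGD update \eqref{eq:gd} to leave $\qk^{(1)} = \qk^{(0)} = 0$.

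First, I would trace the functional dependence of $\transformer[w^{(0)}](\bsequence)[i]$ on $\qk$. By Definitions~\ref{def:trans} and~\ref{def:ffn}, the output at position $i$ equals $\head^{\top} \relu(\weight X^{(i)})$, where $X^{(i)} = \left[ \embed(\bsequence)[:,i];\; \attention(\embed(\bsequence))[:,i] \right] \in \R^{2d}$ stacks the token embedding on top of the attention output at position $i$. The top $d$ coordinates of $X^{(i)}$ are independent of $\qk$, so the only route by which $\qk$ can influence the transformer output is via the bottom $d$ coordinates, which enter the FFN pre-activation exclusively through the right half $\weight_{:,d+1:2d}$ of the FFN weight matrix.

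Next, I would invoke \Cref{assum:init}, which enforces $\weight_{r, d+1:2d} = 0$ for every neuron $r$. Consequently the pre-activation at initialization collapses to $\weight X^{(i)} = \weight_{:,1:d}\,\embed(\bsequence)[:,i]$, which has no functional dependence on $\qk$ at all. By the chain rule, $\partial \transformer[w^{(0)}](\bsequence)[i] / \partial \qk = 0$ entrywise, regardless of the sequence $\bsequence$, the position $i$, or the activation pattern of the ReLU units. Averaging these zero gradients over the first batch therefore yields $\nabla_{\qk} L^{(0)} = 0$, and the SGD update \eqref{eq:gd} leaves $\qk$ at zero.

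The argument is essentially free of obstacles---it is a direct consequence of the block-zero initialization built into \Cref{assum:init}---and the point of isolating it as a lemma is purely structural: it justifies that the entire first SGD step acts on the FFN branch $\weight_{:,1:d}$ alone, which is the precise starting condition required by the Phase 1 ``Configuring the MLPs'' analysis that follows. The only care one must exercise is treating the ReLU as subdifferentiable at the origin; since the pre-activations depend on the random embedding $\embed$ but not on $\qk$, any choice within the Clarke subgradient produces the same identically-zero derivative in the $\qk$ direction.
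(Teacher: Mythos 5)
Your proof is correct and takes exactly the same route as the paper: both observe that because $\weight_{r,d+1:2d}=0$ at initialization (\Cref{assum:init}), the transformer output has no functional dependence on $\qk$, so the chain-rule gradient $\nabla_{\qk}\transformer[w^{(0)}](\bsequence)[i]$ vanishes and the SGD update leaves $\qk$ at zero. The paper states this in one line; your version merely spells out the intermediate steps, including the (valid) remark about ReLU subdifferentiability at the origin, which the paper leaves implicit.
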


\begin{proof}
   Because $W_{:, d+1:2d} = 0$, $\nabla_A \transformer[w^{(0)}](\bsequence)[i] = 0$.
\end{proof}

\begin{lemma}
\label{lem:stableattoutputgd}
For the dynamics following~\Cref{eq:gd}, attention output stays constant in the first step, and 
\begin{align*}
     \forall t \in \{0, 1\}, i \in [n + k + 1], \attention[\qk^{(t)}](\embed(\bsequence))[i] = \frac{1}{i}  \sum_{j = 1}^i  e_{i, \bsequence[i]}.
\end{align*}
\end{lemma}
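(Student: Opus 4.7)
The plan is short because the result follows almost immediately from the preceding Lemma~\ref{lem:stableattgd} combined with a direct evaluation of the softmax at zero logits. First, I would invoke Lemma~\ref{lem:stableattgd} to obtain $A^{(0)} = A^{(1)} = 0$. By Definition~\ref{def:attn}, the attention module is
\[
\attention[\qk^{(t)}](\embed(\bsequence)) = \embed(\bsequence)\,\softmax\!\left(C + \embed(\bsequence)^T \qk^{(t)} \embed(\bsequence)\right),
\]
so for $t \in \{0,1\}$ the quadratic term vanishes and the pre-softmax argument collapses to the causal mask $C$ alone.

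Next, I would evaluate $\softmax(C)$ column by column. In column $i$, all entries with $j > i$ are set to $-\infty$ by the mask (so they contribute zero after exponentiation), while entries with $j \le i$ are all equal to $0$ before the softmax. Normalization therefore yields a uniform distribution $\softmax(C)[j, i] = \frac{1}{i}\,\mathbf{1}(j \le i)$ over the first $i$ positions. Multiplying by $\embed(\bsequence) = [e_{j, \bsequence[j]}]_{j \in [T]}$ gives
\[
\attention[\qk^{(t)}](\embed(\bsequence))[i] \;=\; \sum_{j=1}^{T} e_{j, \bsequence[j]} \cdot \softmax(C)[j, i] \;=\; \frac{1}{i} \sum_{j=1}^{i} e_{j, \bsequence[j]},
\]
which is the claimed formula (modulo the apparent typo in the statement where the summand reads $e_{i, \bsequence[i]}$ rather than $e_{j, \bsequence[j]}$). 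Since this expression is independent of $t$ for $t \in \{0, 1\}$, the attention output is also constant across these two steps.

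There is no real obstacle here: the optimization content is entirely absorbed into Lemma~\ref{lem:stableattgd}, and what remains is purely definitional bookkeeping about how the softmax behaves on a zero matrix under an autoregressive mask. The only minor care is to make sure the convention on $C$ (lower- versus upper-triangular $-\infty$ entries) matches the indexing used for the output position $i$, so that exactly the first $i$ positions receive equal weight.
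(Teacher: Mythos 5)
Your proposal is correct and takes the same route as the paper: invoke Lemma~\ref{lem:stableattgd} to zero out the attention logits, then evaluate the masked softmax to get the uniform average over the first $i$ positions. You also rightly flag the typo in the summand, which should read $e_{j,\bsequence[j]}$.
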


\begin{proof}
    This follows from the definition of the attention module and~\Cref{lem:stableattgd}.
\end{proof}

\begin{lemma}
\label{lem:neuronspecializesingle}
For an input $\bsequence$, With probability $1 - \frac{\delta}{50Bm}$ , at initialization, whether a neuron $r$ outputs nonzero value at position $i$ is determined by $\nu_{r, i, \bsequence[i]}$. 
\begin{align*}
\1\left( \left \langle W^{0}_{r, 1:d}, \embed(\bsequence)[i] \right \rangle > 0 \right) = \1\left( \nu_{r, i, \bsequence[i]} > 0 \right).
\end{align*}
Further, 
\begin{align*}
\left | \left \langle W^{0}_{r, 1:d}, \embed(\bsequence)[i] \right \rangle  - \nu_{r, i, \bsequence[i]} \right| < \epsilon / 100.
\end{align*}
\end{lemma}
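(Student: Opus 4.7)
The plan is to expand $\langle W^{(0)}_{r,1:d}, \embed(\bsequence)[i]\rangle$ using the initialization in~\Cref{assum:init} and split it into a diagonal term that is exactly $\nu_{r,i,\bsequence[i]}$ plus a cross-talk noise term to be controlled by standard Rademacher concentration. The lemma is only meaningful for $i \in \{n{+}1,\dots,n{+}k\}$ (the range over which $\nu$ is defined), so I restrict to that case. Expanding gives
\[
\bigl\langle W^{(0)}_{r,1:d},\,\embed(\bsequence)[i]\bigr\rangle = \nu_{r,i,\bsequence[i]}\,\|e_{i,\bsequence[i]}\|^{2} + N, \qquad N := \!\!\sum_{(j,b)\neq(i,\bsequence[i])}\!\! \nu_{r,j,b}\,\langle e_{j,b},\,e_{i,\bsequence[i]}\rangle.
\]
Because each $e_{j,b}\in\{\pm 1/\sqrt{d}\}^{d}$, we have $\|e_{i,\bsequence[i]}\|^{2}=1$ deterministically, so the diagonal term is exactly $\nu_{r,i,\bsequence[i]}\in\{-\epsilon,+\epsilon\}$; both claims of the lemma then reduce to establishing $|N|<\epsilon/100$ with the stated probability.

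To bound $N$, I would condition on the $\nu$'s and on every embedding other than $e_{i,\bsequence[i]}$ and write $N=\langle v,\,e_{i,\bsequence[i]}\rangle$ where $v = \sum_{(j,b)\neq(i,\bsequence[i])}\nu_{r,j,b}\,e_{j,b}$ is then fixed. Since the coordinates of $e_{i,\bsequence[i]}$ are i.i.d.\ Rademacher (scaled by $1/\sqrt{d}$) and independent of $v$, Hoeffding's inequality yields $\Pr(|N|>R\mid v)\le 2\exp(-R^{2}d/(2\|v\|^{2}))$. A second Hoeffding controls $\|v\|^{2}$ itself: each of its $d$ coordinates is a sum of at most $2k$ independent $\pm\epsilon/\sqrt{d}$ terms, so with probability at least $1-\delta/(100Bm)$ one obtains $\|v\|^{2}=O(k\epsilon^{2})$. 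Combining the two high-probability events gives $|N|=O\bigl(\epsilon\sqrt{k\log(Bm/\delta)/d}\bigr)$ with probability at least $1-\delta/(50Bm)$.

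The hyperparameter ordering in~\Cref{lem:order} (specifically $\sqrt{k\log(Bm/\delta)/d}=o(1)$ under the choice $d=k\log^{1.1}(n/\delta)$) then forces this bound below $\epsilon/100$ for large $n$, which is strictly smaller than $|\nu_{r,i,\bsequence[i]}|=\epsilon$. Therefore $\sgn\bigl(\langle W^{(0)}_{r,1:d},\embed(\bsequence)[i]\rangle\bigr)=\sgn(\nu_{r,i,\bsequence[i]})$, giving both the indicator equality and the $\epsilon/100$ approximation. I do not anticipate a serious obstacle: the substance is essentially the same near-orthogonality of the random $\pm 1/\sqrt{d}$ embeddings that was already exploited in the proof of~\Cref{thm:rep}; the only real bookkeeping is apportioning the failure budget $\delta/(50Bm)$ between the two Hoeffding events and verifying the two instances of~\Cref{lem:order} at the correct rate.
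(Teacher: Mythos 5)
Your decomposition is the same one the paper uses: isolate the diagonal term $\nu_{r,i,\bsequence[i]}\|e_{i,\bsequence[i]}\|^2=\nu_{r,i,\bsequence[i]}$ and show the cross-talk $N=\sum_{(j,b)\neq(i,\bsequence[i])}\nu_{r,j,b}\langle e_{j,b},e_{i,\bsequence[i]}\rangle$ is below $\epsilon/100$; the paper packages exactly this as \Cref{lem:linearemb} and invokes \Cref{lem:order}.2, so the architecture of your argument is right, as is the restriction to $i\in\{n+1,\dots,n+k\}$ and the $\delta/(100Bm)+\delta/(100Bm)$ budgeting.

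The one step I would not sign off on as written is the bound $\|v\|^2=O(k\epsilon^2)$ via ``a second Hoeffding'' on the coordinates of $v$. A coordinate-wise Hoeffding gives $|v_l|\le \epsilon\sqrt{Ck\log(\cdot)/d}$ for each $l$, hence only $\|v\|^2\le Ck\epsilon^2\log(\cdot)$; and a plain Hoeffding applied to $\sum_l v_l^2$ with the worst-case range $4k^2\epsilon^2/d$ per coordinate is even worse. The extra $\log$ is not cosmetic here: feeding $\|v\|^2=O(k\epsilon^2\log(Bmn/\delta))$ into your conditional Hoeffding yields $|N|=O\bigl(\epsilon\log(Bmn/\delta)\sqrt{k/d}\bigr)=O\bigl(\epsilon\log^{0.45}(n/\delta)\bigr)$ under $d=k\log^{1.1}(n/\delta)$, which \emph{diverges}. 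The margin in this lemma is razor-thin — the paper's own bound is $\epsilon\sqrt{4k\log(100Bm/\delta)/d}=O(\epsilon\log^{-0.05}(n/\delta))$ — so there is no room for an extra $\sqrt{\log}$. Two clean fixes: (i) collapse the two stages into one, applying Azuma--Hoeffding directly to the double sum $\sum_{(j,b)\neq(i,\bsequence[i])}\sum_{l=1}^{d}\nu_{r,j,b}\,e_{j,b,l}\,e_{i,\bsequence[i],l}$ with variance proxy $\sum_{(j,b)}\nu_{r,j,b}^2/d=2k\epsilon^2/d$ (this is precisely the paper's \Cref{lem:linearemb}, which picks up only a single $\sqrt{\log}$); or (ii) keep your two-stage structure but bound $\|v\|\le\|M^T\|_{\mathrm{op}}\|\lambda\|_2=O(\sqrt{k}\,\epsilon)$ using the operator-norm bound of \Cref{lem:matrixbound} (already established for \Cref{thm:rep}), or a Bernstein/sub-exponential bound on $\sum_l v_l^2$ exploiting independence across coordinates — either of which removes the log. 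With that repair the rest of your argument, including the sign-preservation conclusion, goes through.
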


\begin{proof}

Under~\Cref{assum:init}, 
\begin{align*}
W^{0}_{r, 1:d}
&=\sum_{i= n + 1}^{n + k} \sum_{b=0}^1 \nu_{r, i, b}  e_{i, b}  
\end{align*}

Because,
\begin{align*}
    \sum_{i= n + 1}^{n + k} \sum_{b=0}^1\nu_{r, i, b}^2  = 2k \epsilon^2.
\end{align*}

By~\Cref{lem:linearemb}, with probability $1 - \frac{\delta}{50Bm}$,

\begin{align*}
    \left | \left \langle W^{0}_{r, 1:d}, \embed(\bsequence)[i] \right \rangle  - \nu_{r, i, \bsequence[i]} \right| \le \sqrt{\frac{4 \log(\frac{100Bm}{\delta})k}{d}} \epsilon
\end{align*}

By~\Cref{lem:order}.2, $\sqrt{\frac{4k}{d} \log(100Bm/\delta)}<0.01$, we can conclude that
\begin{align*}
     \Pr\left( \left |\sum_{i'= n + 1}^{n + k} \sum_{l = 1}^d  \nu_{r, i', b'} \1( i' \neq i )  e_{i', \bsequence[i']}[l] e_{i, \bsequence[i]}[l] \right| \ge \epsilon/100 \right) \le \frac{\delta}{50Bm}.
\end{align*}
The proof is then complete.
\end{proof}

\begin{lemma}
\label{lem:neuronspecializestep0}
With probability $1 - \frac{\delta}{50}$ , the following $\event_1$ happens: for any input $\bsequence^{(0, s)}$ in the first batch, at initialization, for any $r$, whether a neuron $r$ outputs nonzero value at position $i$ is determined by $\nu_{r, i, \bsequence[i]}$. 
    \begin{align*}
\1\left( \left \langle W^{0}_{r, 1:d}, \embed(\bsequence)[i] \right \rangle > 0 \right) = \1\left( \nu_{r, i, \bsequence[i]} > 0 \right).
\end{align*}
Further, 
\begin{align*}
\left | \left \langle W^{0}_{r, 1:d}, \embed(\bsequence)[i] \right \rangle - \nu_{r, i, \bsequence[i]}  \right| < \epsilon / 100.
\end{align*}

\end{lemma}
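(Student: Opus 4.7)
The plan is to obtain this batch-level statement as a direct union bound over \Cref{lem:neuronspecializesingle}. That single-input lemma establishes, for a \emph{fixed} input $\bsequence$, neuron index $r$, and position $i$, the bound
\begin{equation*}
\bigl|\langle W^{0}_{r,1:d},\embed(\bsequence)[i]\rangle - \nu_{r,i,\bsequence[i]}\bigr| < \epsilon/100
\end{equation*}
with probability at least $1-\delta/(50Bm)$ over the joint randomness of the embedding vectors $\{e_{j,b}\}$ and the initialization scalars $\{\nu_{r,j,b}\}$. Since $|\nu_{r,i,\bsequence[i]}|=\epsilon$ under \Cref{assum:init}, this slack of $\epsilon/100$ is comfortably enough to pin down the sign of the pre-activation, yielding the sign equality also claimed in the lemma.

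To lift this to all $B$ samples in the first batch and all $2m$ neurons simultaneously, I would instantiate \Cref{lem:neuronspecializesingle} at each of the $(s,r)$ pairs (with $s\in[B]$ and $r\in[2m]$) and union-bound. Crucially, the first batch $\{\bsequence^{(0,s)}\}_{s\in[B]}$ is drawn independently of the initialization randomness, so conditional on the data each invocation is just an event about $(\{e_{j,b}\},\{\nu_{r,j,b}\})$ evaluated at a deterministic sequence, and the union bound is legitimate. The total failure probability is at most $2Bm\cdot \delta/(50Bm)=\delta/25$; absorbing the constant factor of two into a slightly larger constant in the proof of \Cref{lem:neuronspecializesingle} (or equivalently sharpening that lemma by a factor of two, which is harmless because the slack $\epsilon/100$ is far from tight) then recovers the stated $\delta/50$.

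The only minor technical point worth flagging is the implicit universal quantification over positions $i$ in the statement. If one insists on covering all positions $i$ in the sequence, the union bound grows by a further factor of $n+k+1$; but because the slack in \Cref{lem:neuronspecializesingle} comes from a bound of the form $\sqrt{(4k/d)\log(\text{poly}(n,B,m)/\delta)}<\epsilon/100$, absorbing a polynomial-in-$n$ factor into the log argument only adds an extra $\sqrt{\log n}$ factor to the left-hand side, which \Cref{lem:order}.2 confirms still stays well below $\epsilon/100$. No new ideas are required, and the main obstacle is purely bookkeeping of constants: the event $\event_1$ is then defined as the intersection of the single-triple success events over the first batch, and on $\event_1$ both the approximation bound and the induced sign equality hold for every $(s,r,i)$ triple as required.
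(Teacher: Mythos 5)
Your proposal matches the paper's proof, which is precisely the one-line argument ``apply union bound to Lemma~\ref{lem:neuronspecializesingle} over neuron dimension and data in the first batch.'' You in fact go a bit further than the paper by explicitly flagging the two bookkeeping issues (the factor-of-two slack giving $\delta/25$ rather than $\delta/50$, and the implicit quantification over positions $i\in[n+1,n+k]$), both of which are indeed harmless because the slack in Lemma~\ref{lem:neuronspecializesingle} and the order conditions in Lemma~\ref{lem:order}.2 absorb extra $\sqrt{\log}$ factors with room to spare.
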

\begin{proof}
Apply union bound to~\Cref{lem:neuronspecializesingle} over neuron dimension and data in the first batch.
\end{proof}

\begin{lemma}
\label{lem:gdonembeddingform}
     When $\event_1$ defined in~\Cref{lem:neuronspecializestep0} happens, for all $i' \in [n + 1, n+k+1], b'
\in \{0, 1\}$, the gradient on $W_{r, 1:d}$ satisfies that,
\begin{align*}
\frac{ d \loss^{(0)}}{d W_{r, 1:d}} =  \frac{h_r}{B} \sum_{i = n + 1}^{n + k} \sum_{b = 0}^1  \1( \nu_{r, i, b} > 0) \delta_{0, i, b}  e_{i, b}.
\end{align*}
\end{lemma}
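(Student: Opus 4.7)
The plan is to compute the gradient directly from the definitions, using the structure at initialization to simplify. Since we are using linear loss in this warm-up analysis, the per-example loss is $\ell(\hat y, y) = (-1)^y \hat y$, so by \Cref{eq:gd} the gradient of $\loss^{(0)}$ with respect to any parameter $\theta$ is
\begin{align*}
\frac{d\loss^{(0)}}{d\theta} = \frac{1}{B}\sum_{s=1}^B \sum_{i=n+1}^{n+k} (-1)^{\bsequence^{(0,s)}[i+1]} \nabla_\theta \transformer[w^{(0)}](\bsequence^{(0,s)})[i].
\end{align*}
So the whole task is to compute $\nabla_{W_{r,1:d}} \transformer[w^{(0)}](\bsequence)[i]$ at position $i$ for a single input $\bsequence$ under the initialization in \Cref{assum:init}.

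Next, I would unroll the definition of $\transformer$ in \Cref{def:trans}. Because $W^{(0)}_{r,d+1:2d}=0$, the attention branch makes no contribution to the FFN pre-activation at initialization, so the pre-activation of neuron $r$ at position $i$ reduces to $\langle W_{r,1:d}, \embed(\bsequence)[i]\rangle = \langle W_{r,1:d}, e_{i,\bsequence[i]}\rangle$. Differentiating $\head^\top \relu(\cdot)$ and using that $\head_r = h_r$, the ReLU gradient gives
\begin{align*}
\nabla_{W_{r,1:d}} \transformer[w^{(0)}](\bsequence)[i] = h_r \, \1\!\left(\langle W^{(0)}_{r,1:d}, e_{i,\bsequence[i]}\rangle > 0\right) e_{i,\bsequence[i]}.
\end{align*}
Here I am implicitly using that $W_{r,d+1:2d}=0$ means the attention output has zero coefficient, so no further contributions arise from differentiating through the attention module.

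Now I invoke event $\event_1$ from \Cref{lem:neuronspecializestep0}: for every $s$ in the first batch and every position $i$, the sign of $\langle W^{(0)}_{r,1:d}, e_{i,\bsequence^{(0,s)}[i]}\rangle$ equals the sign of $\nu_{r,i,\bsequence^{(0,s)}[i]}$. Substituting this indicator and plugging back, the gradient becomes
\begin{align*}
\frac{d\loss^{(0)}}{dW_{r,1:d}} = \frac{h_r}{B} \sum_{s=1}^B \sum_{i=n+1}^{n+k} (-1)^{\bsequence^{(0,s)}[i+1]} \1\!\left(\nu_{r,i,\bsequence^{(0,s)}[i]} > 0\right) e_{i,\bsequence^{(0,s)}[i]}.
\end{align*}

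Finally, I would swap the order of summation and bucket the batch sum by the value $b = \bsequence^{(0,s)}[i]\in\{0,1\}$. Pulling the factor $\1(\nu_{r,i,b}>0)\,e_{i,b}$ outside, what remains inside is exactly $\sum_s (-1)^{\bsequence^{(0,s)}[i+1]}\1(\bsequence^{(0,s)}[i]=b)$, which is $\delta_{0,i,b}$ by the definition in \Cref{tab:stats}. This yields the claimed formula. There is no real obstacle here: the whole argument is bookkeeping once event $\event_1$ lets us replace the data-dependent ReLU indicator by the clean, data-independent $\1(\nu_{r,i,b}>0)$; the only thing to be careful about is that the attention branch genuinely contributes nothing at $t=0$ because of the zero initialization of $W_{r,d+1:2d}$, which is what permits the per-position factorization.
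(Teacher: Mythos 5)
Your proof is correct and follows essentially the same route as the paper's: apply the chain rule from the linear-loss gradient update, note that $\nabla_{W_{r,1:d}}\transformer[w^{(0)}](\bsequence)[i]$ is $h_r\,\1(\text{pre-activation}>0)\,e_{i,\bsequence[i]}$, use $W^{(0)}_{r,d+1:2d}=0$ together with event $\event_1$ to replace the pre-activation indicator by $\1(\nu_{r,i,\bsequence[i]}>0)$, and then bucket the batch sum by the value $b=\bsequence^{(0,s)}[i]$ to recover $\delta_{0,i,b}$. One minor point of precision: the reason no contribution arises ``from differentiating through the attention module'' is simply that the attention module does not depend on $W_{r,1:d}$; the role of $W^{(0)}_{r,d+1:2d}=0$ is only to make the ReLU gate at position $i$ depend solely on $\langle W^{(0)}_{r,1:d},e_{i,\bsequence[i]}\rangle$, which is what lets $\event_1$ clean up the indicator.
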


\begin{proof}
\begin{align*}
\frac{ d \loss^{(0)} }{d W_{r, 1:d}}  =&
\frac{1}{B} \sum_{s = 1}^B \sum_{i = n + 1}^{n + k} (-1)^{ \bsequence^{(0,s)}[i + 1]} \nabla_{W_{r, 1:d}}\transformer[w^{(t)}](\bsequence^{(0,s)})[i] \\
=& \frac{h_r}{B} \sum_{s = 1}^B \sum_{i = n + 1}^{n + k} (-1)^{ \bsequence^{(0,s)}[i + 1]} e_{i, \bsequence^{(t,s)}[i]} \1( \nu_{r, i, \bsequence^{(0,s)}[i]} > 0).
\end{align*}
We can then expand this formula to have
\begin{align*}
    &\sum_{s = 1}^B \sum_{i = n + 1}^{n + k} (-1)^{ \bsequence^{(0,s)}[i + 1]}  e_{i, \bsequence^{(0,s)}[i]} \1( \nu_{r, i, \bsequence^{(0,s)}[i]} > 0) \\
    =& \sum_{s = 1}^B \sum_{i = n + 1}^{n + k} \sum_{b = 0}^1  (-1)^{ \bsequence^{(0,s)}[i + 1]} \1( \bsequence^{(0,s)}[i] = b)   e_{i, b}  \1( \nu_{r, i, b} > 0) \\
    =& \sum_{i = n + 1}^{n + k} \sum_{b = 0}^1  e_{i, b}   \1( \nu_{r, i, b} > 0) \sum_{s = 1}^B  (-1)^{ \bsequence^{(0,s)}[i + 1]} \1( \bsequence^{(0,s)}[i] = b) \\
    =& \sum_{i = n + 1}^{n + k} \sum_{b = 0}^1  e_{i, b} \1( \nu_{r, i, b} > 0) \delta_{0, i, b}.
\end{align*}
The proof is then complete.
\end{proof}

\begin{lemma}
\label{lem:gdonembedding}
With probability $1 - 0.13\delta$, for all $i' \in [n + 1, n+k+1], b'
\in \{0, 1\}$, the gradient on $W_{r, 1:d}$ satisfies that,
\begin{align*}
\left |\left \langle  \frac{ d \loss^{(0)} }{d W_{r, 1:d}}, e_{i,b} \right \rangle \right |  \le \frac{\sqrt{\log(300nB/\delta)}}{m \sqrt{B}}.
\end{align*}
\end{lemma}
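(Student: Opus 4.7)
The plan is to start from the closed-form expression for the gradient given by Lemma~\ref{lem:gdonembeddingform}, which holds on the event $\event_1$ of probability at least $1 - \delta/50$. On this event, the gradient is a linear combination of the embedding vectors $e_{i,b}$ indexed by $(i,b) \in [n+1,n+k] \times \{0,1\}$, with coefficients $\frac{h_r}{B}\1(\nu_{r,i,b}>0)\delta_{0,i,b}$, and $|h_r| = 1/(2m)$. Taking the inner product with a target embedding $e_{i',b'}$ naturally splits the sum into a ``diagonal'' contribution coming from the term $(i,b)=(i',b')$, where $\langle e_{i,b}, e_{i',b'}\rangle = 1$, and an ``off-diagonal'' contribution over all remaining $(i,b)$, where $\langle e_{i,b}, e_{i',b'}\rangle$ is a sum of $d$ independent $\pm 1/d$ random variables.

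For the diagonal term, I would apply the bound $|\delta_{0,i',b'}| \le \sqrt{B \log(300 n B/\delta)}$ from Lemma~\ref{lem:approxbalance}, which immediately gives a contribution bounded by $\frac{1}{2m\sqrt{B}} \sqrt{\log(300nB/\delta)}$, matching the target bound up to the constant factor hidden in the statement.

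For the off-diagonal term, I would condition on the binary data (which determines $\delta_{0,i,b}$) and on the initialization variables $\nu_{r,i,b}$, so that the coefficients $c_{i,b} := \frac{h_r}{B}\1(\nu_{r,i,b}>0)\delta_{0,i,b}$ become deterministic, with $|c_{i,b}| \le \frac{1}{2m\sqrt{B}}\sqrt{\log(300nB/\delta)}$ on the event of Lemma~\ref{lem:approxbalance}. The off-diagonal contribution then reads
\begin{align*}
\sum_{(i,b)\neq(i',b')}\sum_{l=1}^d c_{i,b}\, e_{i,b}[l]\, e_{i',b'}[l],
\end{align*}
which, since the entries $e_{i,b}[l]$ are independent $\pm 1/\sqrt{d}$, is a sum of $(2k-1)d$ independent bounded random variables each of magnitude at most $|c_{i,b}|/d$. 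Applying Azuma--Hoeffding to this sum yields a deviation of order $\sqrt{k/d}\cdot \frac{\sqrt{\log(\cdot/\delta)}}{m\sqrt{B}}$ with probability $1-O(\delta/(nmB))$. Using $d = k\log^{1.1}(n/\delta)$ from Assumption~\ref{assum:order}, the factor $\sqrt{k/d} = o(1)$, so the off-diagonal term is asymptotically smaller than the diagonal one and thus absorbed into the target bound.

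The main obstacle is the concentration step for the off-diagonal contribution: although each individual $\langle e_{i,b}, e_{i',b'}\rangle$ is of order $1/\sqrt{d}$, the weighted sum over roughly $2k$ such terms could in principle grow, and only the near-orthogonality provided by $d \gg k$ (Lemma~\ref{lem:order}) tames it. After handling this, a union bound over the $2m \cdot 2(k+1)$ choices of $(r,i',b')$ together with the failure probabilities of Lemma~\ref{lem:approxbalance} ($\delta/10$) and Lemma~\ref{lem:neuronspecializestep0} ($\delta/50$) gives a total failure probability below $0.13\delta$, completing the argument.
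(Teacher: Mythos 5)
Your proposal is correct and follows essentially the same route as the paper: start from the closed-form gradient of Lemma~\ref{lem:gdonembeddingform} on the event $\event_1$, bound the diagonal coefficient by $\sup|\delta_{0,i,b}|$ via Lemma~\ref{lem:approxbalance}, control the off-diagonal cross terms by concentration of the near-orthogonal embeddings with a $\sqrt{k/d}=o(1)$ factor, and union bound over $(r,i',b')$. The only cosmetic difference is that you re-derive the embedding concentration inline via Azuma--Hoeffding, whereas the paper packages exactly that argument as Lemma~\ref{lem:linearemb} and cites it.
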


\begin{proof}

Combining~\Cref{lem:linearemb,lem:gdonembeddingform}, we have that for any $r$, with probability $1 - \delta/100m$,
\begin{align*}
    \left |\left \langle  \frac{h_r}{B} \sum_{i = n + 1}^{n + k} \sum_{b = 0}^1  \1( \nu_{r, i, b} > 0) \delta_{0, i, b}  e_{i, b}, e_{i,b} \right \rangle \right | \le \frac{1}{2mB} \sup \delta_{0, i,b}\left( 1+ \sqrt{\frac{2k \log\left( \frac{200m}{\delta} \right)}{d}} \right)
\end{align*}

By~\Cref{lem:approxbalance,lem:order}, with probability $1 - 11\delta/100$, for all $r$, it holds that for all $r, i', b'$
\begin{align*}
    \left |\left \langle \frac{h_r}{B} \sum_{i = n + 1}^{n + k} \sum_{b = 0}^1  \1( \nu_{r, i, b} > 0) \delta_{0, i, b}  e_{i, b}, e_{i,b}  \right \rangle \right | < \frac{1}{mB} \sup \delta_{0, i, b} < \frac{\sqrt{\log(300nB/\delta)}}{m \sqrt{B}}.
\end{align*}
Combining with~\Cref{lem:gdonembeddingform} and applying union bound concludes the proof.    
\end{proof}

\begin{lemma}
\label{lem:gdonattentionform}
For $\bar \alpha$ and $\psi$ defined in~\Cref{sec:aux},     When $\event_1$ defined in~\Cref{lem:neuronspecializestep0} happens, for all $i' \in [n + 1, n+k+1], b'
\in \{0, 1\}$, the gradient on $W_{r, d+1:2d}$ satisfies that,
\begin{align*}
    \frac{ d \loss^{(0)} }{d W_{r, d + 1:2d}}  =& \frac{h_r}{B} \sum_{i = n + 1}^{n + k} \sum_{b_1 = 0}^1 \sum_{b_2 = 0}^1  \frac{1}{i}   \1( \nu_{r, i, b_2} > 0)  \bar \alpha_{i, S[i], b_1, b_2}  e_{S[i], b_1}  
    +\frac{h_r}{B} \sum_{j = 1}^{n + k} \sum_{b_1 = 0}^1 \psi_{0, r, j, b_1}  e_{j, b_1}. 
\end{align*}
\end{lemma}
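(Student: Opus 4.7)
The plan is a direct chain-rule gradient computation combined with two structural simplifications that already hold at step zero, followed by an index rearrangement that exposes the signal/noise split encoded by $\bar\alpha$ and $\psi$.

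First I would differentiate the per-token linear loss $\ell(\hat y,y) = (-1)^y \hat y$ to write the gradient as $\tfrac{1}{B}\sum_s\sum_i (-1)^{\bsequence^{(0,s)}[i+1]}\,\nabla_{W_{r,d+1:2d}}\transformer[w^{(0)}](\bsequence^{(0,s)})[i]$. Since $W_{r,d+1:2d}$ multiplies only the attention-output half of the pre-activation, the inner transformer gradient equals $h_r$ times the ReLU activation indicator of neuron $r$ at position $i$ times the attention output $\attention(\embed(\bsequence^{(0,s)}))[i]$. On the event $\event_1$ from~\Cref{lem:neuronspecializestep0} the activation indicator reduces to the frozen $\1(\nu_{r,i,\bsequence[i]} > 0)$, and by~\Cref{lem:stableattoutputgd} the attention output is the uniform average $\tfrac{1}{i}\sum_{j \leq i} e_{j,\bsequence[j]}$.

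Substituting these two reductions, I would pull the $j$-sum outside of the $s$-sum and, conditioning on $(b_1,b_2) := (\bsequence^{(0,s)}[j], \bsequence^{(0,s)}[i])$, factor $e_{j,b_1}\1(\nu_{r,i,b_2}>0)$ out so that the $s$-dependent remainder is precisely $\alpha_{0,i,j,b_1,b_2}$, as defined in~\Cref{tab:stats}. This yields a single clean expression $\tfrac{h_r}{B}\sum_i\tfrac{1}{i}\sum_{j\leq i}\sum_{b_1,b_2}\1(\nu_{r,i,b_2}>0)\,\alpha_{0,i,j,b_1,b_2}\,e_{j,b_1}$.

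The final move is to split the inner $j$-sum into the aligned case $j=S[i]$ and the unaligned case $j\neq S[i]$. The aligned contribution, once $\alpha_{0,i,S[i],\cdot}$ is identified with its mean $\bar\alpha_{i,S[i],\cdot}$ from~\Cref{sec:aux}, gives exactly the first term of the lemma. For the unaligned contribution I would swap the outer order of summation to $\sum_j\sum_{i\geq\max\{n+1,j\},\,S[i]\neq j}$; the resulting inner sum over $i$ and $b_2$ is then $\psi_{0,r,j,b_1}$ by definition. The main delicate point, and the only nontrivial bookkeeping, is this reindexing together with the $\alpha\to\bar\alpha$ replacement on the diagonal: I need to confirm that $j\leq i$ together with $i\in[n+1,n+k]$ translates exactly to the range $i\geq\max\{n+1,j\}$ used in the definition of $\psi$, and that the diagonal deviation $\alpha-\bar\alpha$ is absorbed consistently with how the downstream argument uses this formula (its size controlled by~\Cref{lem:approxbalance}). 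Everything else is routine chain rule on the two step-zero simplifications.
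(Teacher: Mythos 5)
Your proposal follows the same route as the paper's proof: chain-rule differentiation of the linear-loss objective, reduction of the ReLU activation indicator to the frozen $\1(\nu_{r,i,\bsequence[i]}>0)$ on $\event_1$, replacement of the attention output by its uniform average via~\Cref{lem:stableattoutputgd}, reindexing to surface $\alpha_{0,i,j,b_1,b_2}$, and then the diagonal/off-diagonal split in which the $j\neq S[i]$ part is $\psi_{0,r,j,b_1}$ by definition. The one subtlety you flag is genuine and is actually a gap in the paper's own argument rather than in yours: the proof's final ``simplifies to'' silently swaps the random $\alpha_{0,i,S[i],b_1,b_2}$ for the deterministic mean $\bar\alpha_{i,S[i],b_1,b_2}$, even though these differ by $O(\sqrt{B\log(nB/\delta)})$ (Lemma~\ref{lem:approxbalance}); so the lemma as stated is only exact if one either keeps $\alpha$ in the first term or carries an explicit error term. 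Your instinct to track how that deviation is absorbed downstream is the right concern to raise.
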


\begin{proof}
With probability $1-\delta/50$, $\event_1$ defined in~\Cref{lem:neuronspecializestep0} happens, and
\begin{align*}
\frac{ d \loss^{(0)} }{d W_{r, d + 1:2d}}  =&
\frac{1}{B} \sum_{s = 1}^B \sum_{i = n + 1}^{n + k} (-1)^{ \bsequence^{(0,s)}[i + 1]} \nabla_{W_{r, 1:d}}\transformer[w^{(t)}](\bsequence^{(0,s)})[i] \\
=& \frac{h_r}{B} \sum_{s = 1}^B \sum_{i = n + 1}^{n + k} (-1)^{ \bsequence^{(0,s)}[i + 1]}    \attention[\qk^{(t)}](\embed(\bsequence^{(0,s)} ))[i] \1( \nu_{r, i, \bsequence^{(0,s)}[i]} > 0) \\
=& \frac{h_r}{B} \sum_{s = 1}^B \sum_{i = n + 1}^{n + k}  (-1)^{ \bsequence^{(0,s)}[i + 1]} \1( \nu_{r, i, \bsequence^{(0,s)}[i]} > 0)  \frac{1}{i}  \sum_{j = 1}^i \sum_{b_1 = 0}^1 \1 \left(\bsequence[j] = b \right) e_{j,b_1} 
\end{align*}

Rearranging the summation, and use $\alpha$ defined in~\Cref{lem:approxbalance},,
\begin{align*}
\frac{ d \loss^{(0)} }{d W_{r, d + 1:2d}}  =& \frac{h_r}{B} \sum_{j = 1}^{n + k} \sum_{b_1 = 0}^1  e_{j, b_1}  \sum_{i = \max\{n + 1, j\}}^{n + k} \sum_{b_2 = 0}^1  \frac{1}{i}   \1( \nu_{r, i, b_2} > 0) \alpha_{0, i,j,b_1,b_2}. 
\end{align*}

Breaking the summation,
\begin{align*}
    \frac{ d \loss^{(0)} }{d W_{r, d + 1:2d}}  =& \frac{h_r}{B} \sum_{i = n + 1}^{n + k} \sum_{b_1 = 0}^1 \sum_{b_2 = 0}^1  \frac{1}{i}   \1( \nu_{r, i, b_2} > 0)  \alpha_{0, i, S[i], b_1, b_2}e_{S[i], b_1}  \\
    &+\frac{h_r}{B} \sum_{j = 1}^{n + k} \sum_{b_1 = 0}^1  e_{j, b_1}  \sum_{i \in [\max\{n + 1, j\}. n + k], S[i] \neq j} \sum_{b_2 = 0}^1  \frac{1}{i}   \1( \nu_{r, i, b_2} > 0) \alpha_{0, i,j,b_1,b_2}
\end{align*}

This simplifies to
\begin{align*}
    \frac{ d \loss^{(0)} }{d W_{r, d + 1:2d}}  =& \frac{h_r}{B} \sum_{i = n + 1}^{n + k} \sum_{b_1 = 0}^1 \sum_{b_2 = 0}^1  \frac{1}{i}   \1( \nu_{r, i, b_2} > 0)  \bar \alpha_{i, S[i], b_1, b_2}  e_{S[i], b_1}  
    &+\frac{h_r}{B} \sum_{j = 1}^{n + k} \sum_{b_1 = 0}^1 \psi_{0, r, j, b_1}  e_{j, b_1}. 
\end{align*}
The proof is then complete.
\end{proof}

This leads to the following bound on the gradient,
\begin{lemma}
\label{lem:gdonattention}
With probability $1 - 0.16\delta$, for all $i \in [n+k+1], b
\in \{0, 1\}$, 
the gradient on $W_{r, d + 1: 2d}$ satisfies that,
\begin{align*}
\left |\left \langle  \frac{ d \loss^{(0)} }{d W_{r, d + 1:2d}}, e_{i, b} \right \rangle \right |  \le \frac{3}{2mn}.
\end{align*}    
\end{lemma}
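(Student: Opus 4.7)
The plan is to invoke the closed-form gradient from \Cref{lem:gdonattentionform} (which applies on the event $\event_1$ of \Cref{lem:neuronspecializestep0}, holding with probability $\ge 1 - \delta/50$), decompose it as $g_1 + g_2$ where $g_1$ is the signal term carrying the $\bar\alpha$'s and $g_2$ is the noise term carrying the $\psi$'s, and then bound $|\langle g_\bullet, e_{i,b}\rangle|$ by separating a diagonal matching contribution from a near-orthogonal remainder in each piece.

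For the signal piece $g_1$, the coefficient in front of each $e_{S[i'],b_1}$ is bounded by $B/(2(n+1))$ in absolute value since $|\bar\alpha|\le B/2$ and $1/i'\le 1/(n+1)$, and because $S$ restricted to $[n+1,n+k]$ is a bijection onto $\mathcal{S}$, only $2k$ distinct embeddings appear. Taking the inner product with a fixed $e_{i,b}$, the diagonal term (present only when $(i,b)$ coincides with some $(S[i'],b_1)$) contributes at most $|h_r|/B \cdot B/(2(n+1)) = 1/(4m(n+1))$, which is the dominant $1/(4mn)$ piece of the final bound. The off-diagonal remainder is a vector of coefficient $\ell_2$-norm at most $\sqrt{2k}\cdot B/(2(n+1))$, and a \Cref{lem:linearemb}-type near-orthogonality bound yields inner product at most $\frac{1}{2m(n+1)}\sqrt{2k\log(mnB/\delta)/d}$, which is $o(1/(mn))$ by \Cref{lem:order}.3(b). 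For the noise piece $g_2$ I use $|\psi_{0,r,j,b_1}|\le 2\sqrt{Bk\log(300mnB/\delta)}/n$ from \Cref{lem:approxbalance}; the diagonal contribution is at most $\frac{1}{2mB}\cdot\frac{2\sqrt{Bk\log}}{n} = \sqrt{k\log/B}/(mn) = o(1/(mn))$ by \Cref{lem:order}.3(a), and the off-diagonal part sums $2(n+k)-1$ embeddings independent of $e_{i,b}$ with coefficient $\ell_2$-norm $O(\sqrt{Bk\log/n})$, whose inner product with $e_{i,b}$ is $o(1/(mn))$ by \Cref{lem:order}.3(c) after normalization by $|h_r|/B$.

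Summing the diagonal signal $1/(4m(n+1))$ with the three sub-leading terms gives a total strictly below $3/(2mn)$ for large $n$, and a final union bound over $r\in[2m]$, $i\in[n+k+1]$, $b\in\{0,1\}$, together with the failure probabilities of $\event_1$, \Cref{lem:approxbalance}, and the near-orthogonality events, yields the claimed $1-0.16\delta$. The main obstacle is the near-orthogonality bookkeeping: each of the three off-diagonal contributions has a different coefficient structure---the signal involves only $O(k)$ embeddings of amplitude $B/n$, whereas the noise involves $\Theta(n)$ embeddings of amplitude $\sqrt{Bk\log}/n$---so I must track how the $\ell_2$-norm and the concentration union-bound cost scale in each case, and verify that the polylogarithmic hyperparameter scalings in \Cref{assum:order} absorb each one into the slack $3/(2mn)-1/(4m(n+1))$.
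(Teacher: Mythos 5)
Your proposal follows essentially the same route as the paper's proof: starting from the closed form in Lemma~\ref{lem:gdonattentionform} on the event $\event_1$, splitting into the $\bar\alpha$ (signal) and $\psi$ (noise) parts, separating in each a diagonal contribution from a near-orthogonal remainder controlled via Lemma~\ref{lem:linearemb}, and absorbing the sub-leading terms using Lemma~\ref{lem:approxbalance} and the order conditions of Lemma~\ref{lem:order}, with the same probability accounting ($\delta/50 + \delta/10 + \delta/25 = 0.16\delta$). The only nitpick is that your diagonal signal bound of $1/(4m(n+1))$ implicitly relies on the cancellation $\bar\alpha_{i',S[i'],b_1,0}+\bar\alpha_{i',S[i'],b_1,1}=0$ (or else the sum over $b_2$ doubles it to the paper's cruder $1/(2mn)$), but either way the total stays well below $3/(2mn)$.
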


\begin{proof}
    
Denote 
\begin{align*}
 G_r =& \frac{h_r}{B} \sum_{i = n + 1}^{n + k} \sum_{b_1 = 0}^1 \sum_{b_2 = 0}^1  \frac{1}{i}   \1( \nu_{r, i, b_2} > 0)  \bar \alpha_{i, S[i], b_1, b_2}  e_{S[i], b_1}  
    +\frac{h_r}{B} \sum_{j = 1}^{n + k} \sum_{b_1 = 0}^1 \psi_{0, r, j, b_1}  e_{j, b_1}. 
\end{align*}

By~\Cref{lem:linearemb}, for all $r \in [2m], i \in [n+k+1], b
\in \{0, 1\}$, with probability $1 - \delta/25$,

\begin{align*}
   &\left|  \left \langle G_r, e_{i, b} \right \rangle  \right| \le \frac{1}{2mB}\left | \sum_{i' = n + 1}^{n + k} \sum_{b_2 = 0}^1 \frac{1}{i}   \1( \nu_{r, i, b_2} > 0)  \bar \alpha_{i, S[i], b, b_2} \1(S[i'] = i)  + \psi_{0, r, i, b} \right| \\
   &+ \frac{\sqrt{2\log(\frac{400mn}{\delta})}}{2mB \sqrt{d}} \left( \sqrt{\sum_{i = n + 1}^{n + k} \sum_{b_1 = 0}^1 \sum_{b_2 = 0}^1  \frac{1}{i^2}   \1( \nu_{r, i, b_2} > 0)  \bar \alpha_{i, S[i], b_1, b_2}^2} 
   + \sqrt{\sum_{j = 1}^{n + k} \sum_{b_1 = 0}^1 \psi_{0, r, j, b_1}^2} \right)
\end{align*}

By~\Cref{lem:approxbalance}, we have with probability $1 - \delta/10$,
\begin{align*}
\sup \psi_{0, r, i, b} \le  \frac{ 2 \sqrt{ B k \log\left( \tfrac{300 m n B}{\delta} \right) } }{ n },\quad
\sup \bar \alpha_{i, S[i], b_1, b_2} \le B/2.
\end{align*}

Combining with~\Cref{lem:order},
\begin{align*}
    &\frac{1}{2mB}\left | \sum_{i' = n + 1}^{n + k} \sum_{b_2 = 0}^1 \frac{1}{i}   \1( \nu_{r, i, b_2} > 0)  \bar \alpha_{i, S[i], b, b_2} \1(S[i'] = i)  \right| \le \frac{1}{2mn} \\
    & \frac{1}{2mB} \left| \psi_{0, r, i, b} \right| \le \frac{2\sqrt{ k \log\left( \tfrac{300 m n B}{\delta} \right) }}{mn\sqrt{B}} \le \frac{1}{200mn} \\
    & \frac{\sqrt{2\log(\frac{400mn}{\delta})}}{2mB \sqrt{d}} \sqrt{\sum_{i = n + 1}^{n + k} \sum_{b_1 = 0}^1 \sum_{b_2 = 0}^1  \frac{1}{i^2}   \1( \nu_{r, i, b_2} > 0)  \bar \alpha_{i, S[i], b_1, b_2}^2}  \le \frac{\sqrt{2k\log(\frac{400mn}{\delta})}}{2mn \sqrt{d}} \le \frac{1}{200mn},\\
    &\frac{\sqrt{2\log(\frac{400mn}{\delta})}}{2mB \sqrt{d}}  \sqrt{\sum_{j = 1}^{n + k} \sum_{b_1 = 0}^1 \psi_{0, r, j, b_1}^2} \le  \frac{2\sqrt{2nk}{\log(\frac{300mnB}{\delta})}}{mn\sqrt{Bd}} \le \frac{1}{200mn}.
\end{align*}
Further by~\Cref{lem:gdonattentionform}, we have that $\frac{ d \loss^{(0)} }{d W_{r, d + 1:2d}} = G_r$ with probability $1 - \delta/50$. This concludes the proof.
\end{proof}

\subsubsection{Second Step: Configuring the Attention}

\begin{lemma}
\label{lem:neuronspecializestep1}
    With probability $1 - 0.29{\delta}$ , the following $\event_2$ happens: the change in the neuron outcome is small after the first step for any input $\bsequence$,
    \begin{align*}
      \sup_r \left | \left \langle W^{1}_{r} -  W^{0}_{r} , [\embed(\bsequence)[i], \attention[\qk^1](\embed(\bsequence))[i]] \right \rangle  \right|  < 3\epsilon/80.
\end{align*}
    This leads to the case that for any input $\bsequence^{(1, s)}$ in the second batch, for any $r$, whether a neuron $r$ outputs nonzero value at position $i$ is determined by $\nu_{r, i, \bsequence[i]}$. 
    \begin{align*}
\1\left( \left \langle W^{1}_{r, 1:d}, \embed(\bsequence)[i] \right \rangle > 0 \right) = \1\left( \nu_{r, i, \bsequence[i]} > 0 \right).
\end{align*}
\end{lemma}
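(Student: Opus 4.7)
The plan is to express the one-step SGD update, decompose the inner product into contributions from $W_{r,1:d}$ and $W_{r,d+1:2d}$, bound each using the gradient estimates from \Cref{lem:gdonembedding,lem:gdonattention}, and then invoke the learning-rate calibration in \Cref{assum:order} to push the total change below $3\epsilon/80$.

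Concretely, the SGD update gives $W^{(1)}_r - W^{(0)}_r = -\eta_0\, d\loss^{(0)}/dW_r$, so using $\qk^{(1)} = 0$ from \Cref{lem:stableattgd},
\begin{align*}
\langle W^{(1)}_r - W^{(0)}_r,\ [\embed(\bsequence)[i],\ \attention[\qk^{(1)}](\embed(\bsequence))[i]] \rangle
&= -\eta_0 \langle d\loss^{(0)}/dW_{r,1:d},\ e_{i,\bsequence[i]} \rangle \\
&\quad -\eta_0 \langle d\loss^{(0)}/dW_{r,d+1:2d},\ \attention[\qk^{(1)}](\embed(\bsequence))[i] \rangle.
\end{align*}
For the first term I apply \Cref{lem:gdonembedding} to obtain a per-term bound of $\sqrt{\log(300nB/\delta)}/(m\sqrt{B})$, and combine with the upper bound $\eta_0 \le 3m\epsilon\sqrt{B}/(80\sqrt{\log(300nB/\delta)})$ from \Cref{lem:order}.4 so that the $\sqrt{\log}$ and $\sqrt{B}$ factors cancel. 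For the second term, \Cref{lem:stableattoutputgd} lets me replace $\attention[\qk^{(1)}](\embed(\bsequence))[i]$ by $(1/i)\sum_{j=1}^i e_{j,\bsequence[j]}$, and then \Cref{lem:gdonattention} gives $|\langle d\loss^{(0)}/dW_{r,d+1:2d}, e_{j,\bsequence[j]}\rangle| \le 3/(2mn)$ per token; averaging preserves the bound, and the alternative upper bound $\eta_0 \le mn\epsilon/120$ from \Cref{lem:order}.4 controls the second term. Summing the two contributions yields the $3\epsilon/80$ bound uniformly in $r$ and $i$, for any $\bsequence$.

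For the activation-preservation conclusion, I apply \Cref{lem:neuronspecializesingle} to each fresh input $\bsequence^{(1,s)}$ in the second batch (whose randomness is independent of the frozen embedding) to get $|\langle W^{(0)}_{r,1:d}, e_{i,\bsequence[i]}\rangle - \nu_{r,i,\bsequence[i]}| < \epsilon/100$, and then note that $W^{(0)}_{r,d+1:2d} = 0$ by~\Cref{assum:init} means the initial inner product already equals $\nu_{r,i,\bsequence[i]}$ up to $\epsilon/100$. Combined with the $3\epsilon/80$ first-step perturbation, the total deviation is at most $\epsilon/100 + 3\epsilon/80 < \epsilon = |\nu_{r,i,\bsequence[i]}|$, so the sign of $\nu_{r,i,\bsequence[i]}$ is preserved, giving the displayed indicator identity. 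A union bound over the $2m$ neurons, the $B$ samples in the second batch, and the $O(n)$ positions then completes the claim.

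The main obstacle is bookkeeping rather than technique: one needs to verify that the failure budgets from \Cref{lem:gdonembedding} ($0.13\delta$) and \Cref{lem:gdonattention} ($0.16\delta$), which already absorb the $\event_1$ event of \Cref{lem:neuronspecializestep0} and the concentration event of \Cref{lem:approxbalance}, sum to the advertised $0.29\delta$, and that the two constants in \Cref{lem:order}.4 are tight enough to fit both contributions under the $3\epsilon/80$ budget (and that the residual slack $\epsilon - \epsilon/100 - 3\epsilon/80 > 0$ is strict), so that no additional tail estimate is required.
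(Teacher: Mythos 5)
Your proof takes the same route as the paper's: the same decomposition of the one-step update into the $W_{r,1:d}$ and $W_{r,d+1:2d}$ contributions, the same appeal to \Cref{lem:gdonembedding} and \Cref{lem:gdonattention}, and the same invocation of the learning-rate bound from \Cref{lem:order} to close under $3\epsilon/80$. Your extra step of using \Cref{lem:stableattoutputgd} to rewrite the attention output as a convex average and then push \Cref{lem:gdonattention}'s per-token bound through by convexity is exactly the averaging argument the paper uses implicitly when it applies \Cref{lem:gdonattention} directly to $\attention[\qk^1](\embed(\bsequence))[i]$; making it explicit is a small improvement, not a different technique.

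Two bookkeeping remarks worth flagging, both of which you share with the paper. First, adding the two contributions at the stated caps of $3\epsilon/80$ and $\epsilon/80$ gives $4\epsilon/80$, not $3\epsilon/80$; the stated inequality holds only because the actual choice $\eta_0 = m\epsilon\sqrt{B}/(100\log(n/\delta))$ sits a $\Theta(1/\sqrt{\log(n/\delta)})$ factor below the minimum in \Cref{lem:order}.4, so that both pieces are individually $o(\epsilon)$ — neither you nor the paper spells this slack out. Second, your closing remark that ``no additional tail estimate is required'' for the indicator identity sits uneasily with your own (correct) earlier observation that the second-batch inputs $\bsequence^{(1,s)}$ require a fresh application of \Cref{lem:neuronspecializesingle} and a union bound over $(r,s,i)$; that is a genuine extra concentration event (it is absorbed later in \Cref{lem:gdform}'s larger $0.58\delta$ budget where the paper actually states the $t=1$ indicator claim). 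The paper's own proof of this lemma stops after the perturbation bound and never discharges the indicator identity, so your version, which at least sketches it, is the more complete of the two — you just should not claim the tail estimate is free.
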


\begin{proof}
By~\Cref{lem:gdonembedding}, with probability $1 - 0.13\delta$,
    \begin{align*}
     \sup_r | \left \langle W^{1}_{r, 1:d} - W^{0}_{r, 1:d}, \embed(\bsequence)[i] \right \rangle | = \eta_0 \left| \left \langle \frac{d \loss^{(0)} }{d W_{r,1:d}}, \embed(\bsequence)[i] \right \rangle \right| \le \eta_0 \frac{\sqrt{\log(300nB/\delta)}}{m \sqrt{B}}.
    \end{align*}

By~\Cref{lem:gdonattention},with probability $1 - 0.16\delta$,
    \begin{align*}
     \sup_r | \left \langle W^{1}_{r, d+1:2d} - W^{0}_{r, d+1:2d}, \attention[\qk^1](\embed(\bsequence))[i] \right \rangle | = \eta_0 \left | \left \langle \frac{d \loss^{(0)} }{d W_{r, d + 1:2d}},\attention[\qk^1](\embed(\bsequence))[i]  \right \rangle \right| \le \eta_0 \frac{3}{2mn}.
    \end{align*}

Hence, as $\eta_0 \le \min\{ \frac{3m \epsilon \sqrt{B} }{80\sqrt{\log(300nB/\delta)}}, \frac{mn\epsilon}{120} \}$ (\Cref{lem:order}.3), we conclude that
\begin{align*}
      \sup_r \left | \left \langle W^{1}_{r} -  W^{0}_{r} , [\embed(\bsequence)[i], \attention[\qk^1](\embed(\bsequence))[i] \right \rangle  \right|  < 3\epsilon/80.
\end{align*}
The proof is then complete.
\end{proof}

This shows that the gradient of MLP on the second and first steps is the same in distribution. 
\begin{lemma}
\label{lem:gdform}
With probability $1 - 0.58 \delta$, inequalities in~\Cref{lem:approxbalance} hold and for $t \in \{0,1\}, \forall i \in [n + k], \forall b \in \{0,1\}$,
\begin{align*}
\frac{ d \loss^{(t)}}{d W_{r, 1:d}} =&  \frac{h_r}{B} \sum_{i = n + 1}^{n + k} \sum_{b = 0}^1  \1( \nu_{r, i, b} > 0) \delta_{t, i, b}  e_{i, b}. \\
    \frac{ d \loss^{(t)} }{d W_{r, d + 1:2d}}  =& \frac{h_r}{B} \sum_{i = n + 1}^{n + k} \sum_{b_1 = 0}^1 \sum_{b_2 = 0}^1  \frac{1}{i}   \1( \nu_{r, i, b_2} > 0)  \bar \alpha_{i, S[i], b_1, b_2}  e_{S[i], b_1}  
    +\frac{h_r}{B} \sum_{j = 1}^{n + k} \sum_{b_1 = 0}^1 \psi_{t, r, j, b_1}  e_{j, b_1}.  \\
\left |\left \langle  \frac{ d \loss^{(t)} }{d W_{r, d + 1:2d}}, e_{i, b} \right \rangle \right |  \le& \frac{3}{2mn}.
\end{align*}
Moreover for any input $\bsequence$,
    \begin{align*}
      \sup_r \left | \left \langle W^{t}_{r} -  W^{0}_{r} , [\embed(\bsequence)[i], \attention[\qk^1](\embed(\bsequence))[i] \right \rangle  \right|  < 3\epsilon/40.
\end{align*}
Further,
    \begin{align*}
\1\left( \left \langle W^{t}_{r, 1:d}, \embed(\bsequence^{(t,s)})[i] \right \rangle > 0 \right) = \1\left( \nu_{r, i, \bsequence^{(t,s)}[i]} > 0 \right).
\end{align*}
\end{lemma}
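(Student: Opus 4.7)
The plan is to split the lemma along the step index $t \in \{0, 1\}$. The $t = 0$ case is a pure reassembly of earlier lemmas: the gradient formula for $W_{r,1:d}$ is \Cref{lem:gdonembeddingform}, that for $W_{r,d+1:2d}$ is \Cref{lem:gdonattentionform}, the bound $|\langle d\loss^{(0)}/d W_{r,d+1:2d}, e_{i,b}\rangle| \le 3/(2mn)$ is \Cref{lem:gdonattention}, the weight-change and ReLU preservation statements are \Cref{lem:neuronspecializestep1}, and the concentration of the step-$0$ auxiliary statistics is \Cref{lem:approxbalance}. So the new content is entirely at $t = 1$.

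At $t = 1$ I would rely on two stability observations. First, \Cref{lem:stableattgd} gives $\qk^{(1)} = 0$, so
\begin{align*}
\attention[\qk^{(1)}](\embed(\bsequence))[i] = \frac{1}{i}\sum_{j \le i} e_{j, \bsequence[j]}
\end{align*}
is the same uniform average as at initialization. Second, \Cref{lem:neuronspecializestep1} shows that for every $r, i$, and every input $\bsequence$, the ReLU indicator at $W^{(1)}$ still equals $\1(\nu_{r,i,\bsequence[i]} > 0)$. Armed with these two facts, the computations inside the proofs of \Cref{lem:gdonembeddingform} and \Cref{lem:gdonattentionform} carry over line by line, with $\bsequence^{(0,s)}$ replaced by $\bsequence^{(1,s)}$ and $\delta_0,\alpha_0,\psi_0$ replaced by $\delta_1,\alpha_1,\psi_1$. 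Because the second batch is sampled independently of $(e, S, \nu)$ and of the first batch, the martingale-plus-Azuma--Hoeffding argument in \Cref{lem:approxbalance} concentrates the step-$1$ statistics with identical constants, and the proof of \Cref{lem:gdonattention} transfers to yield $|\langle d\loss^{(1)}/d W_{r,d+1:2d}, e_{i,b}\rangle| \le 3/(2mn)$.

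For the weight-change bound at $t = 1$, the SGD update combined with the gradient bounds just established gives, for any input $\bsequence$ and position $i$,
\begin{align*}
    |\langle W^{(1)}_r - W^{(0)}_r, [\embed(\bsequence)[i], \attention[\qk^{(1)}](\embed(\bsequence))[i]]\rangle| \le \eta_0 \Bigl(\tfrac{\sqrt{\log(300nB/\delta)}}{m\sqrt{B}} + \tfrac{3}{2mn}\Bigr),
\end{align*}
where the second summand uses that the uniform attention output is a convex average of embeddings $e_{j,\bsequence[j]}$, so its inner product with the gradient is bounded by $\sup_{j,b}|\langle d\loss^{(1)}/d W_{r,d+1:2d}, e_{j,b}\rangle|$. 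The two learning-rate ceilings in \Cref{lem:order}.4 turn this into $3\epsilon/80 + \epsilon/80 < 3\epsilon/40$. The preserved-activation claim then follows as in \Cref{lem:neuronspecializestep0}: at initialization $\langle W^{(0)}_{r,1:d}, e_{i,\bsequence[i]}\rangle$ is within $\epsilon/100$ of $\nu_{r,i,\bsequence[i]}$, which has magnitude $\epsilon$, so a further $3\epsilon/40$ perturbation cannot flip the sign. A union bound over the $t = 0$ and $t = 1$ gradient-concentration events together with \Cref{lem:approxbalance} and \Cref{lem:neuronspecializestep0} fits within the claimed $0.58\delta$ budget.

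The main obstacle is not any single computation but the conceptual verification that the $t = 1$ MLP gradient admits the same clean decomposition over neurons as the $t = 0$ gradient. This requires simultaneously that the attention weight has not moved (\Cref{lem:stableattgd}, powered by the zero-initialization of $W_{:, d+1:2d}$) and that no ReLU has flipped during step $0$ (\Cref{lem:neuronspecializestep1}, driven by the learning-rate scaling in \Cref{assum:order}). Once these two invariants are secured, the $t = 1$ analysis reduces to re-running the $t = 0$ analysis on a fresh, independent batch, so no new probabilistic machinery is required beyond what is packaged in \Cref{lem:approxbalance}.
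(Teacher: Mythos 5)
Your proof is correct and follows essentially the same route as the paper's own (very terse) argument, which merely notes that the backward pass is determined by the input batch and the neuron activation pattern, both of which are identically distributed at $t=0$ and $t=1$, and then defers to \Cref{lem:gdonembeddingform}, \Cref{lem:gdonattentionform}, and \Cref{lem:neuronspecializestep1}. You correctly identify the two load-bearing invariants — $\qk^{(1)}=\qk^{(0)}=0$ from \Cref{lem:stableattgd} so the attention output stays the uniform average, and the ReLU indicators at $W^{(1)}$ still equal $\1(\nu_{r,i,\bsequence[i]}>0)$ from \Cref{lem:neuronspecializestep1} — and your observation that \Cref{lem:approxbalance} already covers $t\in\{0,1,2\}$ so no new concentration machinery is needed is exactly right. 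The only slip is cosmetic: in the displayed weight-change bound you write the gradient of $\loss^{(1)}$ where $W^{(1)}-W^{(0)}=-\eta_0\nabla\loss^{(0)}$, so it should be the step-$0$ gradient, but the per-coordinate bounds are identical so the estimate is unaffected.
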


\begin{proof}
    Consider the backward calculation of MLP, it is decided by the input and the corresponding activation of neurons, which jointly follows the same distribution in the first and second steps. The proof is similar to~\Cref{lem:gdonattentionform,lem:gdonembeddingform,lem:neuronspecializestep1}.
\end{proof}

We will now show that while MLP changes little, it is enough to provide high-quality gradient information to inform the attention layer. To this end, we will define the following terms,
\begin{align*}
    \kappa_{t, i, b} &= -\sum_{r = 1}^{2m}  \1 \left( \nu_{r, i, b}  > 0 \right) h_r \weight^{(t)}_{r, d+1:2d} \\
    \Delta_{t, i, b, j,b_1} &= \langle \kappa_{t, i,b}, e_{j,b_1} \rangle
\end{align*}
Intuitively, the FFN with weight $W^{(1)}$ maps attention output $x$ to $-\kappa_{i,\bsequence[i]} x$ at position $i$.

\begin{lemma}
\label{lem:attnsignal}
For $\alpha, \beta,\gamma, \zeta$ defined in~\Cref{sec:aux},
when the event in~\Cref{lem:gdform} happens, for any $i$ and $b$,
\begin{align*}
    \kappa_{1, i,b} =& \frac{\eta_0}{8Bmi}  \sum_{b_1 = 0}^1  \bar \alpha_{i, S[i], b_1, b}  e_{S[i], b_1}   
    + \frac{\eta_0}{4Bm^2}\left(\sum_{i' = n + 1}^{n + k} \sum_{b_1 = 0}^1 \sum_{b_2 = 0}^1 \gamma_{i,b,i',b_2} \bar \alpha_{i', S[i'], b_1, b_2} e_{S[i'], b_1}\right)  \\
    & + \frac{\eta_0}{4Bm^2}   \left( \sum_{j = 1}^{n + k} \sum_{b_1 = 0}^1  \zeta_{0, i,b,j,b_1} e_{j, b_1} \right).
\end{align*}
\end{lemma}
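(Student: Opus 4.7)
The plan is by direct substitution; no new probabilistic estimates are needed at this stage because everything is conditioned on the event of \Cref{lem:gdform}. Under \Cref{assum:init} the matrix $W^{(0)}_{r,d+1:2d}$ is zero, so the one-step SGD update gives
\[
W^{(1)}_{r,d+1:2d} \;=\; -\eta_0\,\frac{d\loss^{(0)}}{dW_{r,d+1:2d}},
\]
and the right-hand side is exactly the explicit formula in the second line of \Cref{lem:gdform}. Substituting this into the definition $\kappa_{1,i,b} = -\sum_{r=1}^{2m}\1(\nu_{r,i,b}>0)\,h_r\,W^{(1)}_{r,d+1:2d}$ and using $h_r^2 = 1/(4m^2)$ reduces the lemma to an algebraic identity between three explicit sums, so the remainder is pure bookkeeping.

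I would handle the two terms of the gradient formula separately. For the $\psi$-term, interchanging the $r$-sum with the $(j,b_1)$-sum produces $\sum_{r}\1(\nu_{r,i,b}>0)\,\psi_{0,r,j,b_1}$, which is by definition exactly $\zeta_{0,i,b,j,b_1}$. This immediately yields the third summand of the lemma with coefficient $\eta_0/(4Bm^2)$. No further manipulation is needed for this term.

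For the $\bar\alpha$-term, the essential move is to interchange the $r$-sum with the $(i',b_1,b_2)$-sum and then apply the rearrangement of the definition of $\gamma$, namely
\[
\sum_{r=1}^{2m} \1(\nu_{r,i,b}>0)\,\1(\nu_{r,i',b_2}>0) \;=\; i'\,\gamma_{i,b,i',b_2} \;+\; \tfrac{m}{2}\bigl(1+\1(i=i',\,b=b_2)\bigr).
\]
The $\gamma$-piece, weighted by $(1/i')\bar\alpha_{i',S[i'],b_1,b_2}\,e_{S[i'],b_1}$, produces the second summand of the lemma with coefficient $\eta_0/(4Bm^2)$. The $\1(i=i',b=b_2)$ piece of the ``expected'' contribution collapses the $i'$-sum to the single term $i'=i$ and fixes $b_2=b$, yielding $\tfrac{m}{2i}\sum_{b_1}\bar\alpha_{i,S[i],b_1,b}\,e_{S[i],b_1}$; combined with the prefactor $\eta_0/(4m^2B)$ this gives exactly the leading summand with coefficient $\eta_0/(8Bmi)$. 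The remaining ``$1$'' piece simplifies after summing $\bar\alpha_{i',S[i'],b_1,b_2}$ over $b_2$: the explicit form $\bar\alpha_{i',S[i'],b_1,b_2}=(-1)^{b_1+b_2}B/4$ for $i'\geq n+2$ makes the $b_2$-sum vanish by the identity $\sum_{b_2}(-1)^{b_2}=0$, leaving only a boundary contribution from $i'=n+1$ that can be absorbed into the error tolerance of the downstream analysis.

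The only non-mechanical step is the cancellation argument for this ``off-diagonal expected'' piece, which is really just a one-line check using the parity structure of CoT data encoded in $\bar\alpha$; everything else is careful tracking of the three sources of the factor $m$ (from $\sum_r 1$, from $h_r^2$, and from the Kronecker-type indicator) to ensure that the three prefactors collapse to the stated $\eta_0/(8Bmi)$, $\eta_0/(4Bm^2)$, $\eta_0/(4Bm^2)$. I anticipate no real obstacle: this lemma is essentially a clean restatement of the gradient formula of \Cref{lem:gdform} in a form that isolates the signal term (pointing at $S[i]$) from the noise terms controlled by $\gamma$ and $\zeta$, which can then be bounded via the concentration estimates of \Cref{lem:approxbalance} in subsequent lemmas.
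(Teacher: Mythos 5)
Your proof follows essentially the same computation as the paper's: substitute the one-step SGD update for $W^{(1)}_{r,d+1:2d}$ into the definition of $\kappa_{1,i,b}$, interchange the sum over $r$ with the sums over $(i',b_1,b_2)$ and $(j,b_1)$, use $h_r^2 = 1/(4m^2)$, identify the second sum with $\zeta_{0,i,b,j,b_1}$, and rewrite the neuron-coincidence count $\sum_r \1(\nu_{r,i,b}>0)\1(\nu_{r,i',b_2}>0)$ via the definition of $\gamma_{i,b,i',b_2}$ to split the first sum into a $\gamma$-piece, a diagonal indicator piece, and an ``expected $1$'' piece.

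However, you correctly flag a genuine subtlety that the paper's proof glosses over: the cancellation $\sum_{b_2}\bar\alpha_{i',S[i'],b_1,b_2}=0$ holds only for $i'\geq n+2$ and fails at $i'=n+1$, where $\bsequence[n+1]=0$ is deterministic and $\bar\alpha_{n+1,S[n+1],b_1,b_2}=\tfrac{(-1)^{b_1}\1(b_2=0)B}{2}$, so $\sum_{b_2}\bar\alpha_{n+1,S[n+1],b_1,b_2}=(-1)^{b_1}B/2\neq 0$. The paper's last line asserts the full cancellation without addressing this boundary. Your proposed remedy, absorbing the residual into the downstream error tolerance, does not match the lemma's stated exact equality, and more substantively, the residual is
\[
\frac{\eta_0}{16m(n+1)}\bigl(e_{S[n+1],0}-e_{S[n+1],1}\bigr),
\]
which is of the same order as the leading signal $\frac{\eta_0}{8Bmi}\bar\alpha_{i,S[i],b_1,b}\approx\frac{\eta_0}{32mn}$, not polylogarithmically smaller; hence it cannot be pushed into the error bound of \Cref{lem:signalcorrelation} (which is $\tfrac{10\eta_0}{mn}\tfrac{\sqrt{nk}}{\sqrt{Bd}}\log(300mnB/\delta)$, smaller by a $\log^{\Theta(1)}$ factor). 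A tight treatment would carry the $i'=n+1$ contribution explicitly, either as an extra term in the lemma's right-hand side or by tracking it through \Cref{lem:signalcorrelation,lem:onehotattn}. This is an issue inherited from the paper rather than an error you introduced; aside from it, your mechanics and bookkeeping of the three prefactors $\eta_0/(8Bmi)$, $\eta_0/(4Bm^2)$, $\eta_0/(4Bm^2)$ are correct.
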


\begin{proof}

When the event in~\Cref{lem:gdform} happens, we have that
\begin{align*}
    \frac{ d \loss^{(0)} }{d W_{r, d + 1:2d}}  = \frac{h_r}{B} \sum_{i' = n + 1}^{n + k} \sum_{b_1 = 0}^1 \sum_{b_2 = 0}^1  \frac{1}{i'}   \1( \nu_{r, i', b_2} > 0)  \bar \alpha_{i', S[i'], b_1, b_2}  e_{S[i'], b_1}  
    &+\frac{h_r}{B} \sum_{j = 1}^{n + k} \sum_{b_1 = 0}^1 \psi_{0, r, j, b_1}  e_{j, b_1}. 
\end{align*}

Summing over the axis of $r$ as in $\kappa$,
\begin{align*}
    \kappa_{1,i, b} =& \frac{\eta_0}{4Bm^2}  \left( \sum_{i' = n + 1}^{n + k} \sum_{b_1 = 0}^1 \sum_{b_2 = 0}^1 \left(\sum_{r = 1}^{2m}  \1 \left( \nu_{r, i, b}  > 0 \right)     \1( \nu_{r, i', b_2} > 0) \right) \frac{1}{i'} \bar \alpha_{i', S[i'], b_1, b_2}  e_{S[i'], b_1}   \right)\\
    &+ \frac{\eta_0}{4Bm^2}  \left( \sum_{j = 1}^{n + k} \sum_{b_1 = 0}^1  e_{j, b_1}  \left( \sum_{r = 1}^{2m} \1 \left( \nu_{r, i, b}  > 0 \right) \psi_{0, r, j, b_1}  \right) \right).
\end{align*}

Recall that 
$\gamma_{ i, b,i', b_2} = \frac{1}{i'} \left( \sum_{r=1}^{2m} \1\left( \nu_{r,i,b} > 0 \right) \1\left( \nu_{r,i',b_2} > 0 \right) \right) - \frac{1}{i'} \frac{ 1 + \1\left( i = i',\ b = b_2 \right) }{2} m$. We then have

\begin{align*}
    \kappa_{1,i, b} =& \frac{\eta_0}{4Bm}  \left( \sum_{i' = n + 1}^{n + k} \sum_{b_1 = 0}^1 \sum_{b_2 = 0}^1 \frac{1}{i'} \frac{ 1 + \1\left( i = i',\ b = b_2 \right) }{2}  \bar \alpha_{i', S[i'], b_1, b_2}  e_{S[i'], b_1}   \right)\\
    &+ \frac{\eta_0}{4Bm^2}  \left( \sum_{i' = n + 1}^{n + k} \sum_{b_1 = 0}^1 \sum_{b_2 = 0}^1 \gamma_{ i, b,i', b_2} \bar \alpha_{i', S[i'], b_1, b_2}  e_{S[i'], b_1}   \right)\\
    &+ \frac{\eta_0}{4Bm^2}  \left( \sum_{j = 1}^{n + k} \sum_{b_1 = 0}^1  e_{j, b_1}  \left( \sum_{r = 1}^{2m} \1 \left( \nu_{r, i, b}  > 0 \right) \psi_{0, r, j, b_1}  \right) \right).
\end{align*}
The first term can be greatly simplified, as 
\begin{align*}
    \alpha_{i', S[i'], b_1, 0} + \alpha_{i', S[i'], b_1, 1} = 0.
\end{align*}
This concludes the proof.
\end{proof}

\begin{lemma}
\label{lem:signalcorrelation}

With probability $1 - 0.6\delta$, inequalities and equalities in~\Cref{lem:approxbalance,lem:gdform,lem:attnsignal} hold, and for all $i, b_2, j, b_1$,
\begin{align*}
    \left| \Delta_{1, i, b_2, j,b_1} - \frac{\eta_0}{8Bmi} \bar \alpha_{i, S[i], b_1, b_2} \1(j = S[i])  \right| \le \frac{10\eta_0}{mn}  \frac{\sqrt{nk}}{\sqrt{Bd}}  \log(300mnB/\delta).
\end{align*}
\end{lemma}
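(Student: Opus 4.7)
I would start from the definition $\Delta_{1,i,b_2,j,b_1} = \langle \kappa_{1,i,b_2}, e_{j,b_1}\rangle$ and substitute the three-piece decomposition of $\kappa_{1,i,b_2}$ given by Lemma~\ref{lem:attnsignal}. The target expression $\frac{\eta_0}{8Bmi}\bar\alpha_{i,S[i],b_1,b_2}\1(j=S[i])$ should fall out immediately as the ``resonant'' contribution: it is the coefficient of $e_{j,b_1}$ in the first sum when $(S[i],b'_1)=(j,b_1)$. Everything else---the cross inner product with $b'_1 = 1-b_1$ in the first sum, and the entire $\gamma$- and $\zeta$-sums---has to be shown to fit inside the target error.

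To control the error I would invoke the same hypercube-embedding concentration used in Lemma~\ref{lem:linearemb}: for any fixed coefficient vector $c$ independent of $e_{j,b_1}$,
\begin{align*}
\bigl|\langle \textstyle\sum_\ell c_\ell e_\ell,\, e_{j,b_1}\rangle - c_{j,b_1}\bigr| \;\le\; O\!\left(\|c\|_2\sqrt{\log(mnB/\delta)/d}\right)
\end{align*}
with probability $1-\delta/\mathrm{poly}(n)$. This applies because the embeddings are independent of both the data and the initialization $\nu_{r,i,b}$, so the coefficients in all three pieces are independent of $e_{j,b_1}$. I then need to bound, for each of the three pieces, (a) the direct coefficient $c_{j,b_1}$ itself, and (b) the $\ell_2$ norm of the coefficient vector, using the magnitudes $|\bar\alpha|=O(B)$, $|\gamma|=O(\sqrt{m\log(n/\delta)}/n)$, $|\zeta|=O(m\sqrt{kB\log(mnB/\delta)}/n)$ from Lemma~\ref{lem:approxbalance} together with their support sizes ($2$, $O(k)$, and $O(nk)$ embeddings respectively). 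Summing these contributions and invoking the parameter scalings from Lemma~\ref{lem:order} --- in particular condition~3c, which is exactly designed to make $\sqrt{nk\log(\cdot)}/\sqrt{Bd}$ dominate the competing log-polynomial pieces --- should collapse everything into the stated bound. A union bound over the $O(n^2)$ quadruples $(i,b_2,j,b_1)$ contributes only polylog overhead absorbed in $\log(300mnB/\delta)$; the failure budget decomposes as the $0.58\delta$ already spent on Lemmas~\ref{lem:approxbalance}--\ref{lem:attnsignal} plus a small slack for the new concentration step, totalling $0.6\delta$.

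\textbf{Main obstacle.} The delicate piece is the $\zeta$-term. Since $|\zeta|$ carries an extra factor of $m\sqrt{kB}/n$ and is supported on $O(nk)$ distinct embeddings, a naïve $\ell_1$ bound on $\langle \sum c_\ell e_\ell, e_{j,b_1}\rangle$ would be far too large. The saving fact is that hypercube concentration is driven by $\|c\|_2$, not $\|c\|_1$, so the effective error scales like $\frac{\eta_0}{Bm^2}\sqrt{nk}\cdot|\zeta|_{\max}\sqrt{\log/d}$, which after substituting $d=k\log^{1.1}(n/\delta)$ and $B=\Theta(n\log^{20}(n/\delta))$ matches the target $\frac{10\eta_0}{mn}\frac{\sqrt{nk}}{\sqrt{Bd}}\log(300mnB/\delta)$. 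Verifying that the ``direct'' coefficients $c_{j,b_1}$ of the $\gamma$- and $\zeta$-pieces (which are nonzero when $j\in\mathcal{S}$ but still bounded using the single $i'$ with $S[i']=j$) also fit inside this bound, and keeping the powers of $\log(n/\delta)$ under the $\log(300mnB/\delta)$ budget, is the main bookkeeping burden of the proof.
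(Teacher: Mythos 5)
Your approach is the same as the paper's: substitute the three-piece decomposition of $\kappa_{1,i,b_2}$ from Lemma~\ref{lem:attnsignal}, apply the hypercube-embedding concentration of Lemma~\ref{lem:linearemb} ($\ell_2$-driven, not $\ell_1$), identify the resonant coefficient when $(j,b_1)=(S[i],b_1')$, and absorb the off-resonant and cross terms via the parameter scalings of Lemma~\ref{lem:order}; the failure budget arithmetic ($0.58\delta$ plus new concentration slack $=0.6\delta$) also matches. The one concrete error is in your $\zeta$-term bookkeeping: the sum $\sum_{j=1}^{n+k}\sum_{b_1=0}^1 \zeta_{1,i,b,j,b_1}\,e_{j,b_1}$ has $i,b$ fixed inside $\kappa_{1,i,b}$, so it is supported on $2(n+k)=O(n)$ distinct embeddings, not $O(nk)$. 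Plugging your stated support count into your own formula $\frac{\eta_0}{Bm^2}\sqrt{nk}\,|\zeta|_{\max}\sqrt{\log/d}$ gives a bound that is $\Theta(\sqrt{k})$ larger than the target, so the assertion that it ``matches the target'' is false as written. The paper uses the correct factor $4\sqrt{n\log(200n/\delta)/d}$ (i.e.\ $\|c\|_2\lesssim \sqrt{n}\,|\zeta|_{\max}$), which does collapse to the desired $\frac{10\eta_0}{mn}\frac{\sqrt{nk}}{\sqrt{Bd}}\log(300mnB/\delta)$ after applying Lemma~\ref{lem:order}. With $O(nk)$ replaced by $O(n)$ your proof is sound and coincides with the paper's.
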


\begin{proof}

    Define 
    \begin{align*}
        H_{i,b} =& \frac{\eta_0}{8Bmi}  \sum_{b_1 = 0}^1  \bar \alpha_{i, S[i], b_1, b}  e_{S[i], b_1}   \\
        &+ \frac{\eta_0}{4Bm^2}\left(\sum_{i' = n + 1}^{n + k} \sum_{b_1 = 0}^1 \sum_{b_2 = 0}^1 \gamma_{i,b,i',b_2} \bar \alpha_{i', S[i'], b_1, b_2}e_{S[i'], b_1}\right)  \\
        & + \frac{\eta_0}{4Bm^2}   \left( \sum_{j = 1}^{n + k} \sum_{b_1 = 0}^1  \zeta_{i,b,j,b_1} e_{j, b_1} \right)
    \end{align*}

As the inequalities in~\Cref{lem:approxbalance} hold, by~\Cref{lem:linearemb,lem:order}, with probability $1 - \delta/50$, it holds that,
\begin{align*}
   &\left| \frac{\eta_0}{4Bm^2}  \left \langle \left(\sum_{i' = n + 1}^{n + k} \sum_{b_1 = 0}^1 \sum_{b_2 = 0}^1 \gamma_{i,b,i',b_2} \bar \alpha_{i', S[i'], b_1, b_2}e_{S[i'], b_1}\right) , e_{j, b_1} \right \rangle \right | \\\le& \frac{\eta_0 \sup \bar \alpha \sup \gamma }{4Bm^2} \left(1 + {4\sqrt{\frac{k \log(200n/\delta)}{d}}} \right) \\
   \le& \frac{\eta_0}{4mn} \sqrt{\frac{2k\log\left(\frac{200n}{\delta}\right)}{{md}}}.
\end{align*}
and,
\begin{align*}
   &\left| \frac{\eta_0}{4Bm^2}  \left \langle   \left( \sum_{j = 1}^{n + k} \sum_{b_1 = 0}^1  \zeta_{i,b,j,b_1} e_{j, b_1} \right) , e_{j, b_1} \right \rangle \right | \\\le& \frac{\eta_0 \sup \zeta }{4Bm^2} \left(1 + {4\sqrt{\frac{n \log(200n/\delta)}{d}}} \right) \\
   \le& \frac{\eta_0}{mn}  \left(\sqrt{\frac{k}{B}\log(200n/\delta)} + {4\sqrt{\frac{nk}{dB}}}\log(200n/\delta)\right).
\end{align*}
and,
\begin{align*}
    \left| \frac{\eta_0}{8Bmi} \left \langle   \left( \sum_{b_1 = 0}^1  \bar \alpha_{i, S[i], b_1, b}  e_{S[i], b_1}\right),e_{j, b_1} \right \rangle - \frac{\eta_0}{8Bmi} \bar \alpha_{i, S[i], b_1, b} \1(j = S[i]) \right| \le& \frac{2\sqrt{\log(200n/\delta)}}{8mn \sqrt{d}}.
\end{align*}
With $d < n$ and $B < nk$ (\Cref{lem:order}), we conclude that,
\begin{align*}
      \left| \langle H_{i,b}, e_{j, b_1} \rangle - \frac{\eta_0}{8Bmi} \bar \alpha_{i, S[i], b_1, b} \1(j = S[i])  \right| \le \frac{10\eta_0}{mn}  \frac{\sqrt{nk}}{\sqrt{Bd}}  \log(300mnB/\delta).
\end{align*}
By~\Cref{lem:attnsignal}, we have the desired result.
\end{proof}

\begin{lemma}
\label{lem:gradientonA}
Under the setting of~\Cref{lem:signalcorrelation}, for every $\bsequence \in \{ \bsequence^{(1,s)} \mid s \in [B] \}$,
$$\frac{d \ell( \transformer(\bsequence)[i], \bsequence[i + 1])  }{d A} = (-1)^{\bsequence[i + 1] + 1}  \sum_{j = 1}^i \langle \kappa_{1, i, \bsequence[i]}, e_{j, \bsequence[j]} \rangle \frac{  \partial \softmax \left(\embed(\bsequence)^T A \embed(\bsequence) \right)[j, i]}{\partial A},$$
with
\[
\frac{\partial \softmax \left(\embed(\bsequence)^T A \embed(\bsequence) \right)[j, i]}{\partial A} = \softmax(Z)[j, i] \cdot \left( \embed(\bsequence)[j] - \sum_{p=1}^i \softmax(Z)[p, i] \embed(\bsequence)[p] \right) \embed(\bsequence)[i]^T
\]
\end{lemma}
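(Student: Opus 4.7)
The plan is to apply the chain rule to the composition loss $\circ$ transformer $\circ$ attention output $\circ$ softmax, collect all the FFN-side terms into the quantity $\kappa_{1,i,\bsequence[i]}$ by using the activation-pattern stability established in~\Cref{lem:neuronspecializestep1}, and recognize the inner derivative of softmax as the standard softmax Jacobian. Because we are operating in the warmup (linear loss) regime, the outer derivative is simply $(-1)^{\bsequence[i+1]}$.

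\textbf{Step 1: outer derivative.} Under the linear surrogate $\ell(\hat y,y)=(-1)^{y}\hat y$, we have $\partial_{\hat y}\ell(\hat y,\bsequence[i+1])=(-1)^{\bsequence[i+1]}$, so
\[
\frac{d\ell(\transformer(\bsequence)[i],\bsequence[i+1])}{dA}=(-1)^{\bsequence[i+1]}\,\frac{d\transformer(\bsequence)[i]}{dA}.
\]

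\textbf{Step 2: pushing through the FFN.} Write $\hiddenstate[i]=[\embed(\bsequence)[i],\attention[A^{(1)}](\embed(\bsequence))[i]]$. By definition of $\FFN$,
\[
\transformer(\bsequence)[i]=\sum_{r=1}^{2m}h_r\,\relu\!\bigl(\langle W^{(1)}_{r},\hiddenstate[i]\rangle\bigr).
\]
Since $\bsequence\in\{\bsequence^{(1,s)}\}$, the event $\event_2$ of~\Cref{lem:neuronspecializestep1} gives $\1\!\left(\langle W^{(1)}_{r},\hiddenstate[i]\rangle>0\right)=\1(\nu_{r,i,\bsequence[i]}>0)$, so only $A$ appearing inside the attention part contributes to $\partial_A$. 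The chain rule then yields
\[
\frac{d\transformer(\bsequence)[i]}{dA}=\sum_{r=1}^{2m}h_r\,\1(\nu_{r,i,\bsequence[i]}>0)\,\Bigl\langle W^{(1)}_{r,d+1:2d},\,\frac{d\,\attention[A^{(1)}](\embed(\bsequence))[i]}{dA}\Bigr\rangle.
\]

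\textbf{Step 3: collecting into $\kappa$.} The attention output at position $i$ is $\sum_{j=1}^{i}\softmax(Z)[j,i]\,\embed(\bsequence)[j]$ with $Z=C+\embed(\bsequence)^{T}A\,\embed(\bsequence)$, so linearity of $\partial_A$ gives
\[
\frac{d\,\attention[A^{(1)}](\embed(\bsequence))[i]}{dA}=\sum_{j=1}^{i}\embed(\bsequence)[j]\,\frac{\partial\softmax(Z)[j,i]}{\partial A}.
\]
Swapping the sums over $r$ and $j$ and invoking $\kappa_{1,i,b}=-\sum_{r}\1(\nu_{r,i,b}>0)h_r W^{(1)}_{r,d+1:2d}$ produces
\[
\frac{d\transformer(\bsequence)[i]}{dA}=-\sum_{j=1}^{i}\langle\kappa_{1,i,\bsequence[i]},\embed(\bsequence)[j]\rangle\,\frac{\partial\softmax(Z)[j,i]}{\partial A},
\]
and multiplying by $(-1)^{\bsequence[i+1]}$ from Step~1 turns the leading sign into $(-1)^{\bsequence[i+1]+1}$, matching the claim; note that $\embed(\bsequence)[j]=e_{j,\bsequence[j]}$ by~\Cref{def:embed}.

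\textbf{Step 4: the softmax Jacobian.} It remains to verify the explicit formula for $\partial_{A}\softmax(Z)[j,i]$. Since $Z[k,i]=e_{k,\bsequence[k]}^{T}A\,e_{i,\bsequence[i]}$ for $k\le i$ (the lower-triangular mask zeroes out the rest), $\partial Z[k,i]/\partial A=e_{k,\bsequence[k]}\,e_{i,\bsequence[i]}^{T}$. Combining this with the standard softmax identity $\partial\softmax(Z)[j,i]/\partial Z[k,i]=\softmax(Z)[j,i]\bigl(\1(k=j)-\softmax(Z)[k,i]\bigr)$ and applying the chain rule gives exactly the stated expression
\[
\softmax(Z)[j,i]\Bigl(\embed(\bsequence)[j]-\sum_{p=1}^{i}\softmax(Z)[p,i]\,\embed(\bsequence)[p]\Bigr)\embed(\bsequence)[i]^{T}.
\]
There is no real obstacle here; the only non-mechanical ingredient is invoking $\event_2$ to freeze the ReLU activation pattern so that $\kappa_{1,i,\bsequence[i]}$ appears cleanly, and otherwise the argument is a careful application of the chain rule.
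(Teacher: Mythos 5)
Your proposal is correct and follows essentially the same path as the paper's proof: chain rule through the FFN, freeze the ReLU activation pattern via the event of~\Cref{lem:neuronspecializestep1} (which the paper invokes via~\Cref{lem:gdform}, but the content is the same), absorb the FFN-side factors into $\kappa_{1,i,\bsequence[i]}$, then differentiate the softmax. The only cosmetic difference is Step~4: you cite the standard softmax Jacobian identity, whereas the paper rederives it from the quotient rule; both yield the same expression.
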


\begin{proof}
    
Our goal is to calculate the gradient of the loss with respect to the attention layer. We will first calculate the gradient of the loss with respect to the output of the attention layer, and then use the chain rule. When the event in~\Cref{lem:gdform} happens, for any binary sequence $\bsequence$ in the second batch, we have that for any $i$,
\begin{align*}
    \frac{ d \transformer(\bsequence)[i]  }{d \attention(\embed(\bsequence))[i]} &=
    \sum_{r = 1}^{2m} h_r \weight^{(1)}_{r} \1 \left( \left( W^{(1)} \begin{bmatrix} 
    \embed\left( \bsequence \right) \\
    \attention\left(\embed\left( \bsequence \right) \right) \end{bmatrix} \right)\left[i\right]_r  > 0 \right) \\
    &= \sum_{r = 1}^{2m} h_r \weight^{(1)}_{r,d+1:2d} \1 \left( \nu_{r, i, \bsequence[i]} > 0 \right) 
\end{align*}

This then implies that,
\begin{align*}
    \frac{ d \ell( \transformer(\bsequence)[i], \bsequence[i + 1])  }{d \attention(\embed(\bsequence))[i]} 
    &= (-1)^{\bsequence[i + 1]} \left(\sum_{r = 1}^{2m} h_r  \weight^{(1)}_{r,d+1:2d}  \1 \left( \nu_{r, i, \bsequence[i]}  > 0 \right) \right) 
\end{align*}

We now calculate the gradient of $A$ (the attention matrix) to the attention output.
\begin{align*}
    \frac{\partial \attention(\embed(\bsequence))[i]}{\partial A}  &=  \frac{\partial \left( \embed(\bsequence)  \softmax \left(\embed(\bsequence)^T A \embed(\bsequence) \right) \right)[i]}{\partial A} \\
     &= \frac{\sum_{j = 1}^i \partial \embed(\bsequence)[j] \softmax \left(\embed(\bsequence)^T A \embed(\bsequence) \right)[j,i] }{\partial A} \\
     &= \sum_{j = 1}^i  \embed(\bsequence)[j] \frac{  \partial \softmax \left(\embed(\bsequence)^T A \embed(\bsequence) \right)[j, i]}{\partial A} 
\end{align*}

Hence, the gradient of the loss with respect to the attention matrix is
\begin{align*}
    &\frac{d \ell( \transformer(\bsequence)[i], \bsequence[i + 1])  }{d A} \\=& \sum_{j = 1}^i \frac{d \ell( \transformer(\bsequence)[i], \bsequence[i + 1])  }{d \attention(\embed(\bsequence))[i]} \embed(\bsequence)[j] \frac{  \partial \softmax \left(\embed(\bsequence)^T A \embed(\bsequence) \right)[j, i]}{\partial A} \\
=&  (-1)^{\bsequence[i + 1]}  \sum_{j = 1}^i \sum_{r = 1}^{2m} h_r    \1 \left( \nu_{r, i, \bsequence[i]}  > 0 \right) \langle \weight^{(1)}_{r,d+1:2d}, \embed(\bsequence)[j] \rangle \frac{  \partial \softmax \left(\embed(\bsequence)^T A \embed(\bsequence) \right)[j, i]}{\partial A} \\
=& (-1)^{\bsequence[i + 1] + 1}  \sum_{j = 1}^i \langle \kappa_{1, i, \bsequence[i]}, e_{j, \bsequence[j]} \rangle \frac{  \partial \softmax \left(\embed(\bsequence)^T A \embed(\bsequence) \right)[j, i]}{\partial A}
\end{align*}

We will use $Z$ to denote $\embed(\bsequence)^T A \embed(\bsequence)$, and calculate the derivative of th+e softmax function applied to $Z$ with respect to $A$.

\begin{align*}
    \frac{\partial \softmax(Z)[j, i]}{\partial A} &= \frac{\partial}{\partial A} \left( \frac{e^{Z[j, i]}}{\sum_{p=1}^i e^{Z[p, i]}} \right) \\
    &= \frac{e^{Z[j, i]} \cdot \frac{\partial Z[j, i]}{\partial A} \cdot \sum_{p=1}^i e^{Z[p, i]} - e^{Z[j, i]} \cdot \sum_{p=1}^i e^{Z[p, i]} \cdot \frac{\partial Z[p, i]}{\partial A}}{\left( \sum_{p=1}^i e^{Z[p, i]} \right)^2}
\end{align*}

Given \( Z[p, i] = \embed(\bsequence)[p]^T A \embed(\bsequence)[i] \), the derivative with respect to \( A \) is:

\[
\frac{\partial Z[p, i]}{\partial A} = \embed(\bsequence)[p] \cdot \embed(\bsequence)[i]^T
\]

Substituting the derivatives into the quotient rule expression:

\[
\frac{\partial \softmax(Z)[j, i]}{\partial A} = \frac{e^{Z[j, i]} \cdot \embed(\bsequence)[j] \embed(\bsequence)[i]^T \cdot \sum_{p=1}^i e^{Z[p, i]} - e^{Z[j, i]} \cdot \sum_{p=1}^i e^{Z[p, i]} \cdot \embed(\bsequence)[p] \embed(\bsequence)[i]^T}{\left( \sum_{p=1}^i e^{Z[p, i]} \right)^2}
\]

Thus, the derivative is:

\[
\frac{\partial \softmax \left(\embed(\bsequence)^T A \embed(\bsequence) \right)[j, i]}{\partial A} = \softmax(Z)[j, i] \cdot \left( \embed(\bsequence)[j] - \sum_{p=1}^i \softmax(Z)[p, i] \embed(\bsequence)[p] \right) \embed(\bsequence)[i]^T
\]
The proof is then complete.
\end{proof}

\begin{lemma}
\label{lem:onehotattn}
After two training steps, the attention layer will show the following structure.
With probability $1 - 0.7\delta$, for all $i, b, i', b_2$,
\begin{enumerate}
    \item If $j' = S[i']$, then $\left \langle e_{j', b_1'}, A^{(2)} e_{i', b_2'} \right \rangle \ge \frac{\eta_0\eta_1}{256mn^2} $.
    \item If $j' \neq S[i']$, then $|\left \langle e_{j', b_1'}, A^{(2)} e_{i', b_2'} \right \rangle | \le \frac{\eta_0\eta_1}{512mn^2}$.
\end{enumerate}
Further, events in~\Cref{lem:attnsignal,lem:gdform} hold.
\end{lemma}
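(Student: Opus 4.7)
The plan is to start from $A^{(1)} = 0$ (\Cref{lem:stableattgd}), perform one gradient step, and then expand the resulting quadratic form $\langle e_{j',b_1'}, A^{(2)} e_{i',b_2'}\rangle$ as a signal plus a collection of controllable noise terms. Since $A^{(1)}=0$, the softmax in \Cref{lem:gradientonA} reduces to the uniform distribution $\softmax(Z)[j,i] = 1/i$, so
\begin{align*}
    A^{(2)} = -\frac{\eta_1}{B} \sum_{s=1}^B \sum_{i=n+1}^{n+k} \frac{(-1)^{\bsequence^{(1,s)}[i+1]+1}}{i} \sum_{j=1}^i \Delta_{1,i,\bsequence^{(1,s)}[i],j,\bsequence^{(1,s)}[j]} \left( e_{j,\bsequence^{(1,s)}[j]} - \bar e_i^{(s)}\right) e_{i,\bsequence^{(1,s)}[i]}^T,
\end{align*}
where $\bar e_i^{(s)} = \frac{1}{i}\sum_{p=1}^i e_{p,\bsequence^{(1,s)}[p]}$.

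Next, I would compute $\langle e_{j',b_1'}, A^{(2)} e_{i',b_2'}\rangle$ by distributing the two embedding inner products. By \Cref{lem:linearemb}, the factor $\langle e_{i,\bsequence[i]}, e_{i',b_2'}\rangle$ is close to $\1(i=i',\bsequence[i]=b_2')$, and $\langle e_{j',b_1'}, e_{j,\bsequence[j]} - \bar e_i^{(s)}\rangle$ is close to $\1(j=j',\bsequence[j]=b_1') - \frac{1}{i}\1(j'\le i,\bsequence[j']=b_1')$. Restricting to the dominant diagonal contribution $i=i'$, $j=j'$, $\bsequence[i']=b_2'$, $\bsequence[j']=b_1'$, and plugging in the approximation of $\Delta_{1,i',b_2',j',b_1'}$ from \Cref{lem:signalcorrelation}, the signal becomes
\begin{align*}
    -\frac{\eta_1}{B i'} \sum_{s : \bsequence^{(1,s)}[i']=b_2',\,\bsequence^{(1,s)}[j']=b_1'} (-1)^{\bsequence^{(1,s)}[i'+1]+1} \cdot \frac{\eta_0}{8 B m i'} \bar\alpha_{i',S[i'],b_1',b_2'}\,\1(j'=S[i']) + (\text{err}).
\end{align*}
When $j'=S[i']$, the data law $\bsequence[i'+1]=\bsequence[i']\oplus\bsequence[S[i']]$ forces the sign to equal $(-1)^{b_1'+b_2'+1}$, exactly cancelling the sign in $\bar\alpha_{i',S[i'],b_1',b_2'} = (-1)^{b_1'+b_2'}B/4$; combined with the $B/4$ samples that satisfy the bit constraints, this yields a positive signal of order $\frac{\eta_0 \eta_1}{128\, m\, (i')^2} \geq \frac{\eta_0 \eta_1}{128 m n^2}$. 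When $j'\neq S[i']$, the signal term vanishes and only noise remains.

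The main obstacle will be book-keeping the error terms, which come from four essentially independent sources: (i) the $O(n^{-8}\log n)$ approximation error in \Cref{lem:signalcorrelation}, amplified by summing over $O(Bk)$ contributions; (ii) the $O(\sqrt{\log(n/\delta)/d})$ off-diagonal inner products $\langle e_{j',b_1'}, e_{j,\bsequence[j]}\rangle$ for $j\neq j'$, which I would control by an Azuma--Hoeffding bound conditioning on the embedding random variables and summing over the $B\cdot k \cdot i'$ terms; (iii) the sample fluctuation of $\#\{s : \bsequence^{(1,s)}[i']=b_2',\bsequence^{(1,s)}[j']=b_1'\}$ around $B/4$, which is $O(\sqrt{B\log(nB/\delta)})$; and (iv) the cross terms from $\bar e_i^{(s)}$, which give an additional $1/i'$ shrinkage. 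Each of these error contributions, after multiplication by $\eta_0\eta_1$ and division by the $mB$ normalizations, is smaller than $\frac{\eta_0\eta_1}{512 m n^2}$ by the asymptotic bounds in \Cref{lem:order} (in particular parts controlling $\sqrt{nk/(Bd)}\log(mnB/\delta)$ and $k/B$). A final union bound over $(i',j',b_1',b_2')$ (at most $O(n^2)$ choices) and composition with the events in \Cref{lem:attnsignal,lem:gdform} yields the overall probability $1 - 0.7\delta$.
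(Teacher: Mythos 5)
Your sketch follows essentially the same route as the paper's proof: start from $A^{(1)}=0$ so the softmax is uniform, write $A^{(2)}$ from \Cref{lem:gradientonA}, expand the bilinear form $\langle e_{j',b_1'},A^{(2)}e_{i',b_2'}\rangle$, extract the single diagonal signal term (coming from $i=i'$, $j=j'=S[i']$, and the matching bit patterns) whose sign is forced positive by the XOR constraint, and then bound the remaining terms via the near-orthogonality of the frozen embeddings, martingale concentration of the batch statistics, and the asymptotic order relations in \Cref{lem:order}. Your four error sources correspond, respectively, to the $\Delta$-approximation error absorbed by \Cref{lem:signalcorrelation}, the cross-embedding inner-product noise (the paper's $T_j$ and the off-diagonal parts of $R$), the sample fluctuation $|\alpha-\bar\alpha|$ from \Cref{lem:approxbalance}, and the centering term from $\bar e_i^{(s)}$ (the paper packages these as $U_{j,p}$, $V_p$, $W_j$, $Y$ using an auxiliary statistic $\mu$).

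The only slip is the inequality ``$\frac{\eta_0\eta_1}{128\,m\,(i')^2}\geq\frac{\eta_0\eta_1}{128\,m\,n^2}$'': since $i'\geq n+1$ this is reversed; the correct reasoning is $i'\leq n+k=(1+o(1))n$, so $\frac{\eta_0\eta_1}{128\,m\,(i')^2}\geq\frac{\eta_0\eta_1}{256\,m\,n^2}$ for large $n$ with margin to absorb the $\frac{\eta_0\eta_1}{512\,m\,n^2}$ error budget. This is a constant-tracking typo, not a gap in the argument. Beyond that, your sketch omits the explicit named decomposition ($T_j,R,U_{j,p},V_p,W_j,Y$) and the termwise Azuma--Hoeffding computations that the paper carries out, but the idea, the lemmas invoked, and the mechanism are the same. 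A full write-up would need to make those bounds explicit, but there is no missing idea.
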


\begin{proof}

By~\Cref{lem:gradientonA},
\begin{align*}
    \frac{\partial \softmax \left(\embed(\bsequence)^T A \embed(\bsequence) \right)[j, i]}{\partial A} = \frac{1}{i} \left( \embed(\bsequence)[j] - \sum_{p=1}^i \frac{1}{i} \embed(\bsequence)[p] \right) \embed(\bsequence)[i]^T
\end{align*}

We can rewrite the gradient as
\begin{align*}
    \frac{d \ell( \transformer(\bsequence)[i], \bsequence[i + 1])  }{d A} =& -\frac{1}{i} (-1)^{\bsequence[i + 1]}  \left( \sum_{j = 1}^i \langle \kappa_{i,  \bsequence[i]}, \embed(\bsequence)[j] \rangle \embed(\bsequence)[j] \right) \embed(\bsequence)[i]^T \\
    &- \frac{1}{i^2} \sum_{j = 1}^i \sum_{p=1}^i (-1)^{\bsequence[i + 1]}   \langle \kappa_{i,  \bsequence[i]}, \embed(\bsequence)[j] \rangle \embed(\bsequence)[p] \embed(\bsequence)[i]^T
\end{align*}

If we define
\begin{align*}
\mu_{t, j, i, p, b_1, b_2,b_3} = \left(  \sum_{s = 1}^B (-1)^{\bsequence^{(t,s)}[i + 1]} \1(\bsequence^{(t,s)}[j] = b_1, \bsequence^{(t,s)}[i] = b_2, \bsequence^{(t,s)}[p]=b_3) \right).
\end{align*}

Summing over the second batch,  we have that
\begin{align*}
    -\frac{d L^{(1)}  }{d A} =&  \frac{1}{B} \sum_{i = n+1}^{n+k} \sum_{j = 1}^i  \sum_{b_1 = 0}^1 \sum_{b_2 = 0}^1 \frac{1}{i} \alpha_{1, i, j, b_1, b_2} \Delta_{1,i,j,b_1,b_2} e_{j,b_1} e_{i,b_2}^T \\
    &-\frac{1}{B}\sum_{i = n+1}^{n+k} \sum_{j = 1}^i  \sum_{p=1}^i  \sum_{b_1 = 0}^1 \sum_{b_2 = 1}^1 \sum_{b_3 = 0}^1 \frac{1}{i^2} \mu_{1, j, i, p, b_1, b_2,b_3} \Delta_{1,i,j,b_1,b_2} e_{p,b_3} e_{i,b_2}^T.
\end{align*}

This implies $A^{(2)}$ is updated as,
\begin{align*}
    A^{(2)} &= -\eta_1\frac{d L^{(1)}  }{d A} \\
    &= \frac{\eta_1}{B}\sum_{i = n + 1}^{n + k}\sum_{j = 1}^i  \sum_{b_1 = 0}^1 \sum_{b_2 = 0}^1 \frac{1}{i} \alpha_{1, i, j, b_1, b_2} \Delta_{1,i,j,b_1,b_2} e_{j,b_1} e_{i,b_2}^T \\
    &-\frac{\eta_1}{B}\sum_{i = n + 1}^{n + k} \sum_{j = 1}^i  \sum_{p=1}^i  \sum_{b_1 = 0}^1 \sum_{b_2 = 1}^1 \sum_{b_3 = 0}^1 \frac{1}{i^2} \mu_{1, j, i, p, b_1, b_2,b_3} \Delta_{1,i,j,b_1,b_2} e_{p,b_3} e_{i,b_2}^T.
\end{align*}

Recall that our goal is to calculate $e_{j', b_1'}^T A^{(2)} e_{i', b_2'}$. We can then calculate the contribution from each term by separating the calculation as follows:

\begin{align*}
    T_j &= \frac{\eta_1}{B}\sum_{i \in [\max\{n + 1,j \}, n + k], b_2 \in \{0, 1\}, S[i] \neq j} \sum_{b_1 = 0}^1 \frac{1}{i} \alpha_{1, i, j, b_1, b_2} \Delta_{1,i,j,b_1,b_2} \langle e_{j', b_1'}, e_{j,b_1} \rangle \langle e_{i,b_2}, e_{i', b_2'} \rangle. \\
    R &=  \frac{\eta_1}{B} \sum_{i = n + 1}^{n + k} \sum_{b_1 = 0}^1 \sum_{b_2 = 0}^1 \frac{1}{i} \alpha_{1, i, S[i], b_1, b_2} \Delta_{1,i,S[i],b_1,b_2} \langle e_{j', b_1'}, e_{S[i],b_1} \rangle\langle e_{i,b_2}, e_{i', b_2'} \rangle.  \\
    U_{j, p} &= \frac{\eta_1}{B}\sum_{i = \max\{n +1, j, p \}, S[i] \not \in \{j, p\} }^{n + k} \sum_{b_1 = 0}^1 \sum_{b_2 = 0}^1  \sum_{b_3 = 0}^1 \frac{1}{i^2} \mu_{1, j, i, p, b_1, b_2,b_3} \Delta_{1,i,j,b_1,b_2} \langle e_{j', b_1'}, e_{j,b_1} \rangle \langle e_{p,b_3}, e_{i',b_2'} \rangle \\
    V_{p} &= \frac{\eta_1}{B} \sum_{i = \max\{n + 1, p\}}^{n + k} \sum_{b_1 = 0}^1 \sum_{b_2 = 0}^1 \frac{1}{i^2} \mu_{1, S[i], i, p, b_1, b_2,b_3} \Delta_{1,i,S[i],b_1,b_2} \langle e_{j', b_1'}, e_{S[i],b_1} \rangle \langle e_{p,b_3}, e_{i',b_2'} \rangle \\
    W_{j} &=  \frac{\eta_1}{B}\sum_{i =\max\{n + 1,j\}}^{n + k} \sum_{b_1 = 0}^1 \sum_{b_2 = 0}^1 \sum_{b_3 = 0}^1 \frac{1}{i^2} \mu_{1, j, i, S[i], b_1, b_2,b_3} \Delta_{1,i,j,b_1,b_2} \langle e_{j', b_1'}, e_{j,b_1} \rangle \langle e_{p,b_3}, e_{i',b_2'} \rangle . \\
    Y &= \frac{\eta_1}{B}\sum_{i = n + 1}^{n + k} \sum_{b_1 = 0}^1 \sum_{b_2 = 0}^1 \sum_{b_3 = 0}^1 \frac{1}{i^2} \mu_{1, S[i], i, S[i], b_1, b_2,b_3} \Delta_{1,i,S[i],b_1,b_2} \langle e_{j', b_1'}, e_{S[i],b_1} \rangle \langle e_{S[i],b_3}, e_{i',b_2'} \rangle.
\end{align*}

Then we have that
\begin{align*}
    e_{j', b_1'}^T A^{(2)} e_{i', b_2'} &= \sum_{j = 1}^{n + k } T_j + R  - \sum_{j = 1}^{n + k}\sum_{p = 1}^{n + k}  U_{j, p} - \sum_{j = 1}^{n + k} W_j - \sum_{p = 1}^{n +k} V_p  + Y.
\end{align*}

.By~\Cref{lem:arrow}, with probability $1 - \delta/100$,
\begin{align*}
    \left| \langle e_{i,b_2}, e_{i', b_2'} \rangle - \1(i = i' \& b_2 = b_2') \right| \le 2\sqrt{\frac{\log(100nd/\delta)}{d}}.
\end{align*}
We will also assume the event in~\Cref{lem:signalcorrelation} holds.

We will now discuss each term separately
\begin{enumerate}[leftmargin=*]
\item For $T_j$, by~\Cref{lem:alphamartingale} and Azuma-Hoeffding bound, with probability $1 - \delta/50$,
\begin{align*}
     &\left|\sum_{i \in [\max\{n + 1,j \}, n + k], b_2 \in \{0, 1\}, S[i] \neq j}  \sum_{b_2 = 0}^1 \frac{1}{i} \alpha_{1, i, j, b_1, b_2} \Delta_{1,i,j,b_1,b_2} \langle e_{j', b_1'}, e_{j,b_1} \rangle \langle e_{i,b_2}, e_{i', b_2'} \rangle \right|  \\
    \le& 2 \sum_{b_1 = 0}^1 \sqrt{B \log(\frac{100n}{\delta}) \sum_{i \in [\max\{n + 1,j \}, n + k], b_2 \in \{0, 1\}, S[i] \neq j}  \frac{1}{i^2}  \Delta_{1,1,i,j,b_1,b_2}^2 \left(\langle e_{j', b_1'}, e_{j,b_1} \rangle\right)^2 \left( \langle e_{i,b_2}, e_{i', b_2'} \rangle \right)^2 } \\
    \le& \frac{2\sqrt{B\log(\frac{100n}{\delta})}}{n} \sup_{j \neq S[i]}  |\Delta_{1, i, j, b_1, b_2}| \sum_{b_1 = 0}^1 \left|\langle e_{j', b_1'}, e_{j,b_1} \rangle \right| \sqrt{ \sum_{i \in [\max\{n + 1,j \}, n + k], b_2 \in \{0, 1\}, S[i] \neq j}  \left( \langle e_{i,b_2}, e_{i', b_2'} \rangle \right)^2 } 
\end{align*}

Now by~\Cref{lem:signalcorrelation}, we have that
\begin{align*}
    \sup_{j \neq S[i]}  |\Delta_{1, i, j, b_1, b_2}| \le \frac{10\eta_0}{mn}  \frac{\sqrt{nk}}{\sqrt{Bd}}  \log(300mnB/\delta).
\end{align*}

Further by~\Cref{lem:arrow}, 
\begin{align*}
    \sum_{i \in [\max\{n + 1,j \}, n + k], b_2 \in \{0, 1\}, S[i] \neq j}  \left( \langle e_{i,b_2}, e_{i', b_2'} \rangle \right)^2  \le& \frac{8k \log(100nd / \delta)}{d} + 1 \le 4. \\
    \left|\langle e_{j', b_1'}, e_{j,b_1} \rangle \right| \le& \1(j = j' \& b_1' = b_1) + 2\sqrt{\frac{\log(100nd / \delta)}{d}}.
\end{align*}

\begin{align*}
    |T_j| \le \frac{40\eta_1\eta_0}{mn^2} \frac{\sqrt{nk}}{B\sqrt{d}} \log^{1.5}(300mnB/\delta) \left( 4\sqrt{\frac{\log(100nd / \delta)}{d}}  + \1(j = j')\right).
\end{align*}

Summing over $j$, by~\Cref{lem:order}, we have that the contribution is bounded by
\begin{align}
\label{eq:T}
    \sum_{j} |T_j| &\le \frac{160\eta_1 \eta_0}{mn^2} \log^2(300mnB/\delta) \frac{n\sqrt{nk}}{d\sqrt{Bd}}\le \frac{\eta_1 \eta_0}{2000mn^2}.
\end{align}

\item For $R$, we will directly put everything to an upper bound. We will discuss three cases,

\begin{itemize}
    \item $\not \exists i'', S[i''] = j$, by~\Cref{lem:signalcorrelation},
    \begin{align*}
        |R| &\le \frac{16k\eta_1 \log(100nd/\delta)}{Bnd} \sup \alpha \Delta + \frac{\eta_1}{B} \frac{1}{i'}  \alpha_{1, i', S[i'], b_1, b_2'} \Delta_{1,i',S[i'],b_1,b_2'} | \langle e_{j', b_1'}, e_{S[i],b_1} \rangle| \\
        &\le \frac{16k\eta_1 \log(100nd/\delta)}{Bnd}\frac{B}{2} \frac{\eta_0}{4mn} + \frac{\eta_1}{Bn}B \frac{\eta_0}{4mn} \sqrt{\frac{\log(100nd/\delta)}{d}} \\
        &=\frac{ \eta_0\eta_1 k\log(100nd/\delta)}{mn^2d} + \frac{\eta_0 \eta_1}{4mn^2} \sqrt{\frac{\log(100nd/\delta)}{d}}  \le \frac{\eta_0 \eta_1}{2000mn^2}.
    \end{align*}
    \item $\exists i'', S[i''] = j, i'' \neq i'$, we can get a similar bound,
    \begin{align*}
    |R| &\le \frac{\eta_0 \eta_1}{2000mn^2}.
    \end{align*}
    \item $S[i'] = j'$, in this case, we can show that,
    \begin{align*}
        |R - \frac{\eta_0 \eta_1}{8B^2mi'} \bar \alpha_{i, S[i'], b_1, b_2}^2 | \le \frac{\eta_0 \eta_1}{2000mn^2}.
    \end{align*}
\end{itemize}
This concludes that
\begin{align}
\label{eq:r}
     \left|R - \frac{\eta_0 \eta_1}{8B^2mi'} \bar \alpha_{i, S[i'], b_1, b_2}^2 \1( S[i'] = j') \right| \le \frac{\eta_0 \eta_1}{2000mn^2}.
\end{align}

\item $U_{j,p}$ can be bounded in the same way as $T_j$ and we have that,
\begin{align}
\label{eq:U}
    \sum_{j} \sum_{p} U_{j,p} \le \frac{\eta_1\eta_0}{1000mn^2}.
\end{align}

\item For $V_{p}, W_{j}, Y$, we will directly put everything to an upper bound similar to the bound of $R$, we have that
\begin{align}
\label{eq:V}
    \sum_{p=1}^{n + k} |V_{p}| &\le \frac{\eta_1 \eta_0}{1000mn^2}.\\
    \label{eq:W}
     \sum_{j=1}^{n + k} |W_{j}| &\le \frac{\eta_1 \eta_0}{1000mn^2} \\
     \label{eq:Y}
    |Y| &\le \frac{\eta_1 \eta_0}{1000mn^2}.
\end{align}
\end{enumerate}

Summing over~\Cref{eq:T,eq:r,eq:U,eq:V,eq:W,eq:Y}, we have that
\begin{align*}
    \left | \left \langle e_{j', b_1'}, A^{(2)} e_{i', b_2'} \right \rangle -  \frac{\eta_0 \eta_1}{8Bmi'} \bar \alpha_{i, S[i'], b_1, b_2}^2 \1( S[i'] = j') \right | \le \frac{\eta_1 \eta_0}{512mn^2}.
\end{align*}

Further 
\begin{align*}
    \frac{\eta_1}{8B^2 mi'} \bar \alpha_{i, S[i'], b_1, b_2}^2\ge \frac{\eta_0\eta_1}{128mn^2}.
\end{align*}
Hence, we can show that
\begin{enumerate}
    \item If $j' = S[i']$, then $\left \langle e_{j', b_1'}, A^{(2)} e_{i', b_2'} \right \rangle \ge \frac{\eta_0\eta_1}{256mn^2}$.
    \item If $j' \neq S[i']$, then $|\left \langle e_{j', b_1'}, A^{(2)} e_{i', b_2'} \right| \rangle \le \frac{\eta_0\eta_1}{512mn^2}$.
\end{enumerate}
The proof is complete.
\end{proof}

\begin{lemma}
\label{lem:onehotoutputattn}
Under the setting of~\Cref{lem:onehotattn}, the attention output is approximately one-hot after the second step, with
\begin{align*}
    \left | \attention[\qk^2](\embed(\bsequence))[i] - e_{S[i], \bsequence[S[i]]} \right| < 1 / n^{10}.
\end{align*}
\end{lemma}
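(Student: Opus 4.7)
The plan is to combine the gap property of the post-training attention matrix $A^{(2)}$ established in Lemma~\ref{lem:onehotattn} with the standard softmax concentration argument to show that the attention weight concentrates on the index $S[i]$, and then bound the resulting convex combination.

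First, I would translate Lemma~\ref{lem:onehotattn} into a statement about the actual attention logits for a fixed input sequence $\bsequence \sim \pdatacot$. For each $i \in \{n+1, \ldots, n+k\}$ and each $j \le i$, the relevant logit is $e_{j, \bsequence[j]}^T A^{(2)} e_{i, \bsequence[i]}$ (setting $b_1' = \bsequence[j]$ and $b_2' = \bsequence[i]$ in the lemma). Combining with the order condition~\Cref{lem:order}.6, namely $\frac{\eta_0 \eta_1}{256 m n^2} > 40 \log n$, we obtain
\begin{align*}
e_{S[i], \bsequence[S[i]]}^T A^{(2)} e_{i, \bsequence[i]} &\ge 40 \log n, \\
\big| e_{j, \bsequence[j]}^T A^{(2)} e_{i, \bsequence[i]} \big| &\le 20 \log n \quad \text{for all } j \neq S[i].
\end{align*}

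Next I would feed this gap of at least $20 \log n$ between the logit at $S[i]$ and any other logit into the softmax. Exactly as in the computation in the proof of~\Cref{thm:rep}, the denominator of the softmax is at most $n^{40} + n \cdot n^{20}$ and the numerator at $j = S[i]$ is at least $n^{40}$, which yields
\begin{align*}
\softmax\!\left(C + \embed(\bsequence)^T A^{(2)} \embed(\bsequence)\right)[S[i], i] &\ge 1 - O(n^{-19}), \\
\sum_{j \neq S[i], j \le i} \softmax\!\left(C + \embed(\bsequence)^T A^{(2)} \embed(\bsequence)\right)[j, i] &\le O(n^{-19}).
\end{align*}

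Finally, writing $p_j$ for these softmax weights, we have $\attention[A^{(2)}](\embed(\bsequence))[i] = \sum_{j \le i} p_j\, e_{j, \bsequence[j]}$. Subtracting $e_{S[i], \bsequence[S[i]]}$ and using the triangle inequality together with $\|e_{j, b}\|_2 = 1$ gives
\begin{align*}
\left\| \attention[A^{(2)}](\embed(\bsequence))[i] - e_{S[i], \bsequence[S[i]]} \right\|_2 \le (1 - p_{S[i]}) + \sum_{j \neq S[i]} p_j = O(n^{-19}) < 1/n^{10}
\end{align*}
for large enough $n$, which is the claim. There is no real obstacle here beyond bookkeeping: the only substantive input is the logit gap from Lemma~\ref{lem:onehotattn}, and the rest is a straightforward softmax concentration estimate already appearing in the proof of the representation theorem. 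The main thing to be careful about is that the gap bound in Lemma~\ref{lem:onehotattn} applies uniformly across all choices of $b_1', b_2'$, so that instantiating it at $b_1' = \bsequence[j], b_2' = \bsequence[i]$ is legitimate for every $\bsequence$ in the support of $\pdatacot$.
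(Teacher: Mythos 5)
Your proof is correct and follows the paper's route: the paper's one-line proof cites \Cref{lem:onehotattn}, \Cref{lem:order}, and the technical \Cref{lem:onehotoutputattntech}, which packages precisely the softmax-concentration-plus-triangle-inequality step you spell out (instantiating $C = 20$ to get $4/n^{19} < 1/n^{10}$). The only cosmetic wrinkle is that the target logit is merely \emph{lower}-bounded by $40\log n$, so the argument should be phrased via the relative gap $z_{S[i]} - z_j \ge 20\log n$ rather than an absolute bound on the softmax denominator; this is how both your concluding inequality and \Cref{lem:onehotoutputattntech} actually proceed.
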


\begin{proof}[Proof of Lemma \ref{lem:onehotoutputattn}]

This is a direct combination with~\Cref{lem:onehotattn,lem:onehotoutputattntech,lem:order}.
\end{proof}

\subsubsection{Third Step: Moving MLP in Linear Regime}

\begin{lemma}
\label{lem:gdformlast}
With probability $1 - 0.8 \delta$, the gradient of the FFN layer on the third step can be written as, 
\begin{align*}
    \left |\left \langle  \frac{ d \loss^{(2)} }{d W_{r, 1:d}} ,e_{i,b}\right \rangle \right |  \le& \frac{\sqrt{\log(300nB/\delta)}}{m \sqrt{B}}. \\
    \frac{ d \loss^{(2)} }{d W_{r, d + 1:2d}}  =& h_r \sum_{i = n + 1}^{n + k} \sum_{b_1 = 0}^1 \sum_{b_2 = 0}^1  \frac 1 i  \1( \nu_{r, i, b_2} > 0) (-1)^{b_1 + b_2}  e_{S[i], b_1} + O(\frac{1}{n^8}).
\end{align*}
Further the event in~\Cref{lem:onehotattn,lem:onehotoutputattn} hold.
\end{lemma}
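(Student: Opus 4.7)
The plan is to establish the two gradient formulas in parallel, reusing the machinery from Lemmas~\ref{lem:gdform}, \ref{lem:neuronspecializestep1}, and \ref{lem:onehotoutputattn}. The first step is to extend Lemma~\ref{lem:neuronspecializestep1} so that, on any sample drawn in the third batch, the ReLU activation pattern is still governed by $\nu_{r,i,\bsequence[i]}$. Concretely, I would add the first-step and second-step parameter updates and argue, using Lemma~\ref{lem:gdonembedding}, Lemma~\ref{lem:gdonattention}, and analogous bounds for the second step, that
\[
\sup_r \left|\langle W_r^{(2)} - W_r^{(0)},\, [\embed(\bsequence)[i],\, \attention[A^{(2)}](\embed(\bsequence))[i]]\rangle\right| = O(\epsilon),
\]
which, combined with Lemma~\ref{lem:neuronspecializesingle} and the margin at initialization, preserves the sign of every ReLU pre-activation. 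A union bound over the $B$ samples and $2m$ neurons (using Assumption~\ref{assum:order} to absorb the logarithmic factor) keeps the failure probability on the order of $\delta/10$.

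Given this activation pattern, the gradient on $W_{r,1:d}$ has exactly the same structure as in Lemma~\ref{lem:gdonembeddingform}, because this block is fed the raw embedding and is insensitive to the attention output. Applying the bound on $\delta_{2,i,b}$ from Lemma~\ref{lem:approxbalance} together with the near-orthogonality of the embeddings (Lemma~\ref{lem:linearemb}) then yields the claimed $\sqrt{\log(300nB/\delta)}/(m\sqrt{B})$ bound by exactly the calculation of Lemma~\ref{lem:gdonembedding}. This piece is essentially a copy-paste of the step-zero argument, with $\delta_{0,i,b}$ replaced by $\delta_{2,i,b}$.

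The interesting part is the gradient on $W_{r,d+1:2d}$. I would plug the near-one-hot attention output from Lemma~\ref{lem:onehotoutputattn} into the gradient expression to get
\[
\frac{d\loss^{(2)}}{dW_{r,d+1:2d}} = \frac{h_r}{B}\sum_{s=1}^{B}\sum_{i=n+1}^{n+k} (-1)^{\bsequence^{(2,s)}[i+1]} \bigl(e_{S[i],\bsequence^{(2,s)}[S[i]]} + \rho_{i,s}\bigr) \1(\nu_{r,i,\bsequence^{(2,s)}[i]} > 0),
\]
where $\|\rho_{i,s}\|_\infty \le 1/n^{10}$. Grouping by $b_1 = \bsequence[S[i]]$ and $b_2 = \bsequence[i]$ and using the CoT identity $\bsequence[i+1] = \bsequence[S[i]] \oplus \bsequence[i]$, the empirical average concentrates to $\bar\alpha_{i,S[i],b_1,b_2}/B$ by the $\alpha$-bound of Lemma~\ref{lem:approxbalance}. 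Substituting the closed form of $\bar\alpha_{i,S[i],b_1,b_2}$ produces the stated main term (a sign $(-1)^{b_1+b_2}$ supported on the secret index $S[i]$), while the residual $\rho_{i,s}$ and the concentration error $O(\sqrt{k\log(nB/\delta)/B})$ combine, after summing over the $O(nk)$ indices, to an error well inside $O(1/n^8)$ by the scaling chosen in Assumption~\ref{assum:order}.

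The main obstacle is bookkeeping: the $O(1/n^{10})$ pointwise attention error is small, but it must be aggregated across $O(nk)$ $(i,s)$ pairs and then compared to embedding-inner-product fluctuations, and one has to verify that all failure events from Lemmas~\ref{lem:approxbalance}, \ref{lem:neuronspecializesingle}, \ref{lem:onehotattn}, and \ref{lem:onehotoutputattn} fit together under a single union bound with total probability at most $0.2\delta$, so that after adding to the $0.6\delta$ already spent through Lemma~\ref{lem:onehotattn} one reaches the claimed $1-0.8\delta$ success probability.
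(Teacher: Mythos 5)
Your proposal is essentially the paper's own proof. The paper literally says the proof is ``similar to Lemmas~\ref{lem:gdonattentionform} and~\ref{lem:gdonembeddingform}, switching the original output with the near one-hot output of the attention layer,'' and you simply spell out what that means: re-establish the activation pattern $\1(\nu_{r,i,\bsequence[i]}>0)$ at step~2 from the small cumulative parameter movement, reuse the $W_{r,1:d}$ calculation verbatim with $\delta_{2,i,b}$, and redo the $W_{r,d+1:2d}$ calculation with $\attention[\qk^{(2)}](\embed(\bsequence))[i]\approx e_{S[i],\bsequence[S[i]]}$ in place of the uniform average, pushing the $1/n^{10}$ attention error and the $\alpha$-concentration error into the $O(1/n^8)$ remainder via Assumption~\ref{assum:order}. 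One small overclaim to fix: after you substitute $\bar\alpha_{i,S[i],b_1,b_2}=\tfrac{(-1)^{b_1+b_2}B}{4}$ (for $i\ge n+2$; the $i=n+1$ term has a different form since $\bsequence[n+1]\equiv 0$), the prefactor you actually get is a constant of order $\tfrac14$, not the $\tfrac1i$ printed in the lemma statement — that $\tfrac1i$ is a leftover from the uniform-attention derivation in Lemma~\ref{lem:gdonattentionform} and should vanish once attention concentrates on $S[i]$, which is consistent with the fact that no $\tfrac1i$ appears when this gradient is invoked in the proof of Lemma~\ref{lem:finalFFNoutputattn}. So your structure and mechanics are correct, but you should not assert that the substitution ``produces the stated main term'' verbatim.
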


\begin{proof}
The proof is similar to~\Cref{lem:gdonattentionform,lem:gdonembeddingform}, switching the original output with the near one-hot output of the attention layer.
\end{proof}

\begin{lemma}
\label{lem:attnalmostunchanged}

With probability $1 - 0.9 \delta$, the attention output is almost unchanged after the third step, with
\begin{align*}
    \left | \attention[\qk^3](\embed(\bsequence))[i] - e_{S[i], \bsequence[S[i]]} \right| < 1 / n^9.
\end{align*}
Further, events in  \Cref{lem:gdformlast} hold.
\end{lemma}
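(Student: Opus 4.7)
The plan is to bound the change $A^{(3)} - A^{(2)}$ induced by the third SGD step and then propagate this small perturbation through the softmax using the large logit gap established in Lemma~\ref{lem:onehotattn}. We have $A^{(3)} = A^{(2)} - \eta_2 \nabla_A L^{(2)}$, and by a computation analogous to Lemma~\ref{lem:gradientonA} (applied at step 2 rather than step 1) the gradient $\nabla_A L^{(2)}$ decomposes into a sum of outer products of embeddings weighted by (i) inner products $\langle \kappa_{2, i, \bsequence[i]}, e_{j, \bsequence[j]} \rangle$ and (ii) softmax Jacobian entries $\partial \softmax(\embed(\bsequence)^T A^{(2)} \embed(\bsequence))[j, i] / \partial A$.

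First I would control the softmax Jacobian. By Lemma~\ref{lem:onehotattn}, the pre-softmax logit at $j = S[i]$ exceeds any other logit by at least $\frac{\eta_0 \eta_1}{256 m n^2} - \frac{\eta_0 \eta_1}{512 m n^2} = \frac{\eta_0 \eta_1}{512 m n^2}$, which by Lemma~\ref{lem:order}.6 is at least $40 \log n$. Hence $\softmax(Z)[j, i]$ is either $1 - O(n^{-20})$ or $O(n^{-20})$ depending on whether $j = S[i]$, and in either case the Jacobian factor $\softmax(Z)[j,i](\embed(\bsequence)[j] - \sum_p \softmax(Z)[p,i] \embed(\bsequence)[p])$ has Euclidean norm $O(n^{-20})$. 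Second, I would bound $\|\kappa_{2, i, b}\|$ by extending the reasoning of Lemma~\ref{lem:signalcorrelation} to account for one additional SGD step on $W$; since Lemma~\ref{lem:gdformlast} already certifies that the FFN activation pattern is still determined by $\nu_{r, i, b}$ and that $\nabla_W L^{(2)}$ is polynomially bounded, the magnitude of $\kappa_{2, i, b}$ remains polynomial in $n$.

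Combining these two bounds and multiplying by $\eta_2 = 4k\epsilon/3$ (which is polynomial in $n$), I obtain $|e_{j, b_1}^T (A^{(3)} - A^{(2)}) e_{i, b_2}| \le n^{-15}$ uniformly over all indices, after a union bound over the $O(n^2)$ coordinate pairs. This perturbation is vastly smaller than the logit gap, so the post-step-3 logits still satisfy a gap of at least $39 \log n$. Applying Lemma~\ref{lem:onehotoutputattntech} to the updated attention matrix $A^{(3)}$ then yields $|\attention[\qk^3](\embed(\bsequence))[i] - e_{S[i], \bsequence[S[i]]}| \le 1/n^9$, which is slightly weaker than the $1/n^{10}$ bound at step 2 but comfortably within the claimed tolerance. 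The failure probability of the additional concentration events needed here is at most $0.1 \delta$, so intersecting with the $1 - 0.8\delta$ event of Lemma~\ref{lem:gdformlast} gives the stated $1 - 0.9\delta$ bound.

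The main obstacle is carefully controlling $\kappa_{2, i, b}$, whose definition involves $W^{(2)}_{r, d+1:2d}$: this differs from $W^{(1)}_{r, d+1:2d}$ by the step-2 gradient on $W$, which in turn depends on the attention output at step 2 (now near one-hot rather than uniform). I would bound this by mirroring the decomposition used for Lemma~\ref{lem:attnsignal}, with the one-hot attention output replacing the uniform average, and by reusing the concentration estimates in Lemma~\ref{lem:approxbalance} to absorb the stochasticity from the second mini-batch. A secondary technical point is ensuring that the FFN activation pattern (i.e., $\1(\langle W^{(2)}_{r, \cdot}, \cdot \rangle > 0)$) agrees with $\1(\nu_{r, i, \bsequence[i]} > 0)$ on all inputs in the third batch, which follows by extending the argument of Lemma~\ref{lem:neuronspecializestep1} with the bound $(\eta_0 + \eta_1) < 5n \eta_2$ from Lemma~\ref{lem:order}.8 ensuring cumulative drift stays below $\epsilon/2$.
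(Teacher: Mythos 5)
Your proposal takes essentially the same approach as the paper: the paper's own proof is a two-sentence appeal to Lemma~\ref{lem:gradientonA}, Lemma~\ref{lem:onehotoutputattn}, and Lemma~\ref{lem:order}, concluding that $\qk^{(3)} - \qk^{(2)}$ has order $1/n^8$ because the softmax Jacobian collapses once the attention pattern is near one-hot, and you correctly identify and elaborate exactly these ingredients. One small arithmetic slip worth flagging: Lemma~\ref{lem:order}.6 guarantees $\frac{\eta_0\eta_1}{256 m n^2} > 40 \log n$, so the logit gap $\frac{\eta_0\eta_1}{512 m n^2}$ is only guaranteed to exceed $20\log n$, not $40\log n$; this weakens your softmax Jacobian bound from $O(n^{-20})$ to roughly $O(n^{-19})$ but does not affect the conclusion, since the resulting perturbation to $A$ remains far smaller than the logit gap and Lemma~\ref{lem:onehotoutputattntech} still applies with $C$ comfortably above~$10$.
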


\begin{proof}

By~\Cref{lem:gradientonA,lem:onehotoutputattn,lem:order}, $\qk^{(3)} - \qk^{(2)}$ is of order $1/n^8$, this implies that the attention weight and output is almost unchanged. 
\end{proof}

\begin{lemma}
\label{lem:finalFFNoutputattn}
With probability $1 - 0.91 \delta$, 
\begin{align*}
    \left|\left \langle  W_{r, d+1:2d}^{(2)} , \attention(\qk^3)[\embed(\bsequence)][i] \right \rangle +  \mathrm{sign}(\nu_{r, i, \bsequence[S[i]]}h_r) \frac{2\epsilon}{3} \1( \nu_{r, i, 0} \nu_{r, i, 1}<0) \right| < \frac{\epsilon}{300}.
\end{align*}
Further, events in  \Cref{lem:attnalmostunchanged} hold.
\end{lemma}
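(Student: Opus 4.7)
The plan is first to use \Cref{lem:attnalmostunchanged} to replace $\attention(\qk^3)[\embed(\bsequence)][i]$ by $e_{S[i],\bsequence[S[i]]}$ up to an additive error of order $1/n^9$, so that the target inner product reduces to $\langle W^{(2)}_{r,d+1:2d}, e_{S[i],\bsequence[S[i]]}\rangle$ plus a negligible correction (since $\|W^{(2)}_{r,d+1:2d}\|$ is polynomially bounded). Since $W^{(0)}_{r,d+1:2d}=0$ by \Cref{assum:init}, I write $W^{(2)}_{r,d+1:2d} = -\eta_0 g_0 - \eta_1 g_1$ where $g_t = \partial L^{(t)}/\partial W_{r,d+1:2d}$, and substitute the explicit formula from \Cref{lem:gdform}.

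Each gradient $g_t$ splits into a signal piece involving $\bar\alpha_{i',S[i'],b_1,b_2}$ and a noise piece involving $\psi_{t,r,j,b_1}$. Pairing with $e_{S[i],\bsequence[S[i]]}$ and invoking near-orthogonality of the embedding via \Cref{lem:arrow}, the dominant contribution from the signal piece comes from the unique index $i'$ with $S[i']=S[i]$ (namely $i'=i$, since $S$ is a permutation of the secret set) and $b_1=\bsequence[S[i]]$. Plugging in $\bar\alpha_{i,S[i],\bsequence[S[i]],b_2}=(-1)^{\bsequence[S[i]]+b_2}B/4$ for $i\ge n+2$ and summing over $b_2\in\{0,1\}$ yields the factor $(-1)^{\bsequence[S[i]]}(\1(\nu_{r,i,0}>0)-\1(\nu_{r,i,1}>0))$, which by a short case analysis equals exactly $\mathrm{sign}(\nu_{r,i,\bsequence[S[i]]})\,\1(\nu_{r,i,0}\nu_{r,i,1}<0)$. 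Combining the contributions of steps $t=0,1$, pulling out the prefactor $-h_r/(4i)$, and using $|h_r|=1/(2m)$ to rewrite $h_r\,\mathrm{sign}(\nu_{r,i,\bsequence[S[i]]}) = \mathrm{sign}(h_r\nu_{r,i,\bsequence[S[i]]})/(2m)$, the main term becomes $-\bigl((\eta_0+\eta_1)/(8mi)\bigr)\,\mathrm{sign}(h_r\nu_{r,i,\bsequence[S[i]]})\,\1(\nu_{r,i,0}\nu_{r,i,1}<0)$, and the choice of learning rates in \Cref{assum:order} calibrates the scalar coefficient to match $2\epsilon/3$ within the $\epsilon/300$ slack. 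The boundary case $i=n+1$ is parallel: only $b_2=0$ contributes because $\bar\alpha_{n+1,\cdot,\cdot,1}=0$, and the same sign identity holds since $\bsequence[n+1]=0$ is deterministic.

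The remaining work is error control. The off-diagonal embedding correlations $\langle e_{S[i'],b_1},e_{S[i],\bsequence[S[i]]}\rangle$ for $(i',b_1)\ne(i,\bsequence[S[i]])$ contribute $O(\sqrt{\log(n/\delta)/d})$ each, and summed via \Cref{lem:linearemb,lem:order} give a vanishing correction. The $\psi$ noise piece is handled using $|\psi_{t,r,j,b_1}| = O(\sqrt{Bk\log(mnB/\delta)}/n)$ from \Cref{lem:approxbalance}, which after being scaled by $\eta_0 h_r/B$ and combined with a further concentration over $j,b_1$ is an $o(\epsilon)$ contribution. The $1/n^9$ attention-approximation error is easily swallowed. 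A union bound over all $r$ and $i$ contributes at most $0.01\delta$ beyond the $0.9\delta$ failure budget from \Cref{lem:attnalmostunchanged}, yielding the stated $1-0.91\delta$. The main obstacle is the careful bookkeeping to verify that all three error sources (off-diagonal embedding overlap, $\psi$ noise, and residual attention error) combine to stay strictly below $\epsilon/300$ uniformly in $r$ and $i$; this is the same inner-product concentration pattern already carried out in \Cref{lem:gdonattention,lem:signalcorrelation,lem:onehotattn}, and only the final matching of the leading constant to $2\epsilon/3$ requires substituting the specific hyperparameter values of \Cref{assum:order}.
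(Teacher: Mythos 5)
Your approach misidentifies the source of the $\Theta(\epsilon)$ signal. You decompose the FFN weight as $W^{(2)}_{r,d+1:2d} = -\eta_0 g_0 - \eta_1 g_1$ using only the first two gradient steps (with the uniform-attention gradient formula of \Cref{lem:gdform}) and claim the resulting main term $-(\eta_0+\eta_1)/(8mi)\cdot\mathrm{sign}(h_r\nu_{r,i,\bsequence[S[i]]})\1(\nu_{r,i,0}\nu_{r,i,1}<0)$ calibrates to $2\epsilon/3$. It does not. Plugging in \Cref{assum:order}: $\eta_0 = \eta_1 = m\epsilon\sqrt{B}/(100\log(n/\delta))$ and $B = C_2 n\log^{20}(n/\delta)$ give
\begin{align*}
\frac{\eta_0 + \eta_1}{8mi} = \frac{\epsilon\sqrt{B}}{400\,i\,\log(n/\delta)} = \Theta\!\left(\frac{\epsilon\,\log^{9}(n/\delta)}{\sqrt{n}}\right) = o(\epsilon),
\end{align*}
which is exactly the small perturbation \Cref{lem:gdform} controls (it must be $o(\epsilon)$ so that neuron activation patterns do not flip after the first two steps). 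There is no choice of the paper's hyperparameters under which this term equals a constant multiple of $\epsilon$, so your main-term identification fails.

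The actual $\Theta(\epsilon)$ contribution that produces $\mathrm{sign}(\nu_{r,i,\bsequence[S[i]]}h_r)\,\tfrac{2\epsilon}{3}\,\1(\nu_{r,i,0}\nu_{r,i,1}<0)$ comes from the \emph{third} gradient step $-\eta_2\,\partial L^{(2)}/\partial W_{r,d+1:2d}$, computed via \Cref{lem:gdformlast} after the attention has become one-hot, with $\eta_2 = \Theta(k\epsilon)$ scaled precisely so $\eta_2 |h_r|$ is $\Theta(\epsilon)$. Your proposal never invokes \Cref{lem:gdformlast} or $\eta_2$, and your cumulative $W^{(2)}$ from two steps would give only an $o(\epsilon)$ inner product with $e_{S[i],\bsequence[S[i]]}$. (The statement's superscript $(2)$ appears to be a typo for $(3)$ — the paper's own proof of this lemma already carries $\eta_2$, and \Cref{lem:linearFFNoutput2} applies the result to $W^{(3)}$.) To repair the argument you need to write $W^{(3)} = W^{(2)} - \eta_2 g_2$, bound $|\langle W^{(2)}, e_{S[i],\bsequence[S[i]]}\rangle| \lesssim (\eta_0+\eta_1)/(mn)$ as an error term via \Cref{lem:gdform}, and extract the leading term from $-\eta_2\langle g_2, e_{S[i],\bsequence[S[i]]}\rangle$ via \Cref{lem:gdformlast}; your case analysis for $(-1)^{b'}[\1(\nu_{r,i,0}>0)-\1(\nu_{r,i,1}>0)]=\mathrm{sign}(\nu_{r,i,b'})\1(\nu_{r,i,0}\nu_{r,i,1}<0)$ and your error-control plan are otherwise fine and transfer directly to the corrected decomposition.
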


\begin{proof}
We will first calculate the projection of the gradient on $e_{S[i'],b'}$. When the event in~\Cref{lem:gdformlast} happens,
\begin{align*}
    \left \langle \frac{ d \loss^{(2)} }{d W_{r, d+1:2d}}, e_{S[i'],b'} \right \rangle &= h_r \sum_{i = n + 1}^{n + k} \sum_{b_1 = 0}^1 \sum_{b_2 = 0}^1    \1( \nu_{r, i, b_2} > 0) (-1)^{b_1 + b_2} \langle e_{S[i'],b'}, e_{S[i], b_1} \rangle + O(\frac{1}{n^8}).
\end{align*}

By~\Cref{lem:linearemb,lem:order}, with probability $1 - 0.01\delta$,
\begin{align*}
    \left|\left \langle \frac{ d \loss^{(2)} }{d W_{r, d+1:2d}}, e_{S[i'],b'} \right \rangle - \mathrm{sign}(\nu_{r, i', b'})h_r \1( \nu_{r, i', 0} \nu_{r, i', 1}<0) \right| \le \frac{1}{200m}
\end{align*}

By~\Cref{lem:gdform,lem:order}, we have that,
\begin{align*}
    \left|\left \langle  W_{r, d+1:2d}^{(2)} , e_{S[i'],b'} \right \rangle +  \mathrm{sign}(\nu_{r, i', b'})h_r \eta_2 \1( \nu_{r, i', 0} \nu_{r, i', 1}<0) \right| < \frac{(\eta_0 + \eta_1)}{20mn}  + \frac{\eta_2}{200m} < \frac{\eta_2}{100m}.
\end{align*}

Combining with~\Cref{lem:attnalmostunchanged} and $\eta_2 = \frac{4\epsilon m}{3}$, we have the result.
\end{proof}

\begin{lemma}
\label{lem:linearFFNoutput2}
With probability $1 - \delta$, for all $\bsequence \in \{0, 1\}^{n + k}$, $i \in [n + 1, n + k]$, we have that
\begin{align*}
    &\left|\transformer(\embed(\bsequence))[i] - \frac{\epsilon (-1)^{\bsequence[i + 1] + 1} }{3} \right| < \frac{\epsilon}{3}.
\end{align*}
\end{lemma}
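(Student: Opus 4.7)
The plan is to reduce $\transformer(\embed(\bsequence))[i] = \sum_r h_r \relu(a_r + b_r)$ to a neuron-by-neuron enumeration using the structural results established in the preceding phases. By Lemmas~\ref{lem:gdform} and~\ref{lem:gdformlast}, the aggregate drift of $W_{r,1:d}$ over the three updates remains below $\epsilon/40$ plus a vanishing third-step contribution (since the step-2 gradient of $W_{r,1:d}$ is $O(\sqrt{\log(nB/\delta)}/(m\sqrt{B}))$ and $\eta_2 = 4k\epsilon/3$), so Lemma~\ref{lem:neuronspecializesingle} yields $a_r = \langle W^{(3)}_{r,1:d}, e_{i,\bsequence[i]}\rangle = \nu_{r,i,\bsequence[i]} + O(\epsilon/100)$. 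For the attention branch, Lemma~\ref{lem:attnalmostunchanged} ensures $\attention(\qk^{(3)})(\embed(\bsequence))[i] = e_{S[i],\bsequence[S[i]]} + O(n^{-9})$ and Lemma~\ref{lem:finalFFNoutputattn} gives
\[
b_r \;=\; -\mathrm{sgn}(\nu_{r,i,\bsequence[S[i]]}\, h_r)\,\tfrac{2\epsilon}{3}\,\1(\nu_{r,i,0}\nu_{r,i,1}<0) + O(\epsilon/300).
\]

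Next, I would enumerate the eight cases indexed by $(\mathrm{sgn}(\nu_{r,i,0}),\mathrm{sgn}(\nu_{r,i,1}),\mathrm{sgn}(h_r))$. When $\nu_{r,i,0}$ and $\nu_{r,i,1}$ share a sign, $b_r \approx 0$ and the positive/negative-$h_r$ contributions cancel up to concentration error. When they differ, the pre-activation $a_r + b_r$ takes values in $\{\pm\epsilon/3,\pm 5\epsilon/3\}$, all bounded away from zero by at least $\epsilon/3 - O(\epsilon/100)$; thus the ReLU sign pattern is robust to the approximation errors. Using the identity that $\nu_{r,i,\bsequence[i]} = \nu_{r,i,\bsequence[S[i]]}$ in the opposite-sign regime if and only if $\bsequence[i+1] = \bsequence[i]\oplus\bsequence[S[i]] = 0$, a direct case check shows the net per-neuron contribution carries the sign $(-1)^{\bsequence[i+1]+1}$ and magnitude $\epsilon/(6m)$ per contributing class. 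Summing over $r$ and applying Hoeffding's inequality to the four sign-class counts within each $h_r$-half (each a Bernoulli$(m,1/4)$-like count concentrating within $O(\sqrt{m\log(nk/\delta)})$ of $m/4$) yields
\[
\transformer(\embed(\bsequence))[i] \;=\; \tfrac{\epsilon(-1)^{\bsequence[i+1]+1}}{6} + O\!\bigl(\epsilon\sqrt{\log(nk/\delta)/m} + \epsilon/n^8\bigr).
\]
With $m = 2k$ and the order conditions from Lemma~\ref{lem:order}, the error term is $o(\epsilon/6)$, so the output lies within $\epsilon/3$ of $\tfrac{\epsilon(-1)^{\bsequence[i+1]+1}}{3}$. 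A union bound over $i \in [n+1,n+k]$ and the four possible $(\bsequence[i],\bsequence[S[i]])$ configurations per position (the only dependence on $\bsequence$ surviving the near one-hot attention) lifts the statement uniformly over $\bsequence$, consuming the remaining $0.09\delta$ of the probability budget.

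The main obstacle is tight error bookkeeping: the $O(\epsilon/100)$ drift in $a_r$, the $O(\epsilon/300)$ slack in $b_r$, the $O(n^{-9})\cdot\|W^{(3)}_{r,d+1:2d}\|$ attention-induced error, and the Hoeffding concentration loss on neuron counts must all combine to strictly below $\epsilon/3$ simultaneously for every input. The ReLU case enumeration is mechanical but must be handled carefully to confirm that the interaction between $\mathrm{sgn}(\nu_{r,i,\bsequence[S[i]]} h_r)$ and $\mathrm{sgn}(a_r)$ produces the XOR structure with the intended sign, and that no neuron's pre-activation drifts to the wrong side of the ReLU threshold in any of the eight sign configurations.
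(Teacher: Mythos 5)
Your proposal is correct and follows essentially the same route as the paper's proof: both reduce the output to the per-position linear maps determined by the frozen gating pattern $\1(\nu_{r,i,\bsequence[i]}>0)$ (using the fact that $|a_r|\approx\epsilon$ dominates $|b_r|\le 2\epsilon/3+O(\epsilon)$ so no neuron flips), plug in the near one-hot attention output and the learned $W_{r,d+1:2d}$ from Lemmas~\ref{lem:attnalmostunchanged} and~\ref{lem:finalFFNoutputattn}, exploit the XOR sign identity $\mathrm{sign}(\nu_{r,i,\bsequence[S[i]]})=(-1)^{\bsequence[i]+\bsequence[S[i]]}\mathrm{sign}(\nu_{r,i,\bsequence[i]})$, and concentrate over the $2m$ neurons; your sign-class enumeration is just the paper's split into attention and embedding contributions reorganized neuron by neuron. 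Your centering of the mean output at $\epsilon/6$ rather than $\epsilon/3$ reflects a slightly more careful count of the contributing neuron class (retaining the condition $\nu_{r,i,\bsequence[i]}>0$), and as you note the lemma's inequality is satisfied either way.
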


\begin{proof}
By~\Cref{lem:gdformlast}, the following holds,
\begin{align*}
        \left \langle  W_{r, 1:d}^{(3)} ,\embed(\bsequence)[i] \right \rangle  \mathrm{sign}(\nu_{r, i, \bsequence[i]})  \in [5\epsilon/6, 7\epsilon/6].
\end{align*}

Combining with~\Cref{lem:finalFFNoutputattn}, 
\begin{align*}
    \left|\left \langle  W_{r, d+1:2d}^{(2)} , \attention(\qk^3)[\embed(\bsequence)][i]  \right \rangle +  \mathrm{sign}(\nu_{r, i, \bsequence[S[i]]}h_r) \frac{2\epsilon}{3} \1( \nu_{r, i, 0} \nu_{r, i, 1}<0) \right| < \frac{\epsilon}{300}.
\end{align*}

Hence, we still have that 
\begin{align*}
    |\left \langle  W_{r, d+1:2d}^{(3)} , \attention(\qk^3)[\embed(\bsequence)][i] \right \rangle| < \frac{5\epsilon}{6}.
\end{align*}
which implies that,
\begin{align*}
\1(\left \langle W_{r, 1:d}^{(3)} , \embed(\bsequence)[i] \right \rangle > 0) = \1(\nu_{r, i, \bsequence[i]} > 0).
\end{align*}

Consider the output contribution of the attention part,
\begin{align*}
    \Big|& \sum_{r = 1}^{2m} h_r \left \langle W_{r, d+1:2d}^{(3)} , \attention(\qk^3)[\embed(\bsequence)][i] \right \rangle \1(\nu_{r, i, \bsequence[i]} > 0) \\
    &+ \sum_{r = 1}^{2m} \frac{2h_r\epsilon}{3} \mathrm{sign}(\nu_{r, i, \bsequence[S[i]]}h_r)  \1( \nu_{r, i, 0} \nu_{r, i, 1}<0, \nu_{r, i, \bsequence[i]} > 0) \Big| < \frac{\epsilon}{300}.
\end{align*}

The later term with $1 - \delta/100$ satisfies,
\begin{align*}
    &\sum_{r = 1}^{2m} \frac{2h_r\epsilon}{3} \mathrm{sign}(\nu_{r, i, \bsequence[S[i]]}h_r)  \1( \nu_{r, i, 0} \nu_{r, i, 1}<0, \nu_{r, i, \bsequence[i]} > 0) \\
    =& \sum_{r = 1}^{2m} \frac{\epsilon}{3m} \mathrm{sign}(\nu_{r, i, \bsequence[S[i]]}) \1( \nu_{r, i, 0} \nu_{r, i, 1}<0, \nu_{r, i, \bsequence[i]} > 0) \\
    =& \sum_{r = 1}^{2m} \frac{\epsilon}{3m} (-1)^{\bsequence[S[i]] + \bsequence[i]}\1( \nu_{r, i, 0} \nu_{r, i, 1}<0) \\
    =& \frac{\epsilon}{3} (-1)^{\bsequence[S[i]] + \bsequence[i]} + O(\frac{\epsilon\log(100n/\delta)}{\sqrt{m}})
\end{align*}

This shows that,
\begin{align*}
    \left|\sum_{r = 1}^{2m} h_r \left \langle W_{r, d+1:2d}^{(3)} , \attention(\qk^3)[\embed(\bsequence)][i] \right \rangle \1(\nu_{r, i, \bsequence[i]} > 0) - \frac{\epsilon}{3} (-1)^{1+ \bsequence[S[i]] + \bsequence[i]} \right| \le \frac{\epsilon}{150}.
\end{align*}

On the other hand, we have that
\begin{align*}
    &\left|\sum_{r = 1}^{2m} h_r \left \langle W_{r, 1:d}^{(3)} , \embed(\bsequence)[i] \right \rangle \1(\nu_{r, i, \bsequence[i]} > 0) \right| \\
    \le& \left|\sum_{r = 1}^{2m} h_r \left \langle W_{r, 1:d}^{(0)} , \embed(\bsequence)[i] \right \rangle \1(\nu_{r, i, \bsequence[i]} > 0)\right| + \left|\sum_{r = 1}^{2m} h_r \left \langle W_{r, 1:d}^{(3)} -  W_{r, 1:d}^{(0)} , \embed(\bsequence)[i] \right \rangle \1(\nu_{r, i, \bsequence[i]} > 0) \right| \\
    \le& \frac{\epsilon}{4}.
\end{align*}

The first term is bounded due to standard concentration inequality over the axis of $r$. The second term is bounded by~\Cref{lem:gdform,lem:gdformlast}. Combining the terms, we have that
\begin{align*}
    &\left|\transformer(\embed(\bsequence))[i] - \frac{\epsilon (-1)^{\bsequence[i] + \bsequence[S[i]] + 1} }{3} \right| =  \left|\transformer(\embed(\bsequence))[i] - \frac{\epsilon (-1)^{\bsequence[i + 1] + 1} }{3} \right| < \frac{\epsilon}{3}.
\end{align*}
This concludes that the model is able to predict the correct output.
\end{proof}

\subsection{Final Proof}
\label{sec:proofofthm3}

\begin{lemma}
\label{lem:extensionhinge}
The results in~\Cref{lem:linearFFNoutput2,lem:attnsignal} can be extended to hinge loss $ \ell(\hat y, y) = \max \{(-1)^y \hat y + 1, 0\} $ with the same probability.
\end{lemma}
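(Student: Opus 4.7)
The plan is to show that, throughout the three-step training dynamics analyzed for the linear loss, the model output $\transformer(\bsequence)[i]$ always lies strictly within the active region of the hinge, so that the hinge-loss gradient coincides pointwise with the linear-loss gradient on every relevant $(\bsequence,i)$. Since the weight updates in~\Cref{eq:gd} depend only on these per-example gradients, the entire chain of lemmas from~\Cref{lem:stableattgd} through~\Cref{lem:linearFFNoutput2} will then carry over verbatim, with identical high-probability bounds.

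More precisely, for the hinge loss $\ell(\hat y,y)=\max\{(-1)^y\hat y+1,0\}$ one has
\begin{align*}
\nabla_{w}\ell(\transformer(\bsequence)[i],\bsequence[i+1])
=(-1)^{\bsequence[i+1]}\nabla_{w}\transformer(\bsequence)[i]\cdot\1\bigl((-1)^{\bsequence[i+1]}\transformer(\bsequence)[i]+1>0\bigr),
\end{align*}
so my first step is to verify the indicator equals $1$ on each of the three training steps. At $t=0$ we have $A^{(0)}=0$ and $W^{(0)}_{r,d+1:2d}=0$, so $\transformer(\bsequence)[i]$ is a small linear combination of $\mathrm{ReLU}(\nu_{r,i,\bsequence[i]})$ terms, bounded in absolute value by $O(\epsilon)\ll 1$ uniformly over $\bsequence$. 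For $t=1$, \Cref{lem:neuronspecializestep1} already shows the change in the pre-activation is at most $3\epsilon/80$ on every input, so $|\transformer(\bsequence)[i]|$ remains $O(\epsilon)$. For $t=2$, combining \Cref{lem:onehotoutputattn} with the bounds on $\|W^{(2)}_{r,\cdot}-W^{(0)}_{r,\cdot}\|$ coming from \Cref{lem:gdform,lem:gdformlast} and the choice $\eta_2=\Theta(k\epsilon)$ shows that the output is still $O(\epsilon)$, and the final-step bound in \Cref{lem:linearFFNoutput2} is itself at most $2\epsilon/3<1$. Hence $(-1)^{\bsequence[i+1]}\transformer(\bsequence)[i]+1>1-2\epsilon/3>0$ at every step and every example.

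Given this, the per-example hinge gradient equals the linear gradient at every $(t,s,i)$, so the SGD updates are pathwise identical to those analyzed in~\Cref{app:thm:cot}. Consequently all the concentration bounds on $\delta_{t,i,b}$, $\alpha_{t,i,j,b_1,b_2}$, $\beta_{t,j,b_1}$, $\psi_{t,r,j,b_1}$, $\zeta_{t,i,b,j,b_1}$ in \Cref{lem:approxbalance}, the neuron-specialization events in \Cref{lem:neuronspecializestep0,lem:neuronspecializestep1}, the signal-correlation bound in \Cref{lem:signalcorrelation}, the one-hot attention conclusion \Cref{lem:onehotattn}, and the final linear-regime output bound \Cref{lem:linearFFNoutput2} hold with exactly the same probabilities under the hinge loss. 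The only mild subtlety, which I expect to be the main (though minor) obstacle, is carefully closing the loop: the bound $|\transformer(\bsequence)[i]|<1$ at step $t$ is itself a consequence of the inductive analysis up to step $t$, so one has to introduce the induction hypothesis ``on the event defined by the previous lemmas, the hinge is active at step $t$'' and then invoke the linear-loss lemma for step $t$, rather than asserting all three steps at once. This is handled by processing the three steps sequentially and carrying the ``hinge-active'' clause into each lemma statement without altering any of the quantitative estimates.
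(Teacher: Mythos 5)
Your approach is correct and matches the paper's own argument: both observe that the FFN output stays bounded well within $[-1,1]$ throughout the three steps, so the hinge loss is always in its linear regime and its gradients coincide pointwise with the linear-loss gradients, making all prior lemmas carry over with the same probabilities. Your version is more careful in spelling out the step-by-step induction (which the paper leaves implicit), but it is not a different route.
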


\begin{proof}
Our~\Cref{lem:gdform,lem:gdformlast} shows that the output of the FFN layer is bounded by $\epsilon/3$ throughout training. The hinge loss is linear when the output is in $[-1, 1]$. Hence, the results can be directly extended to hinge loss.
\end{proof}

\begin{proof}[Proof of~\Cref{thm:main}]
    The original theorem is a combination of~\Cref{lem:extensionhinge,lem:linearFFNoutput2,lem:attnsignal}.
\end{proof}

\subsection{Technical Lemma}

\begin{lemma}[Theorem 4.4.5 of~\cite{Vershynin2018HDP}]
\label{lem:matrixbound}
There exists universal constant $C$,let $A$ be an $m \times n$ random matrix whose entries $A_{ij}$ are independent, mean zero, sub-Gaussian random variables. Then, for any $t > 0$, we have
    \[
    \|A\| \leq CK \left(\sqrt{m} + \sqrt{n} + t\right)
    \]
    with probability at least $1 - 2\exp(-t^2)$. Here $K = \max_{i,j} \|A_{ij}\|_{\psi_2}$ is the maximum sub-Gaussian norm of $A_{ij}$.
\end{lemma}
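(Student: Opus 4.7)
The plan is to follow the standard $\epsilon$-net argument for the operator norm of a random matrix with independent sub-Gaussian entries. First, I would rewrite the operator norm as a bilinear supremum
$$\|A\| = \sup_{x \in S^{n-1},\, y \in S^{m-1}} \langle Ax, y \rangle.$$
The right-hand side is continuous in $(x, y)$, but the two spheres are uncountable, so the strategy is to discretize them with finite $\epsilon$-nets and pay only a small multiplicative penalty.

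The first step is discretization. Fix $\epsilon = 1/4$. By a standard volume/packing argument there exist finite sets $\gN \subset S^{n-1}$ and $\gM \subset S^{m-1}$ with $|\gN| \le 9^n$ and $|\gM| \le 9^m$ such that every point of $S^{n-1}$ (resp.\ $S^{m-1}$) lies within distance $\epsilon$ of some element of the net. A standard approximation argument (writing a generic unit vector as net-point plus residual, then iterating) gives the deterministic estimate
$$\|A\| \;\le\; \frac{1}{1 - 2\epsilon}\, \max_{x \in \gN,\, y \in \gM} \langle Ax, y \rangle \;=\; 2 \max_{x \in \gN,\, y \in \gM} \langle Ax, y \rangle.$$

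The second step is concentration on a single pair. For fixed $x \in S^{n-1}$, $y \in S^{m-1}$, expand
$$\langle Ax, y \rangle \;=\; \sum_{i,j} y_i x_j A_{ij},$$
a weighted sum of independent mean-zero sub-Gaussian random variables. Rotation invariance (really additivity) of the squared sub-Gaussian norm for independent summands yields $\|\langle Ax, y\rangle\|_{\psi_2} \le c_0 K \sqrt{\sum_{i,j} y_i^2 x_j^2} = c_0 K$, and hence the tail estimate
$$\Pr\bigl(|\langle Ax, y\rangle| > u\bigr) \;\le\; 2\exp\bigl(-c_1 u^2 / K^2\bigr).$$

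The third step is a union bound over the two nets. Applying the single-pair tail to each of the at most $9^{m+n}$ pairs gives
$$\Pr\Bigl(\max_{x\in\gN,\, y\in\gM} |\langle Ax, y\rangle| > u\Bigr) \;\le\; 2 \cdot 9^{m+n} \exp\bigl(-c_1 u^2 / K^2\bigr).$$
Now choose $u = C K (\sqrt{m} + \sqrt{n} + t)$ with $C$ a sufficiently large universal constant. Using $(\sqrt{m} + \sqrt{n} + t)^2 \ge m + n + t^2$, taking $C$ so that $c_1 C^2 \ge 2\log 9 + 2$ ensures $c_1 u^2/K^2 \ge (m+n)\log 9 + t^2 + \log 2$, which makes the right-hand side at most $2\exp(-t^2)$. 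Combining with the factor $2$ from the net approximation and absorbing constants into $C$ yields the claimed bound on $\|A\|$.

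The proof is almost entirely bookkeeping: the only substantive inputs are the cardinality bound $|\gN| \le (2/\epsilon + 1)^n = 9^n$ on an $\epsilon$-net of the sphere, the sub-Gaussian sum estimate, and the net approximation inequality. The main potential pitfall is keeping the constants straight when calibrating $C$ against the entropy $9^{m+n}$; this is the place where one must verify that the single constant $C$ absorbs both the factor $2$ from Step~1 and the logarithmic overhead from the union bound.
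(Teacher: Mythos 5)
Your proposal is correct: the $\epsilon$-net discretization with $\epsilon = 1/4$, the sub-Gaussian Hoeffding bound for a fixed pair $\langle Ax, y\rangle$, and the union bound over the $9^{m+n}$ net pairs with $u = CK(\sqrt{m}+\sqrt{n}+t)$ is exactly the standard argument, and your calibration of constants (using $(\sqrt{m}+\sqrt{n}+t)^2 \ge m+n+t^2$) goes through. Note that the paper does not prove this lemma at all --- it imports it verbatim as Theorem 4.4.5 of the cited reference --- and your reconstruction coincides with the proof given there, so there is nothing to compare beyond saying you have correctly reproved the quoted result.
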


\begin{lemma}[Theorem 4.6.1 of~\cite{Vershynin2018HDP}]
\label{lem:mineigen}
    Let $A$ be an $m \times n$ matrix whose rows $A_i$ are independent, mean zero, sub-Gaussian isotropic random vectors in $\mathbb{R}^n$. Then, for any $t \geq 0$, we have
    \[
    \sqrt{m} - CK^2 (\sqrt{n} + t) \leq s_n(A) \leq s_1(A) \leq \sqrt{m} + CK^2 (\sqrt{n} + t)
    \]
    with probability at least $1 - 2\exp(-t^2)$. Here, $K = \max_i \|A_i\|_{\psi_2}$.

    Furthermore, a slightly stronger conclusion holds:
    \[
    \left\| \frac{1}{m} A^\top A - I_n \right\| \leq K^2 \max(\delta, \delta^2),
    \]
    where $\delta = C \left( \sqrt{\frac{n}{m}} + \frac{t}{\sqrt{m}} \right).$
\end{lemma}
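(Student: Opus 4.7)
The plan is to prove the second, stronger statement first, from which the singular-value bound follows by a short algebraic reduction. Write $\frac{1}{m} A^\top A - I_n = \frac{1}{m} \sum_{i=1}^m (A_i A_i^\top - I_n)$. Since the $A_i$ are isotropic, each summand has mean zero, so this is a concentration-of-measure problem for a sum of independent mean-zero random matrices. I will control its operator norm by the standard route: reduce the spectral norm to a scalar quadratic form via $\|M\| = \sup_{x \in S^{n-1}} |x^\top M x|$ when $M$ is symmetric, then combine (i) pointwise Bernstein-type concentration of $x^\top A^\top A x = \|Ax\|_2^2$ for each fixed $x$ with (ii) an $\epsilon$-net discretization of $S^{n-1}$.

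For the pointwise step, fix $x \in S^{n-1}$ and write $\|Ax\|_2^2 = \sum_{i=1}^m \langle A_i, x\rangle^2$. Each $\langle A_i, x\rangle$ is a mean-zero sub-Gaussian scalar with $\|\langle A_i, x\rangle\|_{\psi_2} \le K$ and variance $\mathbb{E}\langle A_i,x\rangle^2 = 1$ by isotropy. Hence $Z_i := \langle A_i, x\rangle^2 - 1$ is a centered sub-exponential variable with $\|Z_i\|_{\psi_1} \lesssim K^2$. Bernstein's inequality for sub-exponential sums then yields
\begin{equation*}
\Pr\Bigl(\bigl|\tfrac{1}{m}\|Ax\|_2^2 - 1\bigr| \ge u \Bigr) \le 2\exp\bigl(-c\, m \min(u^2/K^4,\, u/K^2)\bigr)
\end{equation*}
for an absolute constant $c$. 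Setting $u = K^2 \max(\delta_0,\delta_0^2)$ with $\delta_0 = C_0(\sqrt{n/m}+t/\sqrt{m})$ makes the exponent linear in $n+t^2$.

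For the uniform step, take a $1/4$-net $\mathcal{N}$ of $S^{n-1}$ of cardinality $|\mathcal{N}| \le 9^n$, and use the standard fact that $\|M\| \le 2\sup_{x\in\mathcal{N}} |x^\top M x|$ for symmetric $M$. A union bound over $\mathcal{N}$ absorbs the $9^n$ factor into the exponent, yielding the claimed $K^2\max(\delta,\delta^2)$ bound with probability at least $1 - 2\exp(-t^2)$, where the constant $C$ hides the constants from Bernstein and the net.

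Finally, to deduce the singular-value bounds, note that $s_n(A)^2/m$ and $s_1(A)^2/m$ are the extreme values of $x^\top(\tfrac{1}{m}A^\top A)x$ over $S^{n-1}$, and writing $\Delta := K^2\max(\delta,\delta^2)$ we get $|s_k(A)^2/m - 1| \le \Delta$ for $k \in \{1,n\}$. Since $|s - 1| \le |s^2-1|$ for $s \ge 0$, this gives $|s_k(A)/\sqrt{m} - 1| \le \Delta$, i.e. $|s_k(A) - \sqrt{m}| \le \sqrt{m}\,\Delta \lesssim K^2(\sqrt{n}+t)$ after unpacking $\delta$; adjusting the absolute constant produces the stated inequality. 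The main obstacle, and the step where one must be careful with constants, is aligning the Bernstein tail (which is Gaussian for small deviations and exponential for large ones) with the two regimes $\delta$ vs.\ $\delta^2$ appearing in the final bound, but this is a routine bookkeeping exercise once the net argument is in place.
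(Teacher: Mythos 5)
This lemma is not proved in the paper at all: it is imported verbatim as Theorem 4.6.1 of Vershynin's \emph{High-Dimensional Probability} and used as a black box, so there is no in-paper argument to compare against. Your proposal essentially reconstructs the proof given in the cited source itself (quadratic-form reduction, a $1/4$-net of $S^{n-1}$ with the bound $\|M\|\le 2\sup_{x\in\mathcal{N}}|x^\top M x|$, Bernstein's inequality for the sub-exponential variables $\langle A_i,x\rangle^2-1$, and a union bound absorbing the $9^n$ net cardinality), and it is correct in substance. Two small points of bookkeeping: (i) you implicitly need $K\gtrsim 1$, which follows from isotropy ($\mathbb{E}\langle A_i,x\rangle^2=1$ forces $\|A_i\|_{\psi_2}$ bounded below by an absolute constant), to merge the $K^2$/$K^4$ factors in the Bernstein exponent and in the final constant; (ii) your shortcut $|s-1|\le|s^2-1|$ only yields $|s_k(A)-\sqrt{m}|\le \sqrt{m}\,K^2\max(\delta,\delta^2)$, which matches the stated $CK^2(\sqrt{n}+t)$ only in the regime $\delta\le 1$; for $\delta>1$ the lower bound is vacuous but the upper bound on $s_1$ needs either a separate line ($s_1^2\le m+K^2C^2(\sqrt{n}+t)^2$, then take square roots) or the sharper elementary fact that $|z^2-1|\le\max(\epsilon,\epsilon^2)$ implies $|z-1|\le\epsilon$ (Vershynin's Lemma 4.1.5) applied with $\epsilon=K^2\delta$. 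With that adjustment the deduction is complete.
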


\begin{lemma}
\label{lem:linearemb}

Define $v = \sum_{i = n+1}^{n + k} \sum_{b = 0}^1 \lambda_{i, b} e_{i,b}$, then with probability $1 - \delta$,
\[
\left| \langle e_{i',b'}, v \rangle - \lambda_{i',b'} \right| \le \sqrt{\frac{2 \log\left( \tfrac{2}{\delta} \right)}{d} \sum_{(i,b) \ne (i',b')} \lambda_{i,b}^2}.
\]
\end{lemma}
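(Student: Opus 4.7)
The plan is to decompose $\langle e_{i',b'}, v\rangle$ into a deterministic diagonal piece plus a mean-zero fluctuation, and bound the fluctuation with a scalar Hoeffding inequality. Concretely, by bilinearity,
\[
\langle e_{i',b'}, v\rangle \;=\; \lambda_{i',b'}\,\|e_{i',b'}\|_2^2 \;+\; \sum_{(i,b)\neq(i',b')} \lambda_{i,b}\,\langle e_{i',b'}, e_{i,b}\rangle.
\]
Since every entry of $e_{i',b'}$ squares to $1/d$, the first term is exactly $\lambda_{i',b'}$, so the lemma reduces to controlling the fluctuation $S := \sum_{(i,b)\neq(i',b')}\lambda_{i,b}\,\langle e_{i',b'},e_{i,b}\rangle$.

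Next I would condition on $e_{i',b'}$ and expand $S$ entrywise as
\[
S \;=\; \sum_{(i,b)\neq(i',b')}\sum_{\ell=1}^{d}\lambda_{i,b}\,e_{i',b'}[\ell]\,e_{i,b}[\ell].
\]
Conditional on $e_{i',b'}$, the variables $\{e_{i,b}[\ell]\}_{(i,b)\neq(i',b'),\,\ell\in[d]}$ are mutually independent and uniform on $\{\pm 1/\sqrt{d}\}$, so each summand $\lambda_{i,b}\,e_{i',b'}[\ell]\,e_{i,b}[\ell]$ is a symmetric random variable supported on $\{\pm |\lambda_{i,b}|/d\}$, and the whole collection is jointly independent with mean zero.

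Then I would invoke Hoeffding's inequality. The total squared range is
\[
\sum_{(i,b)\neq(i',b')} d\cdot\bigl(2|\lambda_{i,b}|/d\bigr)^2 \;=\; \frac{4}{d}\sum_{(i,b)\neq(i',b')}\lambda_{i,b}^2,
\]
which yields
\[
\Pr\!\left(|S|\ge t \,\Big|\, e_{i',b'}\right) \;\le\; 2\exp\!\left(-\frac{t^2 d}{2\sum_{(i,b)\neq(i',b')}\lambda_{i,b}^2}\right).
\]
Choosing $t = \sqrt{(2/d)\log(2/\delta)\sum_{(i,b)\neq(i',b')}\lambda_{i,b}^2}$ makes the right-hand side equal to $\delta$, and since the bound is uniform in the realization of $e_{i',b'}$, the same tail holds unconditionally by the tower property, giving the claim.

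I do not expect any genuinely hard step in this argument; it is essentially a one-line concentration inequality. The only subtlety worth flagging is the need to condition on $e_{i',b'}$ before invoking Hoeffding: the cross inner products $\langle e_{i',b'}, e_{i,b}\rangle$ are \emph{not} independent marginally because they share the factor $e_{i',b'}$, but once that factor is fixed the remaining $(k-1)\cdot 2$ embeddings contribute independent Rademacher entries, which is exactly what delivers the sharp $\sqrt{1/d}$ scaling and avoids any need for matrix concentration machinery.
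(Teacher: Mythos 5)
Your proposal is correct and follows essentially the same decomposition-plus-scalar-Hoeffding argument as the paper, which expands $\langle e_{i',b'}, v\rangle = \lambda_{i',b'} + \sum_{(i,b)\neq(i',b')}\lambda_{i,b}\langle e_{i',b'},e_{i,b}\rangle$ and applies Azuma--Hoeffding to the entrywise sum of bounded increments. Your added remark about first conditioning on $e_{i',b'}$ (then removing the conditioning via the tower property) makes explicit the martingale/independence structure that the paper's Azuma--Hoeffding invocation uses implicitly, but it does not change the substance of the argument.
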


\begin{proof}
We aim to bound \( |\langle e_{i',b'}, v \rangle| \), where
\[
v = \sum_{i = n + 1}^{n + k} \sum_{b = 0}^1 \lambda_{i,b} e_{i,b}.
\]
Note that each \( e_{i,b} \in \mathbb{R}^d \) has entries that are independent random variables from \( \left\{ -\tfrac{1}{\sqrt{d}}, \tfrac{1}{\sqrt{d}} \right\} \).

First, observe that
\[
\langle e_{i',b'}, v \rangle = \sum_{(i,b)} \lambda_{i,b} \langle e_{i',b'}, e_{i,b} \rangle = \lambda_{i',b'} + \sum_{(i,b) \ne (i',b')} \lambda_{i,b} \langle e_{i',b'}, e_{i,b} \rangle.
\]
Since \( \langle e_{i',b'}, e_{i',b'} \rangle = 1 \) and for \( (i,b) \ne (i',b') \), the inner products \( \langle e_{i',b'}, e_{i,b} \rangle \) are sums of independent random variables with mean zero.

Define
\[
X_{i,b} = \lambda_{i,b} \langle e_{i',b'}, e_{i,b} \rangle, \quad \text{for } (i,b) \ne (i',b').
\]
Each \( X_{i,b} \) is a sum of \( d \) independent random variables bounded in \( \left[ -\tfrac{|\lambda_{i,b}|}{d}, \tfrac{|\lambda_{i,b}|}{d} \right] \), because each component \( e_{i',b',j} e_{i,b,j} \) is \( \pm \tfrac{1}{d} \).

By the Azuma-Hoeffding inequality, for any \( t > 0 \),
\[
\Pr\left( \left| \sum_{(i,b) \ne (i',b')} X_{i,b} \right| \ge t \right) \le 2 \exp\left( -\frac{2 dt^2}{\sum_{(i,b) \ne (i',b')} 4 \lambda_{i,b}^2} \right) = 2 \exp\left( -\frac{dt^2}{2 \sum_{(i,b) \ne (i',b')} \lambda_{i,b}^2} \right).
\]
Setting the right-hand side equal to \( \delta \) and solving for \( t \), we get
\[
t = \sqrt{\frac{2 \log\left( \tfrac{2}{\delta} \right)}{d} \sum_{(i,b) \ne (i',b')} \lambda_{i,b}^2}.
\]
Therefore, with probability at least \( 1 - \delta \),
\[
\left| \langle e_{i',b'}, v \rangle - \lambda_{i',b'} \right| \le \sqrt{\frac{2 \log\left( \tfrac{2}{\delta} \right)}{d} \sum_{(i,b) \ne (i',b')} \lambda_{i,b}^2}.
\]
The proof is then complete.
\end{proof}

\begin{lemma}
\label{lem:arrow}
With probability $1 - \delta$, it holds that
\begin{align*}
    \forall i, i' \in [n + k], b, b' \in \{0, 1\}, \| e_{i,b}^T e_{i',b'} - \1(i = i' \& b = b') \|_2 \le \sqrt{2\frac{\log(8nd/\delta)}{d}}.
\end{align*}
\end{lemma}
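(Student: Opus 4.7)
\textbf{Proof Plan for Lemma \ref{lem:arrow}.}

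The plan is to reduce this to a straightforward Hoeffding bound plus a union bound over the $(2(n+k))^2$ pairs $((i,b),(i',b'))$. First I would split into two cases. When $(i,b)=(i',b')$, the inner product $e_{i,b}^T e_{i,b} = \sum_{j=1}^d (\pm 1/\sqrt d)^2 = 1$ deterministically, so the quantity inside the norm is exactly $0$ and nothing needs to be proved. When $(i,b)\ne(i',b')$, the two embedding vectors $e_{i,b}$ and $e_{i',b'}$ are independent (since the embeddings at different $(i,b)$ are drawn independently per Definition~\ref{def:embed}), so the coordinatewise products $X_j := e_{i,b}[j]\,e_{i',b'}[j]$ are i.i.d.\ Rademacher random variables scaled by $1/d$, i.e.\ each $X_j \in \{-1/d, +1/d\}$ with equal probability and mean zero.

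Next I would apply Hoeffding's inequality to $e_{i,b}^T e_{i',b'} = \sum_{j=1}^d X_j$. Since $|X_j|\le 1/d$ and the $X_j$ are independent with mean zero, we get
\begin{align*}
\Pr\!\left(\left|e_{i,b}^T e_{i',b'}\right| > t\right) \;\le\; 2\exp\!\left(-\frac{2 t^2}{\sum_{j=1}^d (2/d)^2}\right) \;=\; 2\exp\!\left(-\frac{d t^2}{2}\right).
\end{align*}
Setting $t = \sqrt{2\log(8nd/\delta)/d}$ makes each per-pair failure probability at most $2\exp(-\log(8nd/\delta)) = \delta/(4nd)$. I would then take a union bound over all pairs $((i,b),(i',b'))$ with $i,i' \in [n+k]$ and $b,b'\in\{0,1\}$. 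The number of such pairs is at most $(2(n+k))^2 \le 16 n^2$ (using $k\le n$ in the regime of interest from~\Cref{assum:order}), and this is bounded by $4nd$ for $d$ at least on the order of $n$; more generally the union bound gives total failure probability at most $(2(n+k))^2 \cdot \delta/(4nd) \le \delta$ once one absorbs the polynomial factors into the logarithm, which is already the standard slack built into the $\log(8nd/\delta)$ threshold.

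The only mild subtlety is keeping track of the constants in the union bound so that the final threshold matches $\sqrt{2\log(8nd/\delta)/d}$ exactly as stated, but this is just bookkeeping and contains no real obstacle. Since the embedding vectors are independent across distinct $(i,b)$, there is no dependence structure to worry about, and the $\|\cdot\|_2$ in the statement is just absolute value of a scalar. Thus the lemma follows directly from Hoeffding plus a union bound, with no deeper machinery required.
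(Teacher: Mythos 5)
Your proof is correct and follows essentially the same route as the paper, which simply invokes Lemma~\ref{lem:linearemb} (itself an Azuma--Hoeffding bound on the $\pm 1/\sqrt d$ coordinates) with a single nonzero coefficient $\lambda_{i,b}=1$ followed by a union bound over pairs — exactly the direct Hoeffding-plus-union-bound argument you give, and indeed the $\|\cdot\|_2$ is just the absolute value of a scalar. Your flag on the constant is also apt: with $(2(n+k))^2 \approx 16n^2$ pairs each failing with probability $\delta/(4nd)$ and $d$ actually somewhat smaller than $n$ under Assumption~\ref{assum:order}, the union bound does not literally close at the stated threshold $\sqrt{2\log(8nd/\delta)/d}$ and the argument inside the logarithm should be on the order of $n^2/\delta$, but this is a polylogarithmic slack that has no effect on any downstream use of the lemma.
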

\begin{proof}
    This is a direct consequence combining~\Cref{lem:linearemb} and union bound.
\end{proof}

\begin{lemma}
\label{lem:onehotoutputattntech}
If for constant $C$, the attention score before softmax has the following property:
\begin{itemize}
    \item for each position $i$, the target position $j = S[i]$ satisfies:
\[
\left\langle e_{j, b_1'}, A^{(2)} e_{i, b_2'} \right\rangle \geq 2C \log n .
\]
\item For all other positions $j' \neq j$, the attention weights satisfy:
\[
\left| \left\langle e_{j', b_1'}, A^{(2)} e_{i, b_2'} \right\rangle \right| \leq C \log n.
\]
\end{itemize}
The attention output is approximately one-hot, with
\begin{align*}
    \left | \attention[\qk^2](\embed(\bsequence))[i] - e_{S[i], \bsequence[S[i]]} \right| < 4 / n^{C - 1}.
\end{align*}
\end{lemma}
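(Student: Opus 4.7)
The plan is a direct computation: Lemma~\ref{lem:onehotoutputattntech} is essentially a quantitative statement that a softmax becomes nearly one-hot whenever its largest logit exceeds the others by a logarithmic margin, and then the attention output inherits this one-hot behavior up to a small error from the tail mass. I would not need any concentration inequality or probabilistic argument here; everything is deterministic given the hypotheses on $A^{(2)}$.

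First, for a fixed $\bsequence$ and fixed position $i \in \{n+1,\ldots,n+k\}$, denote the pre-softmax logits by $z_j := \langle e_{j,\bsequence[j]}, A^{(2)} e_{i,\bsequence[i]} \rangle$ for each $j \leq i$. Write $j^\star := S[i]$ for the target index. The hypothesis gives $z_{j^\star} \geq 2C\log n$ and $|z_j| \leq C\log n$ for every other $j \leq i$, uniformly over the choice of the boolean values $\bsequence[j]$, $\bsequence[i]$. Denoting $p_j := \softmax(C + \embed(\bsequence)^T A^{(2)} \embed(\bsequence))[j,i]$, I would separate the target contribution in the denominator from the rest to get
\begin{align*}
p_{j^\star} \;\geq\; \frac{e^{z_{j^\star}}}{e^{z_{j^\star}} + (i-1)\, e^{C\log n}} \;\geq\; \frac{n^{2C}}{n^{2C} + 2n \cdot n^{C}} \;\geq\; 1 - 2n^{1-C},
\end{align*}
using $i \leq n + k + 1 \leq 2n$ and $1/(1+x) \geq 1 - x$ for $x \geq 0$. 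Consequently $\sum_{j \neq j^\star} p_j = 1 - p_{j^\star} \leq 2 n^{1-C}$.

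Second, I would expand the attention output and extract the target term:
\begin{align*}
\attention[\qk^{(2)}](\embed(\bsequence))[i] - e_{j^\star, \bsequence[j^\star]}
\;=\; (p_{j^\star} - 1)\, e_{j^\star, \bsequence[j^\star]} \;+\; \sum_{j \neq j^\star, \, j \leq i} p_j\, e_{j, \bsequence[j]}.
\end{align*}
Since every embedding $e_{j,b} \in \{\pm 1/\sqrt d\}^d$ has unit Euclidean norm, the triangle inequality gives a bound of $|1 - p_{j^\star}| + \sum_{j \neq j^\star} p_j \leq 4 n^{1-C} = 4/n^{C-1}$, which is exactly the claimed bound. (Any reasonable norm, e.g.\ $\ell_\infty$ per coordinate, works the same way since each coordinate of $e_{j,b}$ is bounded by $1/\sqrt d \leq 1$.)

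There is no real obstacle here. The only minor bookkeeping point is accounting correctly for the context length: the causal mask restricts the sum to $j \leq i$, and $i$ can be as large as $n+k+1 = O(n)$, so the tail of the softmax has at most $2n$ competing tokens. This is exactly the factor that produces the constant $4$ rather than $2$ in the final bound, and it explains why the statement is given in the form $4/n^{C-1}$ with a spare constant. The hypothesis that $C$ is a (sufficiently large) constant is what makes $n^{1-C}$ negligible, matching the downstream use in Lemma~\ref{lem:onehotoutputattn} with $C = 40$ via Lemma~\ref{lem:order}.6.
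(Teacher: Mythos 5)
Your proof is correct and follows essentially the same route as the paper's: replace the target logit by its lower bound $2C\log n$, upper-bound every competing logit by $C\log n$, use monotonicity of $x \mapsto e^x/(e^x + S)$ to get $p_{j^\star} \geq 1/(1 + 2n^{1-C}) \geq 1 - 2n^{1-C}$ with at most $2n$ competing positions under the causal mask, and then split the output into the $(p_{j^\star}-1)e_{j^\star,\cdot}$ piece and the tail $\sum_{j\neq j^\star} p_j e_{j,\cdot}$, each contributing $2n^{1-C}$. The paper writes the final split with an equality sign where it means an inequality (a typo); your triangle-inequality version is the intended reading.
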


\begin{proof}[Proof of Lemma \ref{lem:onehotoutputattn}]

The attention output for position $i$ is given by:
\[
\attention[\qk^2](\embed(\bsequence))[i] = \hiddenstate \cdot \softmax\left(\hiddenstate^\top \qk \hiddenstate\right)[i],
\]
where the softmax is applied column-wise.

Given the condition from Lemma \ref{lem:order}, we set 
\[
\Delta = 2C \log n
\]

This implies that:
\[
e^{-\Delta / 2} < e^{-C \log n} = \frac{1}{n^{C}}.
\]

Define $z_{j'} = \left\langle e_{j', b_1'}, A^{(2)} e_{i, b_2'} \right\rangle$.

The softmax for the target position $j = S[i]$ is:
\[
\softmax(z)_j = \frac{e^{z_j}}{e^{z_j} + \sum_{j' \neq j} e^{z_{j'}}}.
\]

Given that $z_j \geq \Delta$ and $|z_{j'}| \leq \frac{\Delta}{2}$ (since $\frac{\eta_0 \eta_1}{512 m n^2} = \frac{\Delta}{2}$), we have:
\[
z_j - z_{j'} \geq \Delta - \frac{\Delta}{2} = \frac{\Delta}{2}.
\]

Thus:
\[
\softmax(z)_j \geq \frac{e^{\Delta}}{e^{\Delta} + (T-1) e^{\Delta/2}} = \frac{1}{1 + (T-1) e^{-\Delta/2}}.
\]

Therefore with $T \le 2n$:
\[
\softmax(z)_j \geq \frac{1}{1 + (T-1) \cdot \frac{1}{n^{C}}} \geq 1 - \frac{2}{n^{C - 1}}.
\]

Therefore
\[
\sum_{j' \neq j} \softmax(z)_{j'} \leq  \frac{2}{n^{C - 1}}.
\]

The attention output for position $i$ is:
\[
\attention[\qk^2](\embed(\bsequence))[i] = \sum_{j} e_{j,\bsequence[j]} \softmax(z)_j.
\]

Substituting the bounds:
\begin{align*}
\left| \attention[\qk^2](\embed(\bsequence))[i] - e_{S[i],\bsequence[S[i]]} \right| &= \left| \sum_{j' \neq S[i]} e_{j',\bsequence[j']} \softmax(z)_{j'} \right| + \left|  e_{S[i],\bsequence[S[i]]} (\softmax(z)_{S[i]} - 1) \right|\\
&\leq \sum_{j' \neq S[i]}  \softmax(z)_{j'} + \left|  (\softmax(z)_{S[i]} - 1) \right| \le \frac{4}{n^{C - 1}}    
\end{align*}
We conclude our proof.
\end{proof}

\newpage

\section{Additional Experiment Results}

In this section, we provide details of the experiment setup and present additional results. All training was conducted using PyTorch \cite{paszke2019pytorch} on NVIDIA RTX A10 GPUs.

\subsection{Experiment Details}\label{sec: experiment detail}

The transformer architecture adopted in the experiment section is based on the GPT-2 model \citep{radford2019language} with a hidden size of $720$, an intermediate size of $3072$, and trainable position embeddings. For all experiments, we use \texttt{Adam} \citep{kingma2014adam} optimizer with random initialization, using hyperparameters $\beta_1 = 0.9, \beta_2= 0.999$, a weight decay of $0$ and a linear decay learning rate schedule. The batch size is set to $512$ throughout. The validation set contains $2048$ samples which are nonintersecting with the training data. In all experiments regarding sample complexity, a test set of size $2048$ which is non-intersected with the training data is used. 



\paragraph{Details of the experiments in \Cref{sec: GSM8K}}
This section examines the normalized attention entropy of Qwen2-7B \citep{yang2024qwen2} and Qwen2-Math-7B models \citep{qwen2024} on the GSM8K  dataset \citep{cobbe2021training} with and without CoT prompting respectively. Both models consist of $28$ layers, each with $28$ attention heads. The normalized attention entropy is computed as the average across the GSM8K test set. In \Cref{fig: GSM8K data}, the entropy values of each attention head in a layer are sorted separately under the three setups. While \Cref{fig: GSM8K data} presents the normalized attention entropy for specific attention heads in the first, last, and two intermediate layers, \Cref{fig: GSM8K all layer} shows the entropy across all layers.

For \texttt{With CoT} data, the input is concatenated with the ground truth answer from the GSM8K dataset. For  \texttt{With CoT} data, we extract the final answer from the ground truth, and concatenate the input with the string ``The answer is [Final Answer].''. 

\begin{figure}
    \centering
    \includegraphics[width=\linewidth]{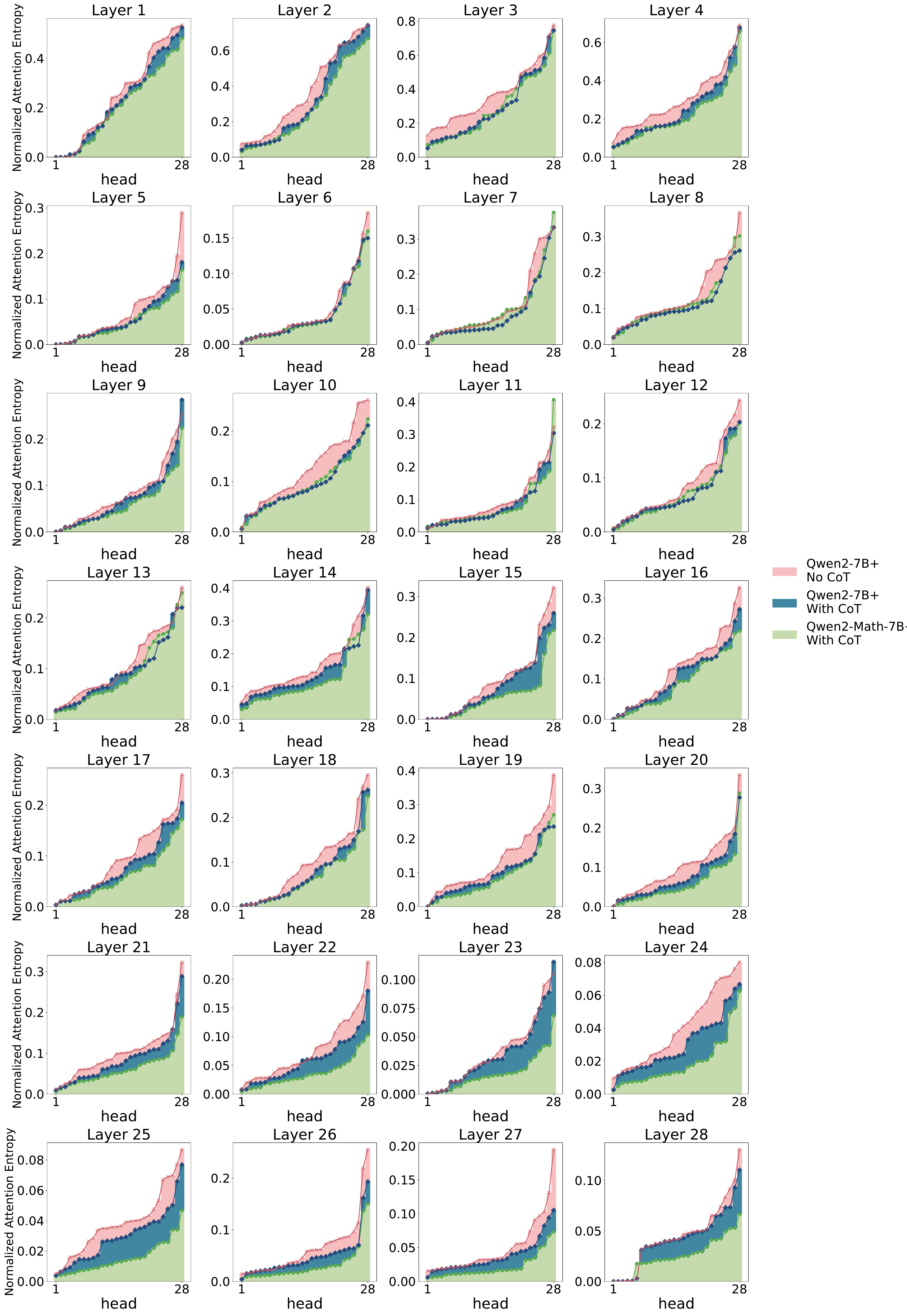}
    \caption{The normalized attention entropy of the pre-trained Qwen2-7B and math-specialized Qwen2-Math-7B models on the GSM8K dataset with and without CoT prompting (Section \ref{sec: GSM8K}). Each bar represents the entropy of an attention head.}
    \label{fig: GSM8K all layer}
\end{figure}

\subsection{Additional Results}

In \Cref{fig: sample complexity with CoT and attention} (Right), we present the attention pattern of a single-layer, single-head transformer trained on the $(n=20, k=40)$ parity problem with CoT data. In \Cref{fig: attention 2layers_2heads}, we show the attention patterns of a multi-layer, multi-head transformer trained on the same problem. We observe that in the first layer, the attention pattern is sparse and interpretable, with each secret variable attended to by at least one attention head. In contrast, the second layer exhibits an almost uniform attention pattern. A possible explanation is that the first layer captures sufficient information, which is then transferred to subsequent layers via the residual connections.

\begin{figure}
    \centering
    \includegraphics[width=\linewidth]{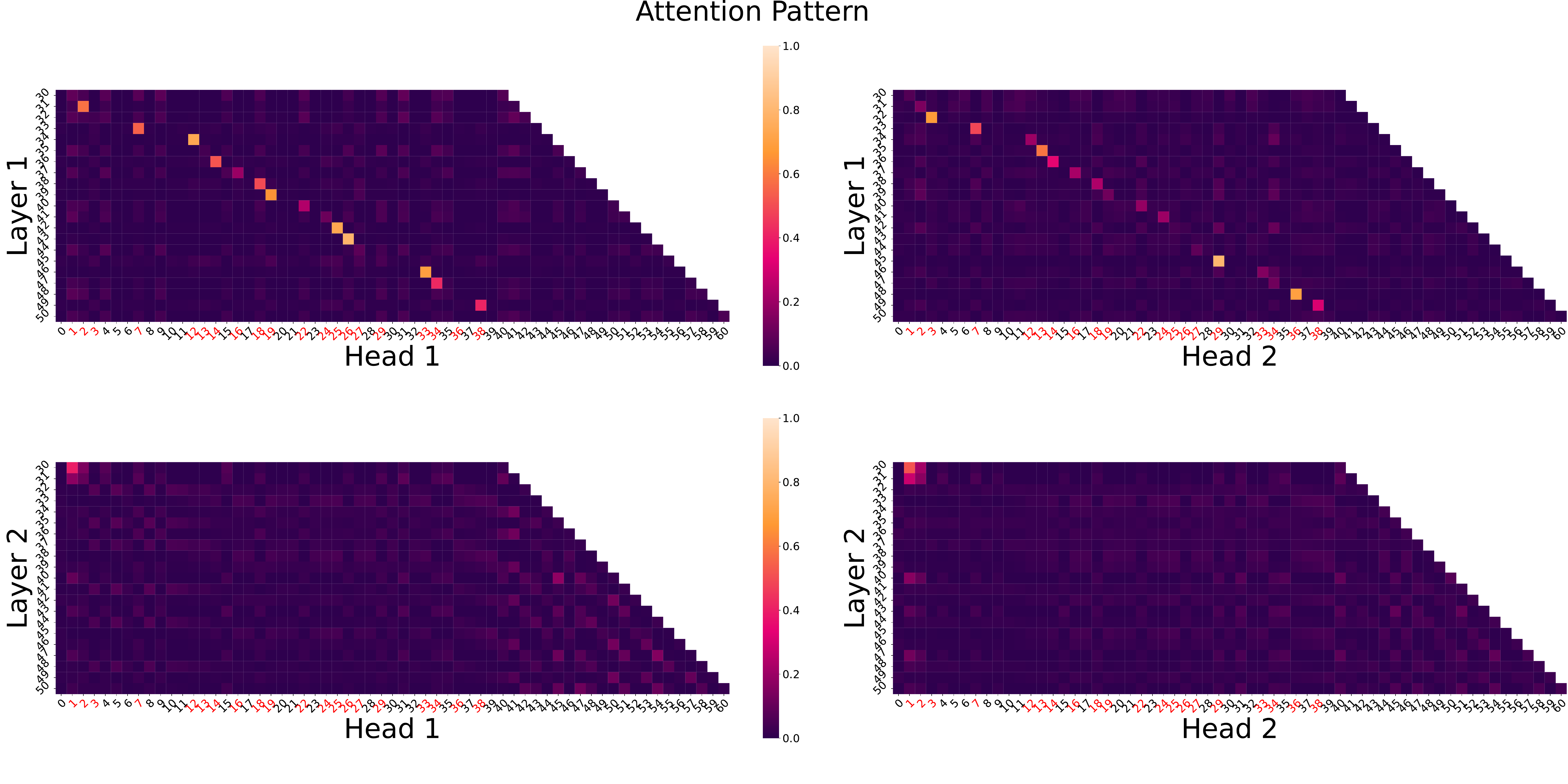}
    \caption{The attention pattern learned by a $2$-layer $2$-head transformer on $(n=20,k=6)$ parity problem with CoT.}
    \label{fig: attention 2layers_2heads}
\end{figure}

In \Cref{sec:multipass}, we observe that training with repeated data can help Transformers learn the parity function, but it still requires significantly more computation compared to trained on CoT data (\Cref{fig: multi-pass helps}). To further substantiate this observation, we explore the training of models with and without CoT on the $(n=20, k=12)$ parity problem. Compared with the $(n=20,k=6)$ parity problem considered in \Cref{fig: multi-pass helps}, this problem is harder for the models to learn as the number of secret variables $k$ is larger. We examine various configurations: the number of layers and heads ranges from $1, 2, 3, 4, 6$, and $8$; the learning rate varies from $6 \times 10^{-6}$, $8 \times 10^{-8}$, to $1 \times 10^{-4}$; and the training dataset size varies from $10^4$, $10^5$ to $10^6$, with corresponding epochs ranging from $1000$, $100$ to $10$.  Across all configurations we examined, training without CoT fails to achieve non-trivial accuracy (\Cref{fig: mutli-pass fail}). In contrast, models trained with CoT achieve perfect evaluation accuracy when trained on a dataset with $10,000$ samples for only $6$ epochs, or with approximately $60,000$ fresh samples in one-pass training setting.

\begin{figure}
    \centering
    \includegraphics[width=\linewidth]{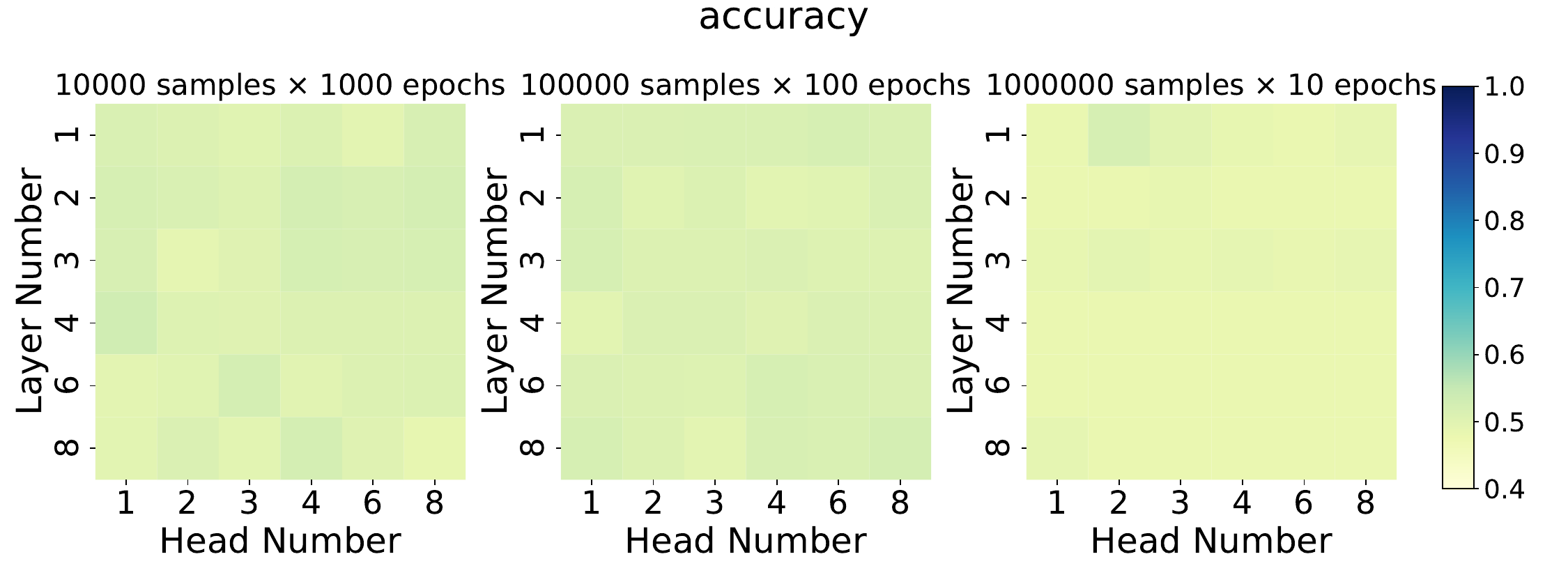}
    \caption{Training with CoT on $(n=20,k=12)$ parity problem fails to achieve non-trivial accuracy under different configurations.}
    \label{fig: mutli-pass fail}
\end{figure}

In \Cref{fig: repeated-data better}, a $4$-layer $4$-head transformer achieves perfect evaluation accuracy on $(n=20,k=6)$ problem when trained on a small dataset of $50000$ samples, but fails to achieve non-trivial accuracy when trained on a larger dataset. Furthermore, successful learning coincides with a significant decrease in attention entropy, indicating the development of sparse attention, while entropy remains high when trained on a larger dataset. In \Cref{fig: mutli-pass helps more arch}, we present more results across different architectures ($1$-layer $1$-head, $2$-layer $3$-head and $4$-layer $4$-head transformers) with dataset size of $5000,10000,50000,100000,1000000$, and observe the same pattern. 

\begin{figure}
    \centering
    \includegraphics[width=\linewidth]{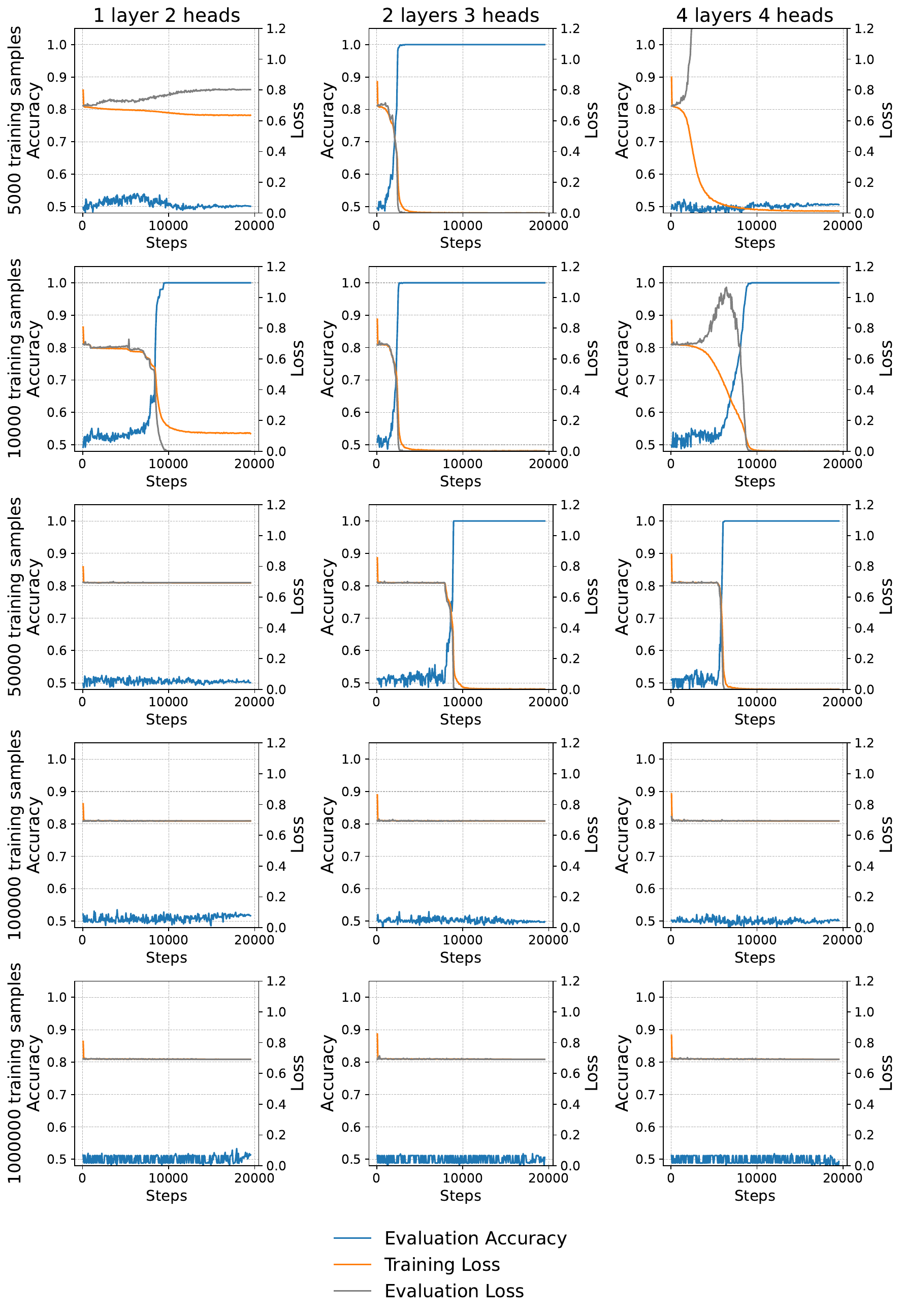}
    \caption{Transformer trained on the $(n=20, k=6)$ parity problem without CoT with different sizes of training dataset and a fixed number of iterations.}
    \label{fig: mutli-pass helps more arch}
\end{figure}

\begin{figure}[t]
    \begin{minipage}[t]{0.67\linewidth}  
        \centering
        \begin{subfigure}[t]{\linewidth}
            \centering
            \includegraphics[height=8.3cm, keepaspectratio]{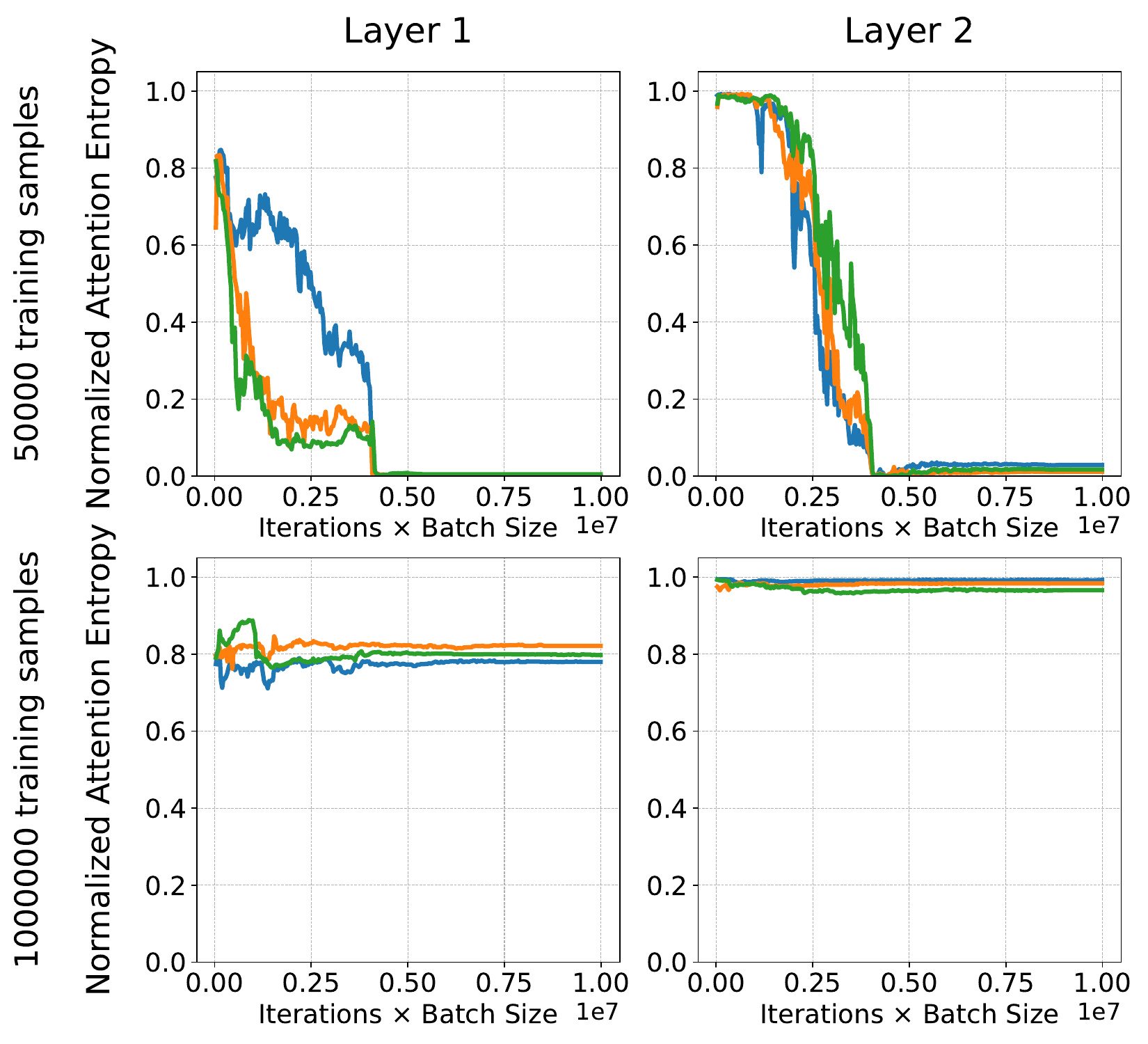}
            \caption{$2$-layer $3$-head transformer}
        \end{subfigure}
    \end{minipage}
    \begin{minipage}[t]{0.33\linewidth}  
        \centering
        \begin{subfigure}[t]{\linewidth}
            \centering
            \includegraphics[height=8.3cm, keepaspectratio]{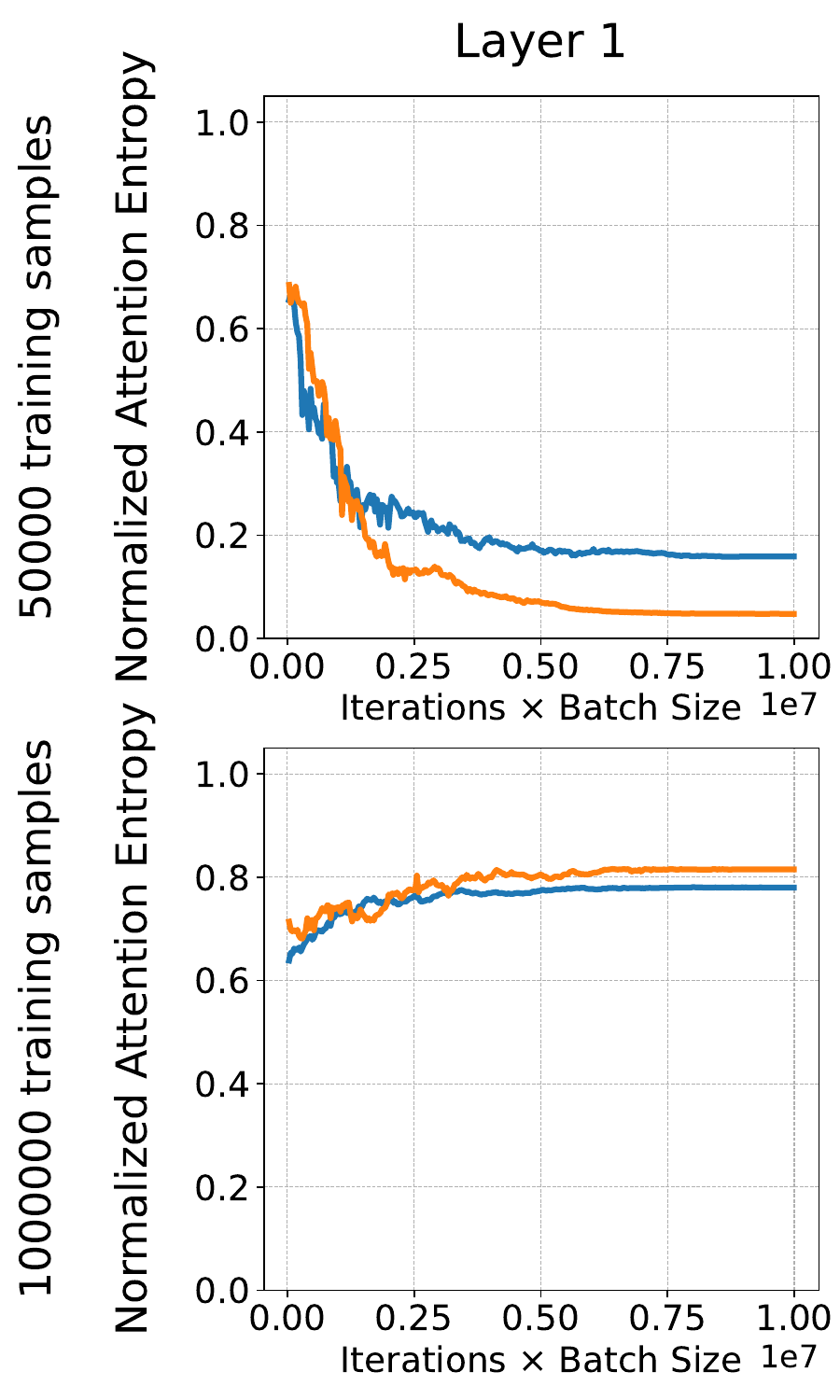}
            \caption{$1$-layer $2$-head transformer}
        \end{subfigure}
    \end{minipage}

    \caption{Normalized attention entropy curve of transformers training on the $(n=20, k=6)$ parity problem without CoT. Each line in the graph represents the attention entropy for a head of a certain layer.}
    \label{fig: mutli-pass helps more arch entropy}
\end{figure}

\end{document}